

\documentclass{article}
\usepackage[pass]{geometry}

\usepackage{microtype}

\usepackage{times}

\usepackage{graphicx}
\usepackage{subfigure}
\usepackage{booktabs} 
\usepackage{thmtools}
\usepackage{thm-restate}


\usepackage{amsmath}
\usepackage{amsfonts, dsfont} 
\usepackage{hyperref}       
\usepackage{url}            
\usepackage{amsthm}
\usepackage{listings}
\usepackage{float}

\DeclareMathOperator*{\argmax}{arg\,max}
\DeclareMathOperator*{\argmin}{arg\,min}
\DeclareMathOperator{\median}{median}
\DeclareMathOperator{\len}{len}
\DeclareMathOperator{\MAD}{MAD}

\DeclareMathOperator{\AD}{AD}
\DeclareMathOperator{\KL}{KL}
\DeclareMathOperator{\NEEDY}{NEEDY}
\DeclareMathOperator{\findad}{FindAD}

\usepackage[usenames,dvipsnames]{xcolor}

\graphicspath{{figure/}}
\newtheorem{theorem}{Theorem}
\newtheorem{lemma}{Lemma}

\newtheorem{definition}{Definition}

\newtheorem{remark}{Remark}


\usepackage{hyperref}


\usepackage[accepted]{icml2020}
\newcommand{\roaielim}{{\tt ROAIElim }}
\newcommand{\roailucb}{{\tt ROAILUCB }}
\newcommand{\roai}{{\tt ROAI}}
\usepackage[capitalize]{cleveref}


\icmltitlerunning{Robust Outlier Arm Identification}

\begin{document}
	
	\twocolumn[
	\icmltitle{Robust Outlier Arm Identification}
	
	
	
	
	\begin{icmlauthorlist}
		\icmlauthor{Yinglun Zhu}{uw}
		\icmlauthor{Sumeet Katariya}{am}
		\icmlauthor{Robert Nowak}{uw}
	\end{icmlauthorlist}
	
	\icmlaffiliation{uw}{University of Wisconsin-Madison}
	\icmlaffiliation{am}{Amazon}
	\icmlcorrespondingauthor{Yinglun Zhu}{yinglun@cs.wisc.edu}
	
	\icmlkeywords{Machine Learning, ICML}
	
	\vskip 0.3in
	]
	
	
	
	\printAffiliationsAndNotice{}  
	
	\begin{abstract}
        
        We study the problem of Robust Outlier Arm Identification (ROAI), where the goal is to identify arms whose expected rewards deviate substantially from the majority, by adaptively sampling from their reward distributions. We compute the outlier threshold using the median and median absolute deviation of the expected rewards. This is a robust choice for the threshold compared to using the mean and standard deviation, since it can identify outlier arms even in the presence of extreme outlier values. Our setting is different from existing pure exploration problems where the threshold is pre-specified as a given value or rank. This is useful in applications where the goal is to identify the set of promising items but the cardinality of this set is unknown, such as finding promising drugs for a new disease or identifying items favored by a population. We propose two $\delta$-PAC algorithms for ROAI, which includes the first UCB-style algorithm for outlier detection, and derive upper bounds on their sample complexity. We also prove a matching, up to logarithmic factors, worst case lower bound for the problem, indicating that our upper bounds are generally unimprovable. Experimental results show that our algorithms are both robust and about $5$x sample efficient compared to state-of-the-art.
        

	\end{abstract}

	\section{Introduction}

	Multi-armed bandits are commonly used to identify the optimal items (arms) among multiple candidates through adaptive queries (pure exploration setting \citep{jamieson2014best}). Every item is associated with an unknown probability distribution, and when a bandit algorithm selects (pulls) an item, it observes a value (reward) sampled from this distribution. Depending on its objective and the history of observed values, the bandit algorithm has to decide which item to sample at every time $t$, so as to identify the optimal items using as few samples as possible. Pure exploration bandit algorithms have been proposed for various objectives, such as identifying arms with the largest rewards \citep{jamieson2014lil, jamieson2014best, chen2016towards}, identifying arms above a given threshold \citep{locatelli2016optimal, mukherjee2017thresholding, xu2019thresholding} or clustering arms \citep{katariya2018adaptive, katariya2019maxgap}. 
	
	In this paper, we study bandit algorithms for identifying outlier arms. Outlier arms are defined as those with expected rewards that are outliers relative to the overall set of expected rewards (e.g., arms with expected rewards that are several deviations above the mean/median of the overall set of expected rewards). The outlier detection problem has wide applications in scientific discovery \citep{grun2015single, chaudhary2015folding}, fraud detection \citep{porwal2018credit}, medicine \citep{schiff2017screening}, and public health \citep{hauskrecht2013outlier}. In contrast to passive outlier detection algorithms which identify outlier items using a pre-sampled dataset, bandit algorithms actively query items with the goal of identifying outliers using as few samples as possible. This is important because it can lead to early detection of fraud for example. Outlier arms subsume \emph{good} arms with expected rewards substantially above the average, and most applications mentioned in good arm identification \cite{kano2019good} apply to our setting.
	
	As observed in \citet{zhuang2017identifying}, bandit outlier detection cannot be reduced to best arm(s) identification in bandits because of the inherent double exploration dilemma - the threshold is unknown and any algorithm must balance exploring individual arms and exploring the outlier threshold. \citet{zhuang2017identifying} define the outlier threshold $\bar{\theta}$ using the k-sigma rule applied to the mean $\bar{\mu}$ and standard deviation $\bar{\sigma}$ of the expected rewards i.e., $\bar{\theta} = \bar{\mu} + k \cdot \bar{\sigma}$. However this threshold can fail to identify the correct outlier arms because the mean and standard deviation are themselves sensitive to outlier values (non-robust estimators). It can also miss outliers when the number of arms is small. In this paper, we define the threshold using the $k$-sigma rule applied to the \emph{median} and the \emph{median absolute deviation}, which are robust estimators with the highest possible breakdown point $0.5$. This is the recommended practice in literature \citep{hampel1974influence, huber2004robust, swallow1996using}, and emphasized by \citet{leys2013detecting} in their aptly titled paper: ``Detecting outliers: Do not use standard deviation around the mean, use absolute deviation around the median''. Similarly, \citet{chung2008median} conduct extensive experiments to compare the two methods and show that the median-based threshold identifies outliers that were missed by the mean-based threshold. We show through our theoretical and empirical results that this robust threshold not only identifies outliers more accurately, but it also requires fewer samples to do so than the mean-based threshold.

	\subsection{Contributions and Paper Organization}
	We make the following contributions. In Section \ref{sec_setting} we formally define the Robust Outlier Arm Identification (ROAI) problem with justifications from Huber's $\epsilon$-contamination model. In Section \ref{sec_algorithm}, we propose two algorithms for the ROAI problem, which includes the first UCB-style algorithm for outlier detection. We theoretically prove the correctness our algorithms and derive their sample complexity upper bounds in Section \ref{sec_analysis}. A matching, up to logarithmic factors, worst case lower bound is provided in Section \ref{sec_lower_bound}, indicating our upper bounds are generally tight. We further generalize our algorithms to settings with known contamination upper bound in Section \ref{sec_generalization}. Experiments conducted in Section \ref{sec_experiment} show that our algorithms are \emph{both} more robust and more sample efficient than previous state-of-the-art. We conclude our paper in Section \ref{sec_conclusion} with open problems. All proofs are deferred to the Appendix due to lack of space.
	

	\subsection{Related Work}
	\label{sec:related work}
	The pure exploration problem in the multi-armed bandit setting has a long history, starting from the work of \cite{bechhofer1958sequential, paulson1964sequential}.  The aim of pure exploration is to identify an arm or arms with certain properties. For example, the best-arm identification problem involves correctly deciding which arm has the largest expected reward. The instance-dependent sample complexity bound on the best arm identification problem was analyzed/improved by \cite{even2002pac, even2006action, gabillon2012best, karnin2013almost, jamieson2014lil, jamieson2014best, chen2016towards}. The problem was also generalized to the setting of identifying the top-$m$ arm \cite{kalyanakrishnan2012pac, chen2017adaptive, chen2017nearly}; the thresholding bandit \cite{locatelli2016optimal, mukherjee2017thresholding, xu2019thresholding} which identifies all arms with expected reward above a given threshold $\theta$; and the good arm identification problem \cite{kano2019good, katz2019true}, where for a given $\epsilon$ ``good arms'' have expected reward within $\epsilon$ of the largest. Lower bounds developed in the pure exploration setting \cite{mannor2004sample, chen2015optimal, kaufmann2016complexity, garivier2016optimal, simchowitz2017simulator} shed light on the optimality of existing algorithms.
	
	In all of the above settings, the subset of arms of interest is determined by a user-defined parameter, e.g., $m$, $\theta$,  and $\epsilon$. Outlier arm identification cannot be cast in these settings, since the cut-off cannot be a prespecified threshold or rank.  The cut-off depends on the overall distribution of expected rewards, which is unknown in advance. In other words, outlier arm identification has an \emph{instance-dependent identification target}. 
	Bandit problems with instance-dependent identification targets have attracted some attention recently. One of the work \cite{katariya2019maxgap} studies the problem of identifying the largest gap in the ordering of the expected rewards, which provides a natural separation of the arms into two groups or clusters. Another line of work \cite{zhuang2017identifying} focuses on identifying outlier arms with an outlier threshold adaptive to the bandit instance. Specifically, they use the threshold $\bar{\theta} = \bar{\mu} + k \cdot \bar{\sigma}$, with $\bar{\mu} $ and $\bar{\sigma}$ being the mean and standard deviation of distribution of expected rewards, respectively. The parameter $k$ is usually chosen as $2$ or $3$ according to the famous three-sigma rule.
	
	Our work focuses on robust and sample-efficient approaches to the outlier arm identification problem. We model our setting through Huber's $\epsilon$-contamination model \cite{huber1964robust} and apply robust estimators with the highest possible breakdown point \cite{donoho1983notion, rousseeuw2011robust}, i.e., median and median absolute deviation (MAD), in building the outlier threshold. Robust statistics were previously incorporated in the bandit setting \cite{altschuler2019best}, but they mainly deal with traditional settings, i.e., best arm identification, with each reward distribution being contaminated rather than identifying instance-adaptive subsets. Although our work could also be generalized to the setting with contaminated reward distribution by incorporating their techniques, we do not pursue this direction here.

	\section{Problem Setting and Notations}
	\label{sec_setting}
    We consider the standard multi-armed bandit setting where there are $n$ arms and the reward of each arm follows a 1-subgaussian distribution with mean $y_i$. The goal of the agent is to identify outlier arms whose expected rewards \emph{substantially} deviate from the \emph{majority}, in the fixed confidence and pure exploration setting. Without loss of generality, we assume $y_i \geq y_{i+1}$ and $n = 2m-1$, so that the median arm is unambiguous.\footnote{If $n=2m$, we choose the median as $m$ without loss of generality.} We also only consider identifying outliers with high rewards; identifying outliers with low rewards is analogous. 
    Let $y_{(m)} = {\median}\{y_i\}$ denote the expected reward of the \emph{median} arm, and let $\AD_i = |y_i - y_{(m)}|$ represent the absolute deviation of arm $i$ from the median. Let ${\AD_{(m)}} = {{\median}}\{|y_i - y_{(m)}|\}$ denote the Median Absolute Deviation (MAD) of expected reward. Note that $y_{(m)}$ and $\AD_{(m)}$ serve as the first two robust moments of the means of the underlying bandit instance $\{y_i\}_{i=1}^n$. We define outlier arms to be arms whose mean is greater than the threshold $\theta$ given by 
\begin{equation}
\label{eq_robust_threshold}
\theta = y_{(m)} + k \cdot {\AD_{(m)}},
\end{equation}
where $k \geq 1$ is a user-specified parameter.
The goal of the agent is to \emph{identify outlier arms using as few samples as possible}. Specifically, we are interested in designing adaptive algorithms that return the subset of outlier arms $S_o = \{i \in [n] : y_i > \theta \}$ (we assume $y_i \neq \theta, \,\forall\,i\in [n]$). We call this setting \emph{Robust Outlier Arm Identification }(ROAI). For a given error probability $\delta \in (0, 1)$, we say an algorithm is $\delta$-PAC if it correctly identifies $S_o$ with probability at least $1-\delta$ using a finite number of samples.
	
Our choice of the threshold is justified under Huber's $\epsilon$-contamination model, where with probability $1-\epsilon$ the mean $y_i$ is drawn from an unknown \emph{meta} distribution $P$ with mean $\mu$ and standard deviation $\sigma$, and with probability $\epsilon$ the mean $y_i$ is drawn from a contamination distribution. Note that sample median and MAD enjoy the highest possible breakdown point $0.5$ \citep{donoho1983notion, rousseeuw2011robust}. Hence, our threshold in \cref{eq_robust_threshold} (up to scaling of $\AD_{(m)}$) is a more robust estimator of the true threshold as compared to existing thresholds constructed using the sample mean and sample standard deviation (which have a breakdown point of $0$) \citep{zhuang2017identifying}. Furthermore, for many common meta distributions including the normal and uniform distribution, \citet{altschuler2019best} prove tight non-asymptotic concentration results for the median and MAD constructed from contaminated samples. 


    Given our assumption of $y_i \ge y_{i+1}$, let the outlier set be $S_o = \{1, \dots, n_1\}$ where $n_1$ is \emph{unknown}. 
    For a given set $\{z_i\}_{i=1}^n$, we use $z_{(k)}$ to denote the $k$-th largest value in $\{z_i\}$; particularly, we use $z_{(m)} := {\median}\{z_i\}$.

	\section{Algorithms}
	\label{sec_algorithm}
	We formally introduce our algorithms in the section. We first provide a subroutine for constructing confidence intervals (CIs) of various quantities including the outlier threshold in Section \ref{sec_confidence_interval}; and then introduce our elimination- and LUCB-style algorithms in Section \ref{sec_two_algorithms}. 

    For any arm $i \in [n]$ and time $t$, we use $S_{i, t}$ and $N_{i, t}$ to denote the sum of rewards and number of pulls; and use $\hat{y}_{i, t} = S_{i, t} / N_{i, t}$ to denote the empirical mean reward. For any quantity $q \in \{y_i, y_{(m)}, \AD_i, \AD_{(m)}, \theta\}$, we use $L_{q,t}, U_{q,t}, \mathcal{I}_{q,t}$ to denote the lower bound, upper bound, and the CI respectively of $q$ at time $t$. 
	
	\subsection{Construction of Confidence Intervals (CIs)}
	\label{sec_confidence_interval}
    The CI of individual arms $i$ can easily be constructed using Hoeffding's inequality as
    $[L_{y_i, t}, U_{y_i,t}] = [\hat{y}_{i, t} - \beta_{N_{i, t}}, \hat{y}_{i, t} - \beta_{N_{i, t}}]$,
    where $\beta_{s} = \sqrt{{\log(4ns^2/ \delta)}/{2s}}$. 

    The construction of CIs for the median ($\mathcal{I}_{y_{(m)},t}$), MAD ($\mathcal{I}_{\AD_{(m)},t}$), and the outlier threshold ($\mathcal{I}_{\theta,t}$), which are needed for ascertaining whether an arm is an outlier, is explained in Algorithm \ref{algorithm_confidence interval}. On line $1$, the CI $\mathcal{I}_{y_{(m)}, t}$ is constructed using the CIs of all arms. This is necessary because the identity of the median arm may be unknown. If the median arm can be unambiguously determined, this CI reduces to the CI of the median-th arm. The CI $\mathcal{I}_{\AD_{(m)},t}$ is similarly constructed from $\mathcal{I}_{\AD_{i},t}$. We set $\widehat{\AD}_{i, t} $ and $\hat{\theta}_{t}$ as the midpoint of their corresponding confidence intervals.

	\begin{algorithm}[]
		\caption{Construction of Confidence Intervals}
		\label{algorithm_confidence interval} 
		\renewcommand{\algorithmicrequire}{\textbf{Input:}}
		\renewcommand{\algorithmicensure}{\textbf{Output:}}
		\begin{algorithmic}[1]
			\REQUIRE CIs of individual arms $\{\mathcal{I}_{y_i, t}\}_{i=1}^n$
            \ENSURE CIs $\mathcal{I}_{y_{(m)},t}, \mathcal{I}_{\AD_{i},t}, \mathcal{I}_{\AD_{(m)},t}, \mathcal{I}_{\theta,t}$
            \vspace{5pt}
            \STATE $L_{y_{(m)}, t} = {\median} \{ L_{y_i, t} \}$ \\[2 pt]
             $U_{y_{(m)}, t} = {\median} \{ U_{y_i, t} \}$ \\[2 pt]
			$\mathcal{I}_{y_{(m)}, t} = [L_{y_{(m)}, t}, U_{y_{(m)}, t}]$
			\vspace{5pt}
			\FOR{$i = 1, \dots, n$}
			\STATE $L_{\AD_i, t} = \max \{ L_{y_i, t} - U_{y_{(m)}, t}, L_{y_{(m)}, t} - U_{y_i, t}  \}$\\[2 pt]
			$U_{\AD_i, t} = \max \{ U_{y_i, t} - L_{y_{(m)}, t}, U_{y_{(m)}, t} - L_{y_i, t} \}$ \\[2 pt]
			$\mathcal{I}_{\AD_i, t} \in [L_{\AD_i, t}, U_{\AD_i, t}]$\\[2 pt]
			$\widehat{\AD}_{i, t} = \left( U_{\AD_i, t} + L_{\AD_i, t}\right) / 2$
			\ENDFOR
			\vspace{5pt}
			\STATE $L_{\AD_{(m)}, t} = {\median} \{L_{\AD_{i},t}\} $  \\[2 pt]
			$U_{\AD_{(m), t}} = {\median} \{U_{\AD_{i},t}\}$ \\[2 pt]
			$\mathcal{I}_{\AD_{(m)}, t} = [L_{\AD_{(m)}, t}, U_{\AD_{(m)}, t}]$
			\vspace{5pt}
			\STATE $L_{\theta, t} = L_{y_{(m)}, t} + k \cdot L_{\AD_{(m)}, t}$\\[2 pt]
			$U_{\theta, t} = U_{y_{(m)}, t} + k \cdot U_{\AD_{(m)}, t}$\\[2 pt]
			$\mathcal{I}_{\theta, t} = [L_{\theta, t}, U_{\theta, t}]$ and $\hat{\theta}_{t} = \left( U_{\theta, t} + L_{\theta, t}\right) / 2$
		\end{algorithmic}
	\end{algorithm}

	\subsection{Algorithms}
	\label{sec_two_algorithms}
    We introduce our elimination-style \citep{even2006action} algorithm \roaielim and LUCB-style \cite{kalyanakrishnan2012pac} algorithm \roailucb in this section. Any pure exploration bandit algorithm is specified through its sampling, stopping, and recommendation rule \cite{kaufmann2016complexity}. The stopping and recommendation rules are the same for both algorithms. We stop at the first time $t$ such that $\{i \in [n]: \mathcal{I}_{y_i, t} \cap \mathcal{I}_{\theta, t} \neq \emptyset\}  = \emptyset$, and upon stopping we output the empirical subset of outlier arms $\hat{S}_{o, t} = \{ i \in [n]: \hat{y}_{i, t} > \hat{\theta}_t \}$. We present our two algorithms next.

    {\tt ROAIElim}: The pseudocode of \roaielim is given in Algorithm \ref{algorithm_ROAIElim}. At round $t$, \roaielim constructs three active sets for the median, the MAD, and the threshold. Each of these active sets contains arms whose CIs overlap with the respective CI. Since the threshold is constructed from the median and the MAD, any of these arms can contribute towards shrinking the CI of the threshold, and hence \roaielim samples all arms in the union of these active sets.

	\begin{algorithm}[]
		\caption{\roaielim}
		\label{algorithm_ROAIElim} 
		\renewcommand{\algorithmicrequire}{\textbf{Input:}}
		\renewcommand{\algorithmicensure}{\textbf{Output:}}
		\begin{algorithmic}[1]
			\REQUIRE Error tolerance $\epsilon$, probability of failure $\delta$, and outlier detection parameter $k$
			\ENSURE Subset of outlier arms $\hat{S}_{o, t}$
			\STATE Initialize $A_{E, 1} = A^{\median}_{E, 1} = A^{\MAD}_{E, 1} = A^{\theta}_{E, 1}= [n]$
			\FOR {$t = 1, 2, \dots$}
			\STATE Sample arms in $A_{E, t}$ and update $\{\mathcal{I}_{i, t}\}_{i \in A_{E, t}}$
			\STATE Update CIs using Algorithm \ref{algorithm_confidence interval}
			\STATE Set \vspace{-5pt}
            \begin{align*}
			A^{\median}_{E, t+1} &= \{ i \in [n] : \mathcal{I}_{y_i, t} \cap \mathcal{I}_{y_{(m)}, t} \neq \emptyset \} \cap A^{\median}_{E, t}\\[8pt]
			A^{\MAD}_{E, t+1} &= \{ i \in [n] : \mathcal{I}_{\AD_i, t} \cap
			\mathcal{I}_{\AD_{(m)}, t} \neq \emptyset \} \cap A^{\MAD}_{E, t}\\[8pt]
			A^{\theta}_{E, t+1} &= \{ i \in [n] : \mathcal{I}_{y_i, t} \cap \mathcal{I}_{\theta, t} \neq \emptyset \} \cap A^{\theta}_{E, t} \\[8 pt]
			A_{E, t+1} &= A^{\median}_{E, t+1} \cup A^{\MAD}_{E, t+1} \cup A^{\theta}_{E, t+1}
			\end{align*} 
			\STATE If $A^{\theta}_{E, t+1} = \emptyset$, stop and return $\hat{S}_{o, t}$
			\ENDFOR
		\end{algorithmic}
	\end{algorithm}

	\begin{algorithm}[h]
		\caption{\roailucb}
		\label{algorithm_ROAILucb} 
		\renewcommand{\algorithmicrequire}{\textbf{Input:}}
		\renewcommand{\algorithmicensure}{\textbf{Output:}}
		\begin{algorithmic}[1]
			\REQUIRE Error tolerance $\epsilon$, probability of failure $\delta$, and outlier detection parameter $k$
			\ENSURE Subset of outlier arms $\hat{S}_{o, t}$
			\STATE Initialize $A_{L, 1} = [n]$
			\FOR {$t = 1, 2, \dots$}
			\STATE Sample arms in $A_{L, t}$ and update $\{\mathcal{I}_{y_i, t}\}_{i \in A_{L, t}}$
			\STATE Update CIs using Algorithm \ref{algorithm_confidence interval}
			\STATE Set \begin{align*}
			A^{\median}_{L, t+1} &= \argmin_{i \in J_{m-1, t}  }\{L_{y_i, t}\} 
			\cup   \argmin_{i \in J_{m, t}} \{L_{y_i, t}\}\\
			&\cup \argmax_{i \notin J_{m-1, t}} \{ U_{y_i, t} \} 
		    \cup \argmax_{i \notin J_{m, t}} \{ U_{y_i, t} \} \\[8 pt]
			A^{\MAD}_{L, t+1} &= \argmin_{i \in J^{\AD}_{m-1, t}  }\{L_{\AD_i, t}\}
			\cup   \argmin_{i \in J^{\AD}_{m, t}} \{L_{\AD_i, t}\}\\
			&\cup \argmax_{i \notin J^{\AD}_{m-1, t}} \{ U_{\AD_i, t} \} 
			\cup \argmax_{i \notin J^{\AD}_{m, t}} \{ U_{\AD_i, t} \} \\[8 pt]
			A^{\theta}_{L, t+1} &=  \argmin_{i \in \hat{S}_{o, t}}\{L_{y_i, t} \} 
			\cup \argmax_{i \in \hat{S}_{n, t}} \{ U_{y_i, t} \}  \\
		    &\cap \{ i \in [n] : \mathcal{I}_{y_i, t} \cap \mathcal{I}_{\theta, t} \neq \emptyset \} \\[8 pt]
			A_{L, t+1} &= A^{\median}_{L, t+1} \cup A^{\MAD}_{L, t+1} \cup A^{\theta}_{L, t+1}
			\end{align*} 
			\STATE If $A_{L,t+1}^\theta = \emptyset$, stop and return $\hat{S}_{o, t} $
			\ENDFOR
		\end{algorithmic}
	\end{algorithm}

    {\tt ROAILUCB}: The pseudocode of \roailucb is presented in Algorithm \ref{algorithm_ROAILucb}. We use the notation $J_{\kappa_i, t}$ to denote $\kappa_i$ arms with the largest empirical means $\{\hat{y}_{i, t}\}$, and $J^{\AD}_{\kappa_i, t}$ to denote the $\kappa_i$ arms with the largest empirical absolute deviations $\{\widehat{\AD}_{i, t}\}$. Since we are mainly interested in shrinking confidence intervals around the median quantity, we set $\kappa_1 = m-1$ and $\kappa_2 = m$.

Motivated by the LUCB algorithm \citep{kalyanakrishnan2012pac}, \roailucb finds the $4$ arms at the median boundary, $4$ arms at the MAD boundary, and $2$ arms at the threshold boundary, and samples arms in the union of these sets. Unlike {\tt ROAIElim}, \roailucb samples at most $10$ arms in each round.

	\section{Analysis}
	\label{sec_analysis}
    In Section \ref{sec_correctness_and_sample_complexity}, we discuss correctness and sample complexity results of our algorithms. We compare the robustness and sample complexity of our algorithms with previous work in Section \ref{sec_comparison_previous}. The proofs can be found in the Appendix.
	
	\subsection{Correctness and Sample Complexity}
	\label{sec_correctness_and_sample_complexity}
	
	\cref{lem:confidence_interval} shows the correctness of CIs in Algorithm \ref{algorithm_confidence interval}. We use it to prove the correctness of our algorithms in \cref{thm_correctness}.
	\begin{restatable}{lemma}{confidenceInterval}
	\label{lem:confidence_interval}
	Suppose 
	\begin{equation*}
	\mathbb{P} \left( \forall t \in \mathbb{N}, \forall i \in [n],  y_i \in \mathcal{I}_{y_i, t} \right) \geq 1- \delta.
	\end{equation*}
	Then the CIs returned by \cref{algorithm_confidence interval} are valid with probability $1- \delta$, i.e., for $q \in \{y_{(m)}, \{ \AD_i \}_{i=1}^n, \AD_{(m)}, \theta\}$,  
	\begin{equation*}
	\mathbb{P} \left( \forall t \in \mathbb{N}, q \in \mathcal{I}_{q, t} \right) \geq 1- \delta.
	\end{equation*}
	\end{restatable}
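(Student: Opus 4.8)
The plan is to prove the statement as a purely \emph{deterministic} implication that holds on the high-probability event $E = \{\forall t\in\mathbb{N},\,\forall i\in[n],\; y_i\in\mathcal{I}_{y_i,t}\}$, which by hypothesis satisfies $\mathbb{P}(E)\ge 1-\delta$. Since every confidence interval produced by Algorithm~\ref{algorithm_confidence interval} is a fixed, deterministic function of the individual-arm intervals $\{\mathcal{I}_{y_i,t}\}$, it suffices to show that whenever $y_i\in[L_{y_i,t},U_{y_i,t}]$ holds for all $i$ at a fixed $t$, the four derived containments $y_{(m)}\in\mathcal{I}_{y_{(m)},t}$, $\AD_i\in\mathcal{I}_{\AD_i,t}$, $\AD_{(m)}\in\mathcal{I}_{\AD_{(m)},t}$, and $\theta\in\mathcal{I}_{\theta,t}$ follow. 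Because this implication is carried out at each $t$ separately, no further union bound over $t$ is needed: on $E$ the derived containments automatically hold for \emph{all} $t$, so $E$ is contained in each of the target events and the probability bound transfers verbatim.

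The key structural ingredient is the \textbf{monotonicity of order statistics}. Using the representation of the $k$-th largest value as $z_{(k)}=\max_{|S|=k}\min_{i\in S}z_i$, one sees that if $a_i\le b_i$ for every $i$ then $a_{(k)}\le b_{(k)}$, since each inner minimum and then the outer maximum are monotone. I will apply this twice. First, with $a_i=L_{y_i,t}\le y_i\le U_{y_i,t}=b_i$ and $k=m$, it gives $\median\{L_{y_i,t}\}\le y_{(m)}\le\median\{U_{y_i,t}\}$, which is exactly $y_{(m)}\in\mathcal{I}_{y_{(m)},t}$ (Step 1). Second, once Step 2 below has established $\AD_i\in[L_{\AD_i,t},U_{\AD_i,t}]$ for every $i$, the same monotonicity applied to the deviations yields $\median\{L_{\AD_i,t}\}\le\AD_{(m)}\le\median\{U_{\AD_i,t}\}$, i.e.\ $\AD_{(m)}\in\mathcal{I}_{\AD_{(m)},t}$ (Step 3).

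The remaining two containments are elementary interval arithmetic. For Step 2, write $\AD_i=|y_i-y_{(m)}|=\max\{y_i-y_{(m)},\,y_{(m)}-y_i\}$ and combine $y_i\in\mathcal{I}_{y_i,t}$ with the already-proven $y_{(m)}\in\mathcal{I}_{y_{(m)},t}$: the bounds $y_i-y_{(m)}\le U_{y_i,t}-L_{y_{(m)},t}$ and $y_{(m)}-y_i\le U_{y_{(m)},t}-L_{y_i,t}$ give the upper bound $\AD_i\le U_{\AD_i,t}$, while $y_i-y_{(m)}\ge L_{y_i,t}-U_{y_{(m)},t}$ and $y_{(m)}-y_i\ge L_{y_{(m)},t}-U_{y_i,t}$ give $\AD_i\ge L_{\AD_i,t}$, matching the $\max$-expressions in the algorithm. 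For Step 4, since $k\ge 1>0$ we have $k\,\AD_{(m)}\in[k\,L_{\AD_{(m)},t},\,k\,U_{\AD_{(m)},t}]$, and adding the interval for $y_{(m)}$ yields $\theta=y_{(m)}+k\,\AD_{(m)}\in[L_{\theta,t},U_{\theta,t}]$.

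I expect the only genuinely delicate point to be the monotonicity-of-the-median step, because the identity of the median arm is not fixed across time and the bound must survive this ambiguity; the max-min characterization of order statistics is what makes the argument clean and index-free. The absolute-value and affine steps are routine, and the probabilistic part reduces to the single observation that $E$ is a subset of each target event.
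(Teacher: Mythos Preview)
Your proposal is correct and follows essentially the same approach as the paper. The paper argues the median containment by contradiction (if $y_{(m)}<L_{y_{(m)},t}$ then too many arms would lie above the median) and handles $\AD_i$ by case analysis on whether $\mathcal{I}_{y_i,t}$ and $\mathcal{I}_{y_{(m)},t}$ overlap, while you phrase the same facts more compactly via the monotonicity of order statistics (the max--min characterization) and the identity $|x|=\max\{x,-x\}$; the underlying logic and the structure of the four steps are identical.
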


	\begin{restatable}[Correctness]{theorem}{correctness}
			\label{thm_correctness}
	\roaielim and \roailucb are $\delta$-PAC.
	\end{restatable}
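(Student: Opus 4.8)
The plan is to reduce everything to a single high-probability \emph{good event} and then argue deterministically on it. First I would define $\mathcal{E}=\{\forall t\in\mathbb{N},\ \forall i\in[n],\ y_i\in\mathcal{I}_{y_i,t}\}$ and check that the anytime individual-arm CIs, with $\beta_s=\sqrt{\log(4ns^2/\delta)/(2s)}$, together with a union bound over the $n$ arms and over all pull counts $s$, give $\mathbb{P}(\mathcal{E})\ge 1-\delta$ — the $s^2$ inside the logarithm is exactly what makes the union bound over $s$ summable. On $\mathcal{E}$, \cref{lem:confidence_interval} immediately upgrades validity of the arm CIs to simultaneous validity of $\mathcal{I}_{y_{(m)},t}$, $\mathcal{I}_{\AD_i,t}$, $\mathcal{I}_{\AD_{(m)},t}$, and $\mathcal{I}_{\theta,t}$ for every $t$. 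Since both algorithms share the same stopping and recommendation rules, it then suffices to prove, purely on $\mathcal{E}$, that (i) when the stopping rule fires the output equals $S_o$, and (ii) the stopping rule fires after finitely many pulls.

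For (i) I would isolate a \emph{separation-implies-classification} fact. Whenever $\mathcal{I}_{y_i,t}$ and $\mathcal{I}_{\theta,t}$ are disjoint, the facts $y_i\in\mathcal{I}_{y_i,t}$ and $\theta\in\mathcal{I}_{\theta,t}$ pin down the sign of $y_i-\theta$ from the order of the two intervals: if $\mathcal{I}_{y_i,t}$ lies above $\mathcal{I}_{\theta,t}$ then $y_i\ge L_{y_i,t}>U_{\theta,t}\ge\theta$, so $i\in S_o$, and symmetrically otherwise. Because $\hat{y}_{i,t}$ and $\hat{\theta}_t$ are the \emph{midpoints} of these intervals, disjointness forces $\hat{y}_{i,t}>\hat{\theta}_t$ in the first case and $\hat{y}_{i,t}<\hat{\theta}_t$ in the second, so the empirical rule $\hat{S}_{o,t}=\{i:\hat{y}_{i,t}>\hat{\theta}_t\}$ reproduces membership in $S_o$. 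The stopping rule fires exactly when $A^{\theta}=\emptyset$, i.e.\ when every arm's CI is separated from the threshold CI, so applying this fact to all arms gives $\hat{S}_{o,t}=S_o$; the assumption $y_i\neq\theta$ is what guarantees the sign is never ambiguous.

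For (ii) I would use that on $\mathcal{E}$ every CI retained in an active set contracts to zero width as its arm is pulled ($\beta_s\to0$), while the gaps $|y_i-\theta|$, $|y_i-y_{(m)}|$, and $|\AD_i-\AD_{(m)}|$ are bounded away from zero. In \roaielim any arm that still blocks termination stays in $A^{\theta}_{E,t}$ and is pulled every round, and the width of $\mathcal{I}_{\theta,t}$ is in turn governed by the arms in $A^{\median}_{E,t}\cup A^{\MAD}_{E,t}$, which are also pulled until their own boundaries resolve; hence no active set can persist forever and eventually $A^{\theta}_{E,t}=\emptyset$. The same conclusion holds for \roailucb but the progress argument is more delicate: with at most ten arms sampled per round, I would verify that the $\argmin/\argmax$ sampling targets always coincide with the arms whose CIs currently straddle the median, MAD, and threshold boundaries, so that each boundary CI provably contracts each round.

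The hard part, and the place where the \emph{double-exploration} structure bites, is coupling the two contractions: because $\theta$ is defined through $y_{(m)}$ and $\AD_{(m)}$, the threshold CI shrinks only when the median/MAD boundary arms keep being sampled, so the termination argument must show those arms continue to receive pulls even after they have been resolved relative to the threshold — precisely the reason \roaielim carries three separate active sets and their union and \roailucb samples the median and MAD boundaries explicitly. A secondary point to nail down is that a single stopping time must certify all arms at once: since on $\mathcal{E}$ the true sign of $y_i-\theta$ is fixed for all time, an arm separated in an earlier round cannot be misclassified in a later one, so reading off $\hat{S}_{o,t}$ at the final round stays consistent across arms eliminated at different times (this is cleanest if the CIs are taken as running intersections, which only tightens them on $\mathcal{E}$).
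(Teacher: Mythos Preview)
Your proposal is correct and mirrors the paper's proof: define the good event on the arm CIs, invoke \cref{lem:confidence_interval} to lift validity to $\mathcal{I}_{\theta,t}$, and then argue that disjointness of $\mathcal{I}_{y_i,t}$ and $\mathcal{I}_{\theta,t}$ at the stopping time forces the empirical classification $\hat{S}_{o,t}$ to coincide with $S_o$. The only difference is that the paper does not carry out your part (ii) directly---it simply defers finite termination to the sample complexity theorem (\cref{thm_complexity}), which supplies a quantitative bound and hence finiteness; your sketch of a direct contraction argument is fine for \roaielim but, as you yourself note, would need nontrivial work for \roailucb, which is exactly why the paper routes through the full sample-complexity analysis instead.
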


	In order to state our sample complexity bounds, we first introduce some new notations. Define
    \begin{align}
	\Delta_i^{\theta} = |\theta - y_i|, &\quad \Delta_*^{\theta} = \min_{i \in [n]} \{ \Delta_i^{\theta}\}, \nonumber \\
	\Delta_i^{\median} = |y_{(m)} - y_i|,&\quad \Delta_i^{\MAD} = | \AD_{(m)} - \AD_i |,\nonumber\\
    \Delta^*_i = \max \{\Delta_*^{\theta},  \min \{ &\Delta_i^{\theta}, \Delta_i^{\median}, \Delta_i^{\MAD} \} \}. \label{eq:sample_complexity_gap}
    \end{align}

\begin{restatable}[Sample Complexity]{theorem}{complexity}
		\label{thm_complexity}
With probability at least $1-\delta$, the sample complexity of \roaielim and \roailucb is upper bounded by 
\begin{equation}
\label{eq_complexity}
C k^2\sum_{i=1}^n   \frac{ \log \left( n k / \delta \Delta_i^{*} \right)}{(\Delta_i^{*})^2},
\end{equation}
where $C$ is a universal constant. 
\end{restatable}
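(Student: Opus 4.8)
The plan is to condition throughout on the good event $\mathcal{E} = \{\forall t, \forall i,\; y_i \in \mathcal{I}_{y_i,t}\}$, which holds with probability at least $1-\delta$ by the choice of $\beta_s$; by \cref{lem:confidence_interval} all the derived CIs for the median, $\AD_i$, MAD, and threshold are then simultaneously valid on $\mathcal{E}$, so the sample-complexity analysis becomes a deterministic bookkeeping of elimination/stopping times. I would first record the elementary separation conditions implied by $\mathcal{E}$: writing $W_{q,t}$ for the width of $\mathcal{I}_{q,t}$, an arm $i$ can remain in $A^{\median}_{E,t}$ only if $\Delta_i^{\median} \le 2\beta_{N_{i,t}} + W_{\median,t}$, in $A^{\theta}_{E,t}$ only if $\Delta_i^{\theta} \le 2\beta_{N_{i,t}} + W_{\theta,t}$, and analogously for $A^{\MAD}_{E,t}$ with $\Delta_i^{\MAD}$. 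Since the CI construction gives $W_{\theta,t} = W_{\median,t} + k\,W_{\MAD,t}$, the factor $k$ already surfaces here, and is what will eventually produce the $k^2$ in the bound.

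The crux is a width-control lemma that bounds the median, MAD, and threshold CI widths by the confidence radii of the arms that are still being sampled. I would show $W_{\median,t} \le 2\max_{j \in A^{\median}_{E,t}} \beta_{N_{j,t}}$: letting arm $a$ realize $U_{y_{(m)},t}$ (the $m$-th largest upper bound), one checks $a \in A^{\median}_{E,t}$, and every one of the $m$ arms with $U_{y_j,t} \ge U_{y_{(m)},t}$ satisfies $L_{y_j,t} \ge U_{y_{(m)},t} - 2\beta_{N_{j,t}}$ (arms lying fully above the median CI satisfy this trivially), so the $m$-th largest lower bound obeys $L_{y_{(m)},t} \ge U_{y_{(m)},t} - 2\max_{j} \beta_{N_{j,t}}$ over active arms. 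Combining the same median-of-endpoints argument with $W_{\AD_i,t} \le 2\beta_{N_{i,t}} + W_{\median,t}$ yields, up to universal constants, $W_{\MAD,t} \le C\max_{j \in A^{\MAD}_{E,t}}\beta_{N_{j,t}}$ and hence $W_{\theta,t} \le C\,k\max_{j}\beta_{N_{j,t}}$.

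For \roaielim this closes immediately: the active sets are nested decreasing, so any arm active at round $t$ has been pulled in every prior round and satisfies $N_{i,t}=t$, whence $\beta_{N_{j,t}}=\beta_t$ uniformly over active arms and the widths collapse to order $\beta_t$ for the median and MAD and order $k\beta_t$ for the threshold. Feeding these into the separation conditions, arm $i$ exits $A^{\median}_{E,t}$ once $\beta_t$ drops below order $\Delta_i^{\median}$, exits $A^{\MAD}_{E,t}$ once $\beta_t$ drops below order $\Delta_i^{\MAD}$, and exits $A^{\theta}_{E,t}$ once $\beta_t$ drops below order $\Delta_i^{\theta}/k$; inverting $\beta_s=\sqrt{\log(4ns^2/\delta)/(2s)}$ gives exit times of order $\log(\cdot)/(\Delta_i^{\median})^2$, $\log(\cdot)/(\Delta_i^{\MAD})^2$, and $k^2\log(\cdot)/(\Delta_i^{\theta})^2$. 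Arm $i$ is sampled until it leaves the \emph{union}, i.e.\ until it has left all three sets, so its pull count is at most the maximum of these times, which (using $k\ge 1$) is at most $C k^2 \log(nk/\delta\Delta_i^*)/(\min\{\Delta_i^{\theta},\Delta_i^{\median},\Delta_i^{\MAD}\})^2$. Simultaneously the algorithm halts when $A^{\theta}_{E,t}=\emptyset$, i.e.\ after at most $T \le C k^2\log(nk/\delta\Delta_*^{\theta})/(\Delta_*^{\theta})^2$ rounds, capping every arm's pull count. Taking the minimum of the two caps converts the $\min$-of-gaps into $\max\{\Delta_*^{\theta},\min\{\cdots\}\}=\Delta_i^{*}$ in the denominator, and summing over $i$ gives \cref{eq_complexity}.

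Finally I would transfer this to \roailucb, where the obstacle is that only a constant number of arms are pulled per round, so the clean identity $N_{i,t}=t$ fails and the width-control lemma must be driven by the selection rule rather than by nestedness. I would argue that whenever $W_{\median,t}$, $W_{\MAD,t}$, or $W_{\theta,t}$ remains too large relative to the governing gap, the $\argmin/\argmax$ arms placed into $A^{\median}_{L,t+1}$, $A^{\MAD}_{L,t+1}$, $A^{\theta}_{L,t+1}$ are exactly boundary arms that are under-sampled for their gaps, so each such pull makes quantifiable progress; a standard LUCB counting argument then charges every pull to some arm $i$ whose radius still exceeds order $\Delta_i^*/k$, recovering the same per-arm bound. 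I expect this last step — tying the bounded-size LUCB selection to the same width guarantee that nestedness supplied for free in the elimination algorithm — to be the main difficulty.
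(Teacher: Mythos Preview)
Your overall strategy matches the paper's: condition on the good event, control the widths of $\mathcal{I}_{y_{(m)},t}$, $\mathcal{I}_{\AD_{(m)},t}$, $\mathcal{I}_{\theta,t}$ by $O(\beta_t)$, $O(\beta_t)$, $O(k\beta_t)$ respectively, read off per-arm exit times for each active set, and cap everything by the global stopping time governed by $\Delta_*^\theta$ to obtain $\Delta_i^*$. The structure and the bookkeeping with $\min/\max$ of the three gaps are exactly as in the paper.

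There is one genuine technical gap in your \roaielim width-control lemma. You assert that an arm $j$ among the top-$m$ upper bounds but already eliminated from $A^{\median}_{E,t}$ ``lies fully above the median CI'' and hence satisfies $L_{y_j,t}\ge U_{y_{(m)},t}$ trivially. This does not follow from the algorithm as written: elimination is permanent but CIs are recomputed each round, and an arm eliminated at time $t'$ because $L_{y_j,t'}>U_{y_{(m)},t'}$ may, after further sampling (while it remains in $A^{\MAD}_E$ or $A^{\theta}_E$) or after the median CI shifts, fail to satisfy $L_{y_j,t}>U_{y_{(m)},t}$. The paper resolves this by replacing each arm's CI with the \emph{intersecting} confidence interval $\mathcal{I}'_{y_i,t}=\bigcap_{t'\le t}\mathcal{I}_{y_i,t'}$ for the entire \roaielim analysis; the resulting monotonicity (their Lemma on nested $\mathcal{I}_{q,t}$) is what makes your ``trivially'' step actually go through. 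You should make this move explicit.

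For \roailucb, your sketch is pointed in the right direction but underestimates what is needed. The paper does not attempt a width-control lemma at all; instead it introduces reference constants $c_1^\theta,c_2^\theta,c_1^{\median},c_2^{\median},c_1^{\MAD},c_2^{\MAD}$ (midpoints between the target and its neighbors) and defines $\NEEDY$ events for each of the three components. The core lemmas show that (i) if the algorithm has not stopped, then either one of the two threshold-boundary arms $l_{\theta,t},u_{\theta,t}$ is $\NEEDY^\theta$, or $\len(\mathcal{I}_{\theta,t})\ge \Delta_*^\theta/2$; (ii) in the latter case, either $\len(\mathcal{I}_{y_{(m)},t})$ or $\len(\mathcal{I}_{\AD_{(m)},t})$ is large; and (iii) whenever one of these lengths is large, at least one of the four LUCB-selected boundary arms for that component is $\NEEDY$ with radius bounded below by a fixed fraction of $\Delta_*^\theta/k$. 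The $\AD$ part requires an extra wrinkle: because $\len(\mathcal{I}_{\AD_i,t})$ involves both $\beta_{N_{i,t}}$ and $\len(\mathcal{I}_{y_{(m)},t})$, the MAD analysis is done \emph{conditionally} on the median CI already being short, and the thresholds $\epsilon^{\median},\epsilon^{\MAD}$ are chosen with $\epsilon^{\median}=\epsilon^{\MAD}/4$ precisely to make this two-stage argument close. Your proposed ``standard LUCB counting argument'' will not close without this layered structure, so plan for it.
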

	
    The sample complexity is inversely proportional to $\Delta^*_i$ defined in Eq. \eqref{eq:sample_complexity_gap}. In order to interpret the sample complexity, we consider two cases. If there exists arms whose means are close to the threshold $\theta$, i.e., $\Delta^\theta_\ast$ is small, then in order to classify these arms correctly, we need to estimate $\theta$ and consequently the median and the MAD accurately. Hence the complexity of sampling an arm depends on its gaps from $y_{(m)}, \AD_{(m)}, \theta$. Conversely, if all the arm means are widely separated from the threshold, i.e., $\Delta^\theta_\ast$ is large and there is a clear distinction between normal and outlier arms, then we do not need to estimate $\theta$ accurately, and the sample complexity is $O(n/(\Delta^\theta_\ast)^2)$.

    We highlight that the proof of \cref{thm_complexity} is non-trivial and cannot be reduced to existing techniques. The existing works \citep{kalyanakrishnan2012pac, katariya2018adaptive} deal with scenarios where the positions of the separating boundaries depend only on the arm means, and furthermore they are user-specified. This holds true only for the median in our case, it does not hold for the AD, MAD, and the threshold because their values do not depend on a single arm. The CIs of these estimators have varying degree of uncertainty and we quantify these in our Lemmas. The technical contributions may be of independent interest and we refer the reader to our proofs in the Appendix. 
    

	\subsection{Comparison to Previous Work}
	\label{sec_comparison_previous}
    We compare our setting and analysis to algorithms by \citet{zhuang2017identifying}, which is the only work study outlier detection in the bandit setting.
	
    To deal with the \emph{unknown} $\mu$ and $\sigma$, \cite{zhuang2017identifying} use the sample mean $\bar{\mu} = \sum_{i=1}^n y_i/ n$ and sample standard deviation $\bar{\sigma} = \sqrt{\sum_{i=1}^n (y_i - \bar{\mu})^2 / n}$ to approximate $\mu$ and $\sigma$, respectively, and define the outlier threshold to be $\bar{\theta} = \bar{\mu} + k \cdot \bar{\sigma}$. As discussed in \cref{sec_setting}, these estimators have a breakdown point $0$ and are very sensitive to outliers; a single extreme outlier arm can ruin their threshold.

    Algorithms developed in \citep{zhuang2017identifying} also require the reward distribution of the arms to be strictly bounded; our analysis is general and works for any sub-gaussian distributions.
    
    
    Finally, although a direct comparison of sample complexities is not possible due to different definitions of outlier thresholds, we empirically see that our algorithms require fewer samples to achieve the same error rate. 
    
    
%
	
	\section{Lower bound}
	\label{sec_lower_bound}
	
	In this section, we study lower bound on the expected number of samples needed to identify outlier arms by any $\delta$-PAC algorithm, where the outlier threshold is defined by Eq. \eqref{eq_robust_threshold}.

	Our lower bound is instance-dependent. Recall that our upper bound scales like $\tilde{O} ( \sum_{i \in [n]}{1}/{\left(\Delta_i^*\right)^2} )$ where $\Delta_i^\ast$ is given by Eq. \eqref{eq:sample_complexity_gap}. The problem is easy when $\Delta^{*}_i$ is large, and the upper bound could potentially be large when $\Delta^{\theta}_i$ is small. In this section we argue that this is unavoidable. We show that if $\Delta^{\theta}_i$ is small enough, there exists a lower bound that matches the upper bound up to logarithmic factors. This indicates that our sample complexity upper bounds are generally \emph{unimprovable}. 
	
	We apply the change of measure technique \citep{kaufmann2016complexity}, which give a lower bound in terms of the KL-divergence. To connect the KL-divergence to the Euclidean distance in our upper bound, we assume that the reward distribution of each arm is $\mathcal{N}(y_i, 1)$.\footnote{The lower bound could be generalized to other distributions, as discussed in \cite{kaufmann2016complexity}.} We use $D_{y_i}$ to denote the distribution $\mathcal{N}(y_i, 1)$ as it is fully characterized by its mean $y_i$.
	

	For a bandit instance $D_y = (D_{y_1}, \dots, D_{y_n})$, assume without loss of generality that $y_i \geq y_{i+1}$ and that each arm is unambiguously identifiable as an outlier or normal arm, i.e., $y_i \neq \theta,   \,\forall \, i\in [n]$. We use $\mathbb{E}_{y} (\cdot)$ to represent the expectation with respect to the bandit instance $D_y$ and randomness in the algorithm. We develop lower bounds for the following subset of bandit instances.
	\begin{definition}
		\label{def_minimax_lower_bound}
		Let ${\mathcal{M}}_{n, \rho} = \{ D_y = (D_{y_1}, \dots, D_{y_n}): y_i \neq \theta \}$ be a subset of bandit instances with $\theta$ defined in Eq. \eqref{eq_robust_threshold} and $k \geq 2$, and satisfying the following two conditions.
		\begin{enumerate}
			\item There exists a unique median $y_{(m)}$ and a unique MAD $\AD_{(m)}$, with 
			\begin{equation*}
			\eta := {1}/{2} \cdot  \min_{i \in \{ m, m-1\}} \left\{ y_{(i)} - y_{(i+1)}, \AD_{(i)}-\AD_{(i+1)}  \right\} .
			\end{equation*}
			
        \item There exists a constant $\rho <  \eta$ such that at least two arms $l_1$ and $l_2$ such that ${\rho}/{2}< \theta - y_{l_i}  <\rho$, and at least two arms $u_1$ and $u_2$ such that ${\rho}/{2}<   y_{u_i}- \theta  < \rho $; furthermore, there exists no arm with mean in $[\theta - \rho/2, \theta + \rho/2]$.

			
		\end{enumerate}
	\end{definition}
	
	It is easy to see that $\mathcal{M}_{n, \rho} \neq \emptyset$ for reasonably large $n$. The conditions in \cref{def_minimax_lower_bound} are essentially to make sure that slightly changing the median $y_{(m)}$ or the MAD $\AD_{(m)}$ will incur a change in the set of outlier arms. Then, for any $\delta$-PAC algorithm to correctly identify the subset of outlier arms, it is necessary to accurately identify the outlier threshold, which eventually leads to a matching sample complexity lower bound. We state our lower bound for the subset of bandit instances ${\mathcal{M}}_{n, \rho}$ next.
	

	\begin{restatable}{theorem}{lowerBound}
		\label{thm_lower_bound_worst_case}
Suppose bandit instance $D_y \in {\mathcal{M}}_{n, \rho}$. Then for $\delta \leq 0.15$, any $\delta$-PAC outlier arm identification algorithm $\mathcal{A}$ with outlier threshold constructed as in Eq. \eqref{eq_robust_threshold} and an almost surely finite stopping time $\tau$, we have that
\begin{equation*}
\mathbb{E}_y[\tau] \geq  \sum_{i \in [n]} \frac{1}{5  \left( {{\Delta}}^{*}_{i}  \right)^2}  \log \left( \frac{1}{2.4 \delta}\right).
\end{equation*}
	\end{restatable}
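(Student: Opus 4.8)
The plan is to prove the bound with the standard change-of-measure argument of \citet{kaufmann2016complexity}. The starting point is the transportation inequality: for any $\delta$-PAC algorithm with almost surely finite stopping time $\tau$ and any alternative instance $D_{y'}$ whose correct outlier set differs from $S_o$, one has $\sum_{i \in [n]} \mathbb{E}_y[N_i(\tau)]\,\KL(D_{y_i}, D_{y_i'}) \ge \mathrm{kl}(1-\delta, \delta)$, where $\mathrm{kl}(a,b)$ is the binary relative entropy and $N_i(\tau)$ counts the pulls of arm $i$. For $\delta \le 0.15$ the right-hand side is at least $\log(1/(2.4\delta))$, and since each arm is $\mathcal{N}(y_i,1)$ we have $\KL(D_{y_i},D_{y_i'}) = (y_i-y_i')^2/2$.

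To turn this into a per-arm bound I would, for every fixed $i \in [n]$, construct an alternative instance $D_{y'}$ (depending on $i$) such that: (i) $D_{y'}$ differs from $D_y$ only in the mean of arm $i$; (ii) the outlier set of $D_{y'}$ differs from that of $D_y$; and (iii) $|y_i - y_i'| \le \sqrt{10}\,\Delta_i^*$. Under (i)-(ii) only the $i$-th summand of the transportation inequality survives, so it reads $\mathbb{E}_y[N_i(\tau)] \cdot (y_i-y_i')^2/2 \ge \log(1/(2.4\delta))$, and combined with (iii) this gives $\mathbb{E}_y[N_i(\tau)] \ge \log(1/(2.4\delta))/(5(\Delta_i^*)^2)$. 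Summing over $i$ and using $\mathbb{E}_y[\tau] = \sum_{i}\mathbb{E}_y[N_i(\tau)]$ yields exactly the stated inequality, so the whole problem reduces to the constructions in (i)-(iii).

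The construction splits according to which term realizes $\Delta_i^* = \max\{\Delta_*^\theta, \min\{\Delta_i^\theta, \Delta_i^{\median}, \Delta_i^{\MAD}\}\}$, always using conditions~1-2 of \cref{def_minimax_lower_bound} to keep the perturbation small and valid. When the cheapest option is to flip arm $i$ itself, I push $y_i$ just across $\theta$, a displacement of size about $\Delta_i^\theta$; here I must check, using that $k \ge 2$ forces $\theta$ to sit well above $y_{(m)}$ and that condition~1 provides gaps of at least $2\eta$ around the median and MAD ranks, that this move neither passes the median arm nor reorders the deviations around $\AD_{(m)}$, so that $\theta$ is unchanged and only arm $i$ switches side. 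When it is cheaper to act through the median, I relocate $y_i$ to just beyond $y_{(m)}$, a displacement of size about $\Delta_i^{\median}$, shifting the median value and hence $\theta$, and I calibrate the shift so that a boundary arm $l_j$ or $u_j$ from condition~2, chosen distinct from $i$, is carried across the new threshold. The MAD case is analogous on the deviation scale, moving $y_i$ to shift $\AD_{(m)}$ and hence $\theta$ across a boundary arm. Finally, in the floor case $\Delta_i^* = \Delta_*^\theta$ we have $\min\{\Delta_i^\theta,\Delta_i^{\median},\Delta_i^{\MAD}\}\le\Delta_*^\theta$, so one of the preceding moves of size $O(\Delta_*^\theta)$ already drives $\theta$ past the nearest boundary arm; condition~2 guarantees a boundary arm distinct from $i$ to flip in every case.

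The main obstacle is the coupling inside $\theta = y_{(m)} + k\,\AD_{(m)}$: perturbing one mean to move the median simultaneously perturbs \emph{every} deviation $\AD_j = |y_j - y_{(m)}|$, so $\AD_{(m)}$ may also move, and I must track the \emph{net} displacement of $\theta$ to be sure a boundary arm genuinely crosses it while the mean-perturbation stays within $\sqrt{10}\,\Delta_i^*$. This is precisely why \cref{def_minimax_lower_bound} is engineered as it is: the $2\eta$ separation in condition~1 preserves the rank orderings of both the means and the deviations under the perturbation, keeping the median and MAD well-defined and the sign and magnitude of the change in $\theta$ predictable, while the two-arms-per-side requirement in condition~2 always leaves a boundary arm, distinct from the perturbed arm, available to be flipped. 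Quantifying the worst-case net change in $\theta$ as a function of a single-arm displacement is the one delicate estimate; the remainder is bookkeeping.
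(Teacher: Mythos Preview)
Your proposal is correct and follows essentially the same change-of-measure strategy as the paper: for each arm $i$, construct a single-arm perturbation of size $O(\Delta_i^*)$ via one of three moves (across $\theta$, across $y_{(m)}$, or across $\AD_{(m)}$) that alters the outlier set, apply the transportation inequality of \citet{kaufmann2016complexity}, and sum. The paper's execution differs only in how it dispatches the coupling you flag as the main obstacle: instead of verifying that $\theta$ is unchanged in the $\theta$-move, it places arm $i$ at exactly $\theta \pm \rho$ and argues a dichotomy --- either $|\theta' - \theta| < \rho$ (so arm $i$ itself flips) or $|\theta' - \theta| \ge \rho$ (so one of the boundary arms $u_j,l_j$ from condition~2 flips) --- and similarly in the median/MAD moves it shows $|\theta'-\theta|\ge\rho$ outright, so a boundary arm always flips. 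This sidesteps the net-displacement estimate you anticipate needing; otherwise the arguments coincide.
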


%
%
	
    In general for bandit instances outside $\mathcal{M}_{n,\rho}$ but with non-empty subset of outlier arms, the outlier identification problem is at least as hard as the top-$n_1$ arm identification problem where $n_1$ is the number of outlier arms \emph{given} by an oracle. Thus, any lower bound for top-$n_1$ arm identification, e.g., Theorem 4 in \cite{kaufmann2016complexity}, applies as a general lower bound for the outlier arm identification problem.

	\section{Heuristic to Reduce Sample Complexity}
	\label{sec_generalization}
	
	The sample complexity of our algorithms is inversely proportional to $(\Delta_i^{*})^2$ (see Eq. \eqref{eq:sample_complexity_gap}), which could be as small as $(\min \{ \Delta_i^{\theta}, \Delta_i^{\median}, \Delta_i^{\MAD} \})^2 $ if $\Delta_*^{\theta}$ is small. As $n$ increases, there can be many arms with small $\Delta_i^{\median}$ or $ \Delta_i^{\MAD}$ and the sample complexity can be high as a result. In general, we cannot circumvent this cost if the outlier threshold is constructed as in \cref{eq_robust_threshold}.
	    
	However, it might not be necessary to always construct outlier threshold using all $n$ arms, and one heuristic approach is to construct threshold only from a subset of arms. Suppose we know, from an oracle, an upper bound $c < 0.5$ on the fraction of arms drawn from the contaminated distribution,  we could then randomly draw a subset $\Omega \subseteq [n]$ of arms with cardinality $|\Omega| \geq 2 \lfloor n c \rfloor  +1$. The cardinality requirement makes sure the fraction of contamination within the subset $\Omega$ is smaller than $0.5$ so that the median and MAD are not arbitrarily destroyed by outliers; but of course the threshold constructed crucially depends on the selection of $\Omega$. Although the outlier set computed from this modified threshold could differ from the outlier set computed from $[n]$, we could potentially enjoy a smaller sample complexity. We next state an upper bound on the sample complexity in this setting.\footnote{See Appendix \ref{appendix_heuristic} for details of the algorithm.} Empirical examinations of the performance are summarized in \cref{sec_setting_comparison}.

	\begin{restatable}{corollary}{subSampling}
		\label{thm_subsampling}

Suppose we run \cref{algorithm_ROAILucb} with $y_{(m)}$, $\AD_{(m)}$ and $\theta$ constructed using arms in $\Omega \subseteq [n]$. Then, with probability at least $1 - \delta$, the sample complexity is upper bounded by 
\begin{equation*}
Ck^2 \sum_{i\in \Omega}   \frac{\log \left(  {nk}/({\delta \Delta_i^{*} })  \right) }{(\Delta_i^{*})^2}  + C \sum_{i\notin \Omega}   \frac{ \log \left(  {n }/({\delta \Delta_i^{\theta} })  \right) }{(\Delta_i^{\theta})^2},
\end{equation*}
where $\Delta^*_i = \max \{\Delta_*^{\theta},  \min \{ \Delta_i^{\theta}, \Delta_i^{\median}, \Delta_i^{\MAD} \} \}$ and $C$ is a universal constant.
	\end{restatable}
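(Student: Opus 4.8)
The plan is to reduce the statement to \cref{thm_complexity} applied to the sub-universe $\Omega$, and then separately account for the arms outside $\Omega$, which play a purely classification role. First I would pin down the modified algorithm: the CIs $\mathcal{I}_{y_{(m)},t}$, $\mathcal{I}_{\AD_{(m)},t}$, $\mathcal{I}_{\theta,t}$ are produced by \cref{algorithm_confidence interval} fed only with $\{\mathcal{I}_{y_i,t}\}_{i\in\Omega}$, so that $y_{(m)}$, $\AD_{(m)}$, and $\theta$ all refer to the $\Omega$-restricted quantities; correspondingly the median and MAD active sets $A^{\median}_{L,t}, A^{\MAD}_{L,t}$ live inside $\Omega$, while the threshold active set $A^{\theta}_{L,t}$ ranges over all of $[n]$ since every arm must eventually be classified against $\theta$. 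The choice $|\Omega|\ge 2\lfloor nc\rfloor+1$ (odd) guarantees a unique median and MAD over $\Omega$ and a contamination fraction below $0.5$, so the robustness argument of \cref{sec_setting} goes through verbatim; correctness (that we recover $\{i:y_i>\theta\}$ with probability $1-\delta$) follows from \cref{lem:confidence_interval} and \cref{thm_correctness} applied to $\Omega$, since the CI construction is unchanged.

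For the arms in $\Omega$, I would invoke the analysis behind \cref{thm_complexity} essentially unchanged: each $i\in\Omega$ enters the median, MAD, and threshold active sets exactly as an arm does in the full algorithm, so the same per-arm bound holds with the gap $\Delta_i^*=\max\{\Delta_*^{\theta},\min\{\Delta_i^{\theta},\Delta_i^{\median},\Delta_i^{\MAD}\}\}$ of Eq.~\eqref{eq:sample_complexity_gap}, where $\Delta_*^{\theta}=\min_{i\in[n]}\Delta_i^{\theta}$ is still the \emph{global} minimum (the threshold must be resolved finely enough to classify the arm closest to it, whether or not that arm lies in $\Omega$). This yields the first sum $Ck^2\sum_{i\in\Omega}\log(nk/(\delta\Delta_i^*))/(\Delta_i^*)^2$. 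The only point to check is that $\mathcal{I}_{\theta,t}$ is driven down to width $O(\Delta_*^{\theta})$ by the median/MAD boundary arms of $\Omega$; this is precisely the mechanism that produces the $k^2$ factor and the $\Delta_*^{\theta}$ floor in \cref{thm_complexity}, so it transfers directly once $\Omega$ is treated as the ambient instance.

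The heart of the new argument is the bound for arms $i\notin\Omega$. Such an arm appears only in $A^{\theta}_{L,t}$ and never contributes to $\mathcal{I}_{y_{(m)},t}$, $\mathcal{I}_{\AD_{(m)},t}$, or $\mathcal{I}_{\theta,t}$; consequently the width of $\mathcal{I}_{\theta,t}$ is a function of the $\Omega$-arms alone and is already shrunk to $O(\Delta_*^{\theta})\le O(\Delta_i^{\theta})$ by the sampling charged in the first sum. Arm $i$ is therefore eliminated from $A^{\theta}$ as soon as its own Hoeffding CI, of half-width $\beta_{N_{i,t}}=\sqrt{\log(4nN_{i,t}^2/\delta)/(2N_{i,t})}$, separates from $\mathcal{I}_{\theta,t}$, i.e. once $\beta_{N_{i,t}}\lesssim\Delta_i^{\theta}$. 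Inverting this (the standard $\log(1/\Delta_i^{\theta})$ inflation from the $N_{i,t}^2$ inside $\beta$) gives $N_{i,t}=O(\log(n/(\delta\Delta_i^{\theta}))/(\Delta_i^{\theta})^2)$, with \emph{no} factor of $k$: the $k$-dependence in \cref{thm_complexity} comes solely from propagating the MAD uncertainty, scaled by $k$, into the threshold, and an arm outside $\Omega$ neither estimates the MAD nor needs the threshold resolved beyond its own gap $\Delta_i^{\theta}$. Summing over $i\notin\Omega$ produces the second term, and adding the two sums gives the claimed bound. The main obstacle is this decoupling step: rigorously arguing that the threshold precision is supplied ``for free'' by $\Omega$, and that the LUCB sampling rule (which shrinks $\mathcal{I}_{\theta,t}$ only through the median/MAD boundary arms in $\Omega$) keeps pace, so that an outside arm is never stuck waiting on the threshold. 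Concretely, I would verify that whenever an outside arm $i$ still overlaps $\mathcal{I}_{\theta,t}$, either $i$ itself or some $\Omega$ boundary arm is scheduled for sampling, so that progress toward separating $i$ is always being made.
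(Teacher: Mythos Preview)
Your proposal is correct and follows essentially the same route as the paper: restrict the median/MAD machinery to $\Omega$, keep the threshold classification over all of $[n]$, and split the sample-complexity accounting into $\Omega$-arms (charged via the full $\Delta_i^*$ gap as in \cref{thm_complexity}) and outside arms (charged only via $\Delta_i^{\theta}$, without the $k$ factor). The paper carries this out by redoing the round-counting of the \roailucb analysis with the $\NEEDY$ events split over $a\in\Omega$ versus $a\notin\Omega$, which is exactly the formal mechanism that resolves the ``decoupling'' obstacle you flag at the end.
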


	\section{Experiments}
	\label{sec_experiment}
    We conduct three experiments. In \cref{sec:exp_sample_complexity}, we verify the tightness of our sample complexity upper bounds in \cref{sec_correctness_and_sample_complexity}. In \cref{sec_setting_comparison}, we compare our setting to the non-robust version proposed by \citet{zhuang2017identifying} and empirically confirm the robustness of our thresholds as discussed in \cref{sec_comparison_previous}. Finally, in \cref{sec_anytime_performance}, we compare the anytime performance of our algorithms with baselines on a synthetic and a real-world dataset. For ease of comparison, we make the fraction of contamination deterministic rather than random as in the original Huber's contamination model. All our results are averaged over 500 runs. Error bar in \cref{fig_deviation}, \cref{fig_synthetic_data} and \cref{fig_real_data} are rescaled by $2/\sqrt{500}$. Our code is publicly available \citep{ROAIcode}.
    

	\subsection{Sample Complexity}
    \label{sec:exp_sample_complexity}
    
    	\begin{figure}[h]%
    	\centering
    	\subfigure[]{{\includegraphics[width=0.4\textwidth]{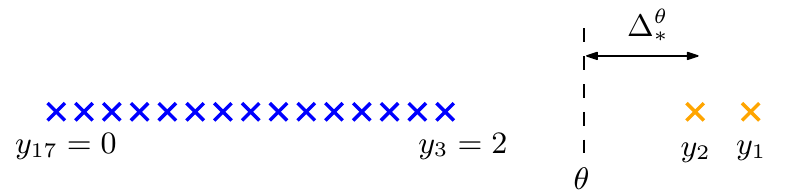} }}\\
    	\subfigure[]{{\includegraphics[width=0.48\textwidth]{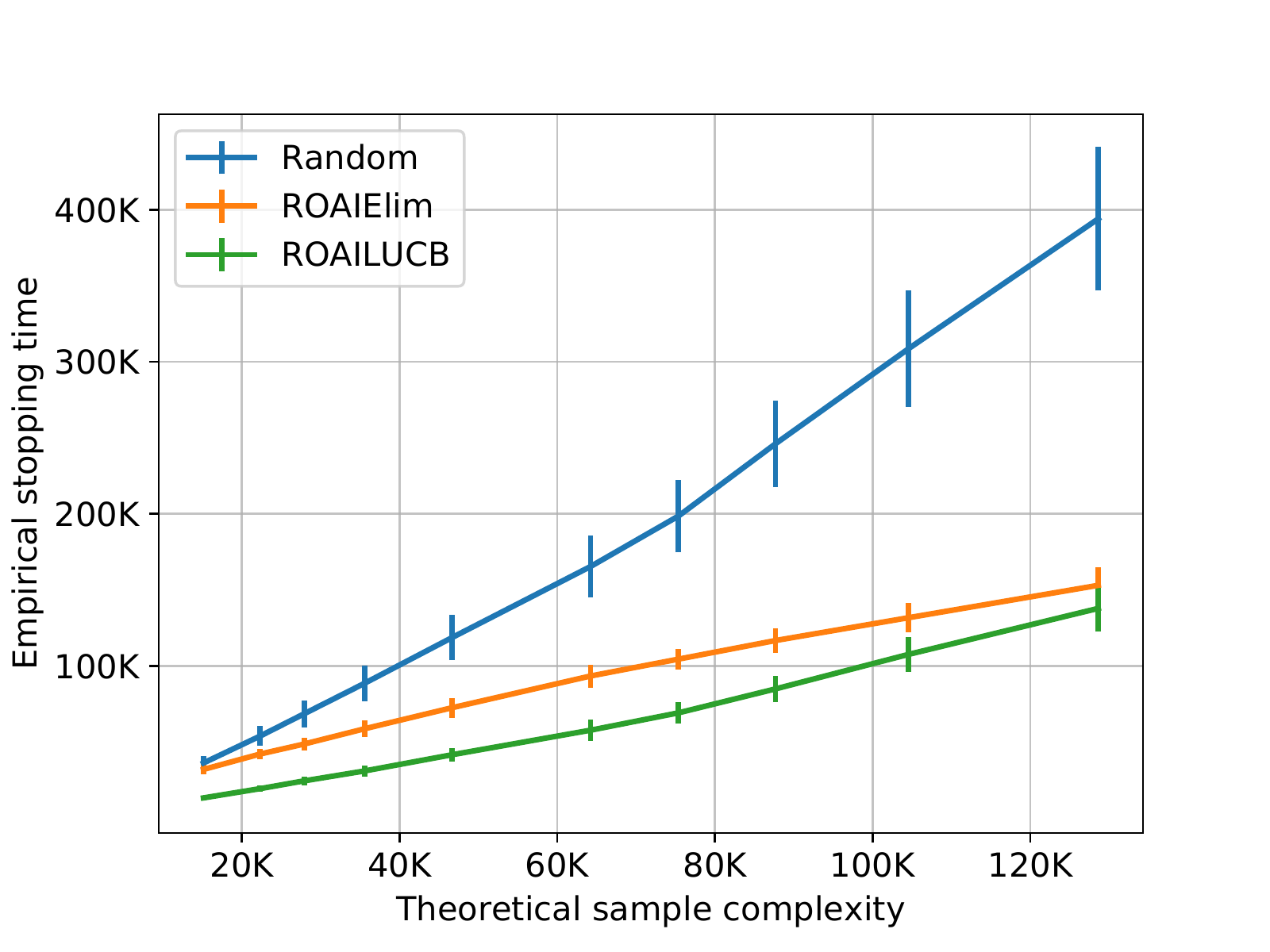} }}%
    	\caption{(a) Configuration of the arm means, we vary $\Delta^\theta_\ast$ to change hardness (b) Theoretical upper bound vs empirical stopping time, the linear relationship shows that our upper bounds are correct up to constants.}%
    	\label{fig_complexity}
    \end{figure}

	In order to test that the hardness predicted by our upper bound scales correctly, we first plot the empirical stopping time of each algorithm against the theoretical sample complexity (\cref{thm_complexity} with $C=10$). We choose the arm configuration in \cref{fig_complexity}(a) containing 15 normal arms (in blue) with fixed means equally distributed from 0 to 2, an outlier threshold $\theta \approx 2.837$, and 2 outlier arms (in orange) above the outlier threshold. The distance between the outlier arms is fixed at $0.2$. We decrease $\Delta_*^\theta$ from $0.6$ to $0.2$, and this changes the theoretical sample complexity. Note that the threshold does not change. The reward of each arm is normally distributed with standard deviation $0.5$. In \cref{fig_complexity}(b), we plot the empirical stopping time of our algorithms against the theoretical sample complexity, and we see a linear relationship between the two, which suggests that our sample complexity in \cref{thm_complexity} is correct up to constants. \cref{fig_complexity}(b) also shows that our adaptive algorithms always outperform random sampling, and the gains increase with the hardness of the problem.

	\subsection{Setting Comparison}
	\label{sec_setting_comparison}
	
    In this section, we compare the robustness of our outlier threshold and the sample complexity upper bound of our algorithms to the threshold and algorithms considered by \citet{zhuang2017identifying}. We introduce the nomenclature of the algorithms next. Round Robin (RR) and Weighted Round Robin (WRR) are algorithms proposed by \citet{zhuang2017identifying} which use a non-robust outlier threshold. We denote by \roai-$\lambda n\epsilon$ the algorithm suggested in Section \ref{sec_generalization} that constructs the outlier threshold from a subset $\Omega$ of arms with $|\Omega| = \max\{\lambda \lfloor n \epsilon \rfloor +1, 15\}$. For each run of this experiment, we generate the means of normal arms from $\mathcal{N}(0.3, 0.075^2)$ (clipped to the three-sigma range), and the means of outlier arms from \verb+Unif+$(x, 1)$. We draw 105 arms in total. We multiply MAD with $1/(\Phi^{-1}(3/4)) \approx 1.4826$ to make it consistent for the true scale of normal distribution \citep{leys2013detecting}.

	\begin{figure}[h]
		\centering
		{{\includegraphics[width=\linewidth]{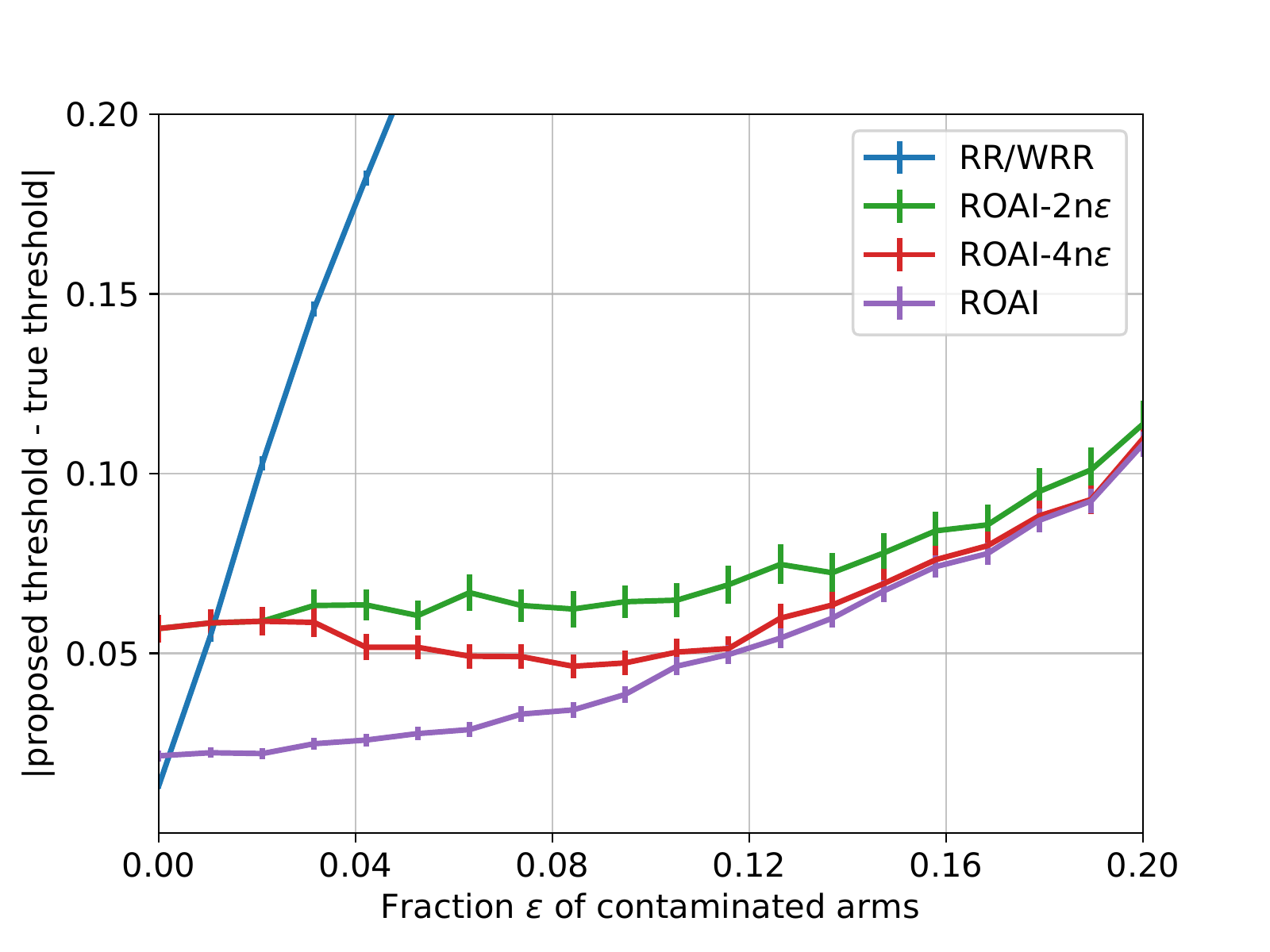} }}%
		\caption{Deviation of the proposed outlier threshold from the true threshold as a function of the contamination level $\epsilon$. This shows that our threshold is robust to contamination.}%
		\label{fig_deviation}
	\end{figure}

	We first study robustness. In \cref{fig_deviation}, we generate outlier arms from \verb+Unif+(0.7, 1) and vary the fraction $\epsilon$ of contaminated arms from 0 to 0.2, and compare the robustness of the proposed outlier threshold from different algorithms. We measure the robustness as deviation of the proposed threshold from the true threshold. The true threshold is chosen according to the three-sigma rule. It is clear that our outlier thresholds are much more robust to contamination. 

    We next compare the upper bounds on the sample complexity of different algorithms. We generate 10 outlier arms from \verb+Unif+$(x, 1)$ with $x$ varying from $0.6$ to $0.9$. In \cref{fig_complexity_upper_bound}, we plot the median sample complexity upper bounds of each algorithm in log scale, ignoring universal constants. We notice that under these contamination settings, our sample complexity upper bounds are orders of magnitude smaller than the ones proposed in \citet{zhuang2017identifying}. From \cref{fig_deviation} and \cref{fig_complexity_upper_bound}, we also see the trade-off between robustness and sample complexity for our generalized algorithms suggested in \cref{sec_generalization}.
    

   
	
	\begin{figure}[h]
		\centering
		{{\includegraphics[width=\linewidth]{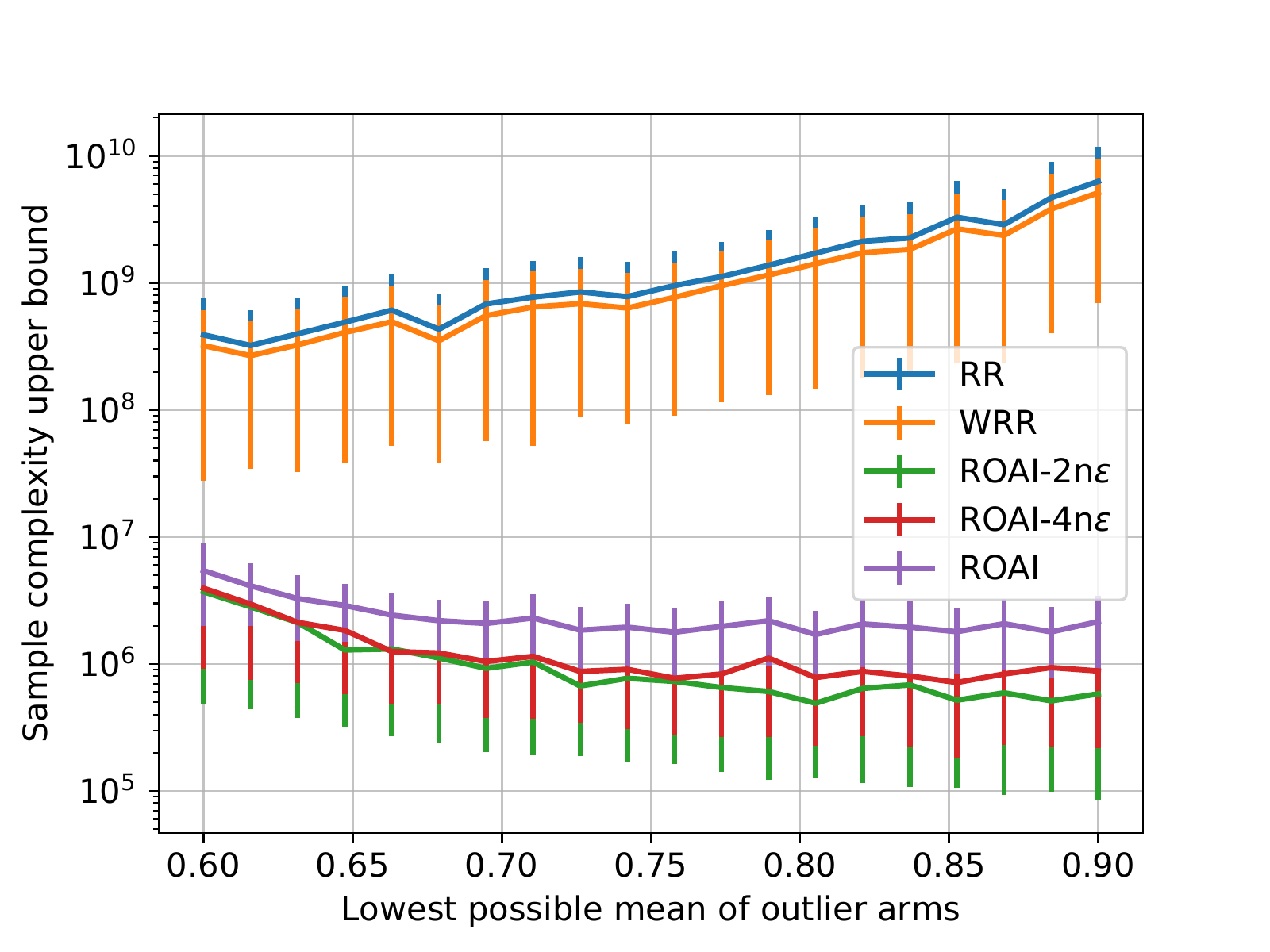} }}%
		\vspace{- 6pt}
        \caption{Sample complexity upper bounds as a function of the lowest possible mean of outlier arms, our upper bounds are smaller.}
		\label{fig_complexity_upper_bound}
	\end{figure}

	\subsection{Anytime Performance}
	\label{sec_anytime_performance}

    In this section, we examine the anytime empirical error rate of {\tt ROAILUCB}, {\tt ROAIElim}, random sampling and \textsc{RR/WRR} \citep{zhuang2017identifying}. Similar to \cref{sec_setting_comparison}, we generate $100$ normal arm means from $\mathcal{N}(0.3, 0.075^2)$ and $5$ outlier means from \verb+Unif+(0.8, 1). We draw rewards of each arm from a Bernoulli distribution with respect to its mean. We use Bernoulli distributions here as algorithms in \citet{zhuang2017identifying} only apply to arms with a strictly bounded distribution. In order to simulate a run, we randomly draw means according to these two distributions and then draw rewards from these arms with fixed means till the end of the run. Under this setting, both our threshold (median-MAD) and the threshold in \citet{zhuang2017identifying} (mean-standard deviation) will lie in $[0.525, 0.8]$ with high probability.  We filter out instances where the outlier sets (with respect to both thresholds) do not match the ground truth. {The averaged minimum gap $\min \{|y_i - \theta|\}$ is $0.062$ according to our threshold, and $0.063$ according to theirs.}
    In \cref{fig_synthetic_data}, we plot the fraction of times any algorithm fails to identify the correct set of outlier arms. We notice that \roailucb requires about 5x fewer samples than RR/WRR for the same error rate. Notice that RR is essentially random sampling with their threshold, and hence we use our threshold in the algorithm labeled Random. The empirical performance of RR/WRR is worse than Random.
    
    
    
%

	\begin{figure}[h]
		\centering
		{{\includegraphics[width=\linewidth]{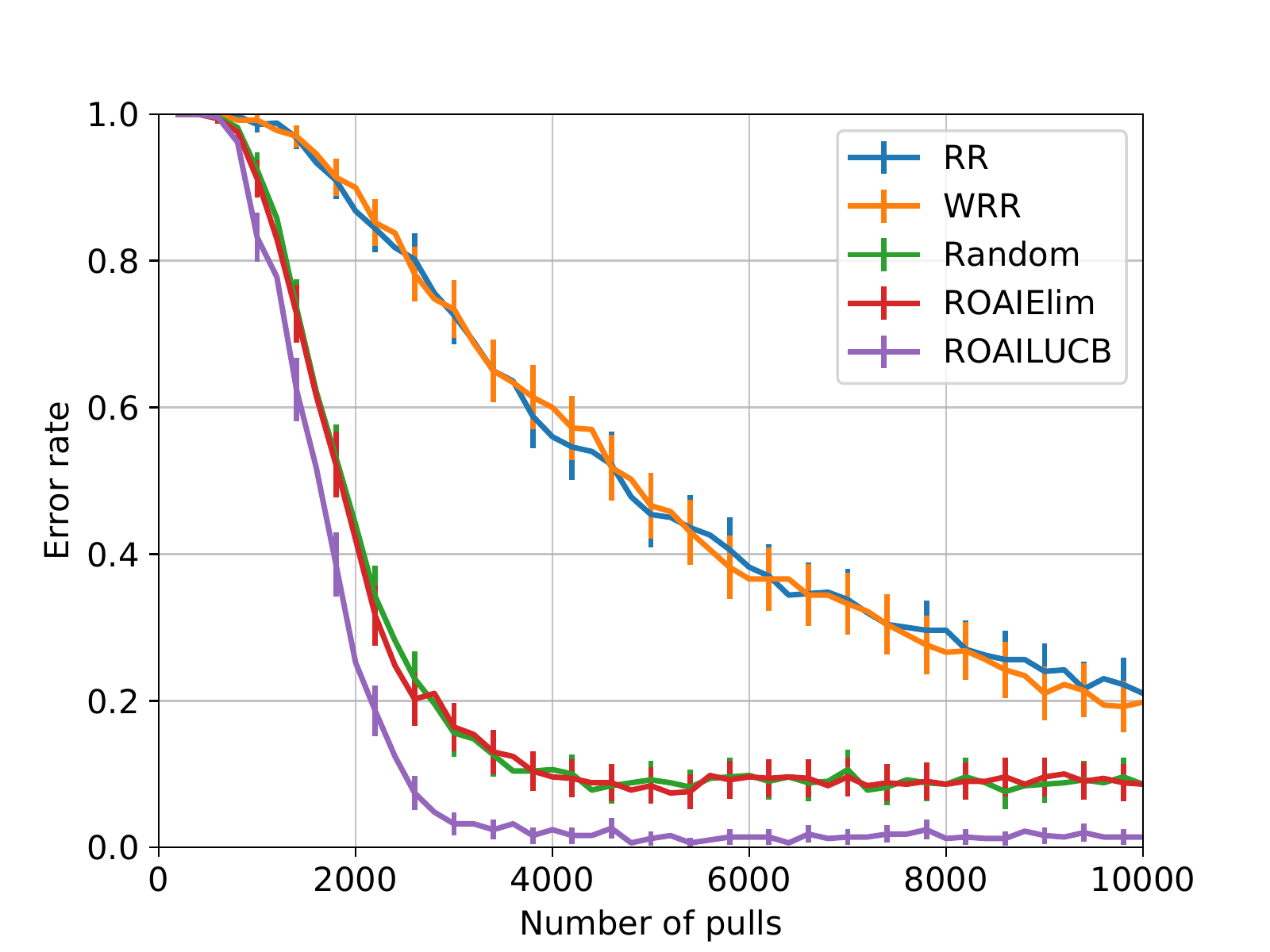} }}%

        \caption{Fraction of times the outlier set is misidentified on synthetic data.}
		\label{fig_synthetic_data}
	\end{figure}
	
	We also compare the performance of all algorithms on the real-world Wine Quality dataset \citep{sathe2016lodes}, which is widely used to compare outlier detection algorithms. This dataset contains $129$ wines, each having $13$ features. $10$ of these wines are labeled as outliers in the dataset. To obtain a 1d representation of each wine, we projected data points on the first principal component and then rescaled them to $[0, 1]$. We deleted 6 values closest to the threshold in this 1d representation so that the outlier set is the same according to both definitions. The 123 means thus obtained are plotted in \cref{fig_real_data}(a) with the top-5 outliers in orange. We simulate each arm as a Bernoulli distribution. As in the previous experiment, \roailucb greatly outperform other algorithms, and RR/WRR is worse than random sampling.
	
	\begin{figure}[h]%
		\centering
		\subfigure[]{{\includegraphics[width=.4\textwidth]{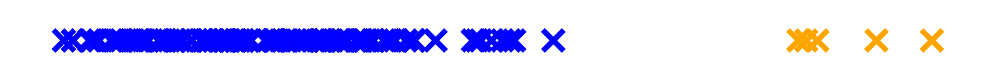} }}\\

		\subfigure[]{{\includegraphics[width=.48\textwidth]{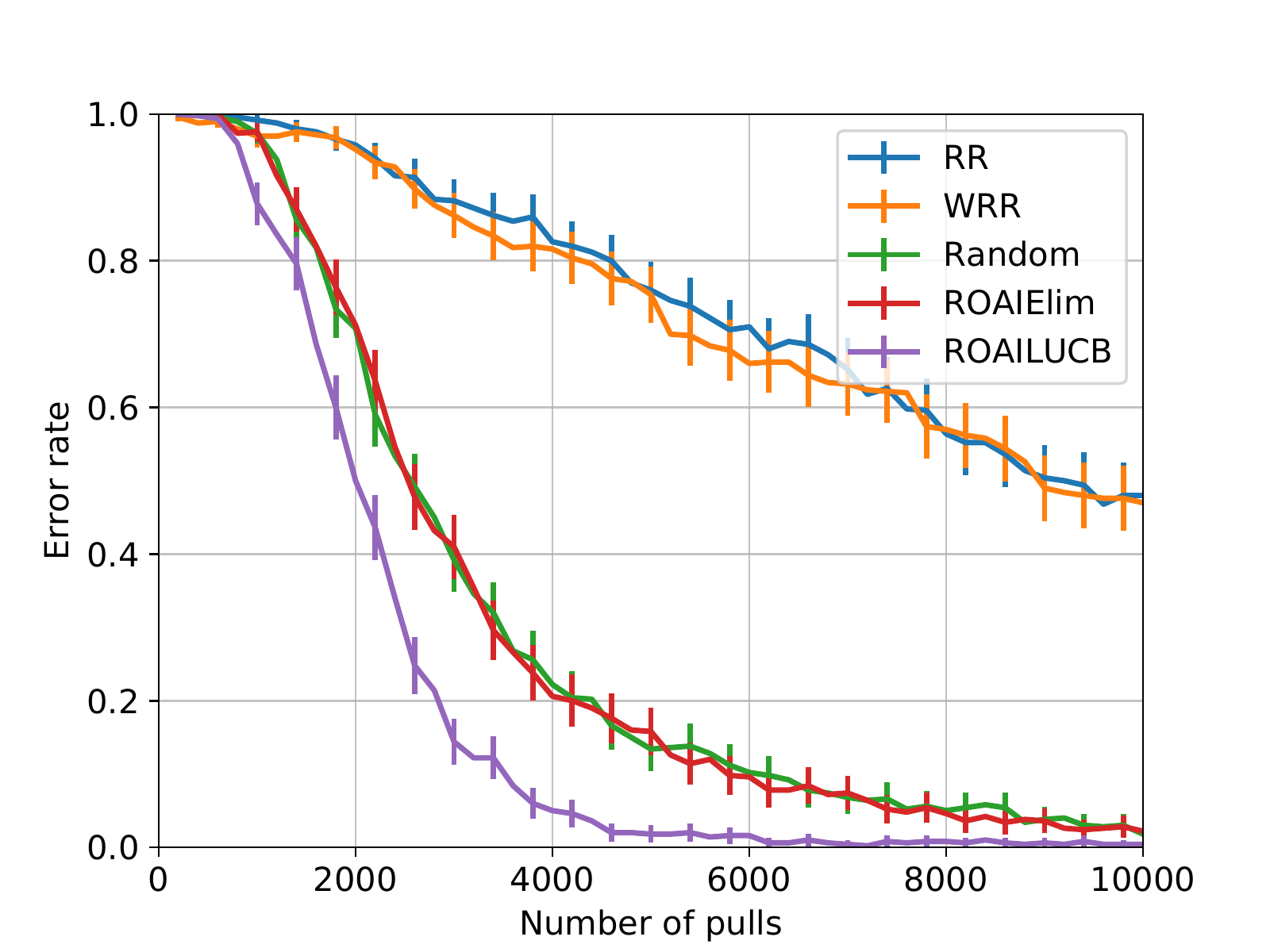} }}%
        \caption{(a) 1d means obtained from the Wine Quality dataset (b) Fraction of times the outlier set is misidentified on this dataset.}
		\label{fig_real_data}
	\end{figure}

	The fact that \roaielim performs similar to random sampling in terms of the anytime error rate is not new \citep{jamieson2014lil}, elimination-style algorithms are very conservative initially. \cref{fig_complexity}(b) does show that \roaielim outperforms random sampling in terms of the empirical stopping time.
	
	
	\section{Conclusion}
	\label{sec_conclusion}
	
    This paper studies robust outlier arm identification problem, a pure exploration problem with instance-adaptive identification target in the multi-armed bandit setting. We propose two algorithms \roaielim and {\tt ROAILUCB}, and theoretically derive their correctness and sample complexity upper bounds. We also provide a matching, up to log factors, worst case lower bound, indicating our upper bounds are generally tight. We conduct experiments to show our algorithms are both robust and about 5x sample efficient compared to state-of-the-art. 
    

    We leave open several questions. First, the sample complexity of our algorithms is large when $\Delta^\theta_*$ is small. We propose a heuristic to partially address this issue if an upper bound on the contamination $\epsilon$ is known in \cref{sec_generalization}. Another potential approach is to add an error tolerance to allow arms close the threshold being misclassified, but that adds another user-specific parameter. We also leave open the problem of obtaining a tight instance dependent lower bound. Our current lower bound, even though instance-dependent, works only in the worst case, and we reduce the problem to top-$n_1$ arm identification in the general case. 
    
    \section*{Acknowledgements}
        Yinglun Zhu would like to thank Ardhendu Tripathy for helpful discussions, and thank Tuan Dinh for help with experiments in the early stage of this project.

	\bibliography{ROAI}
	\bibliographystyle{icml2020}
	
	\clearpage
	
	\onecolumn
	\appendix
	
	\section{Correctness Analysis: Proof of \cref{lem:confidence_interval} and \cref{thm_correctness}}
	
	We first define two events $W^\prime$ and $W$ as followings:
	\begin{equation*}
	W^{\prime} = \bigcap_{t \in \mathbb{N}}\bigcap_{i \in [n]} \{ y_i \in \mathcal{I}_{y_i, t} \}
	\end{equation*}
	and
	\begin{equation*}
	W = \bigcap_{t \in \mathbb{N}} \left\{ \bigcap_{i \in [n]} \{ y_i \in \mathcal{I}_{y_i, t} \}  \bigcap_{q \in \{y_{(m)}, \{\AD_i\}_{i=1}^n, \AD_{(m)}, \theta \}} \{ q \in \mathcal{I}_{k, t} \} \right\}.
	\end{equation*}
	
	A byproduct of the proof of Lemma \ref{lem:confidence_interval} is $W^\prime \implies W$.

\confidenceInterval*

	\begin{proof}
		Under event $W^\prime$, which is assumed to hold with probability at least $1-\delta$, we show that the construction of confidence intervals through Algorithm \ref{algorithm_confidence interval} are valid and tight.\footnote{We set $L_{\AD_i, t} = \max \{ L_{y_i, t} - U_{y_{(m)}, t}, L_{y_{(m)}, t} - U_{y_i, t}  \}$ in \cref{algorithm_confidence interval} in order to provide better estimations of $\widehat{\AD}_{i, t}$ in experiments, especially when $t$ is small. Here, we actually prove with a slightly tighter version $L_{\AD_i, t} = \max \{0,  L_{y_i, t} - U_{y_{(m)}, t}, L_{y_{(m)}, t} - U_{y_i, t}  \}$, as absolute deviation is always non-negative. Our analysis works for the situations $L_{\AD_i, t} = \max \{ L_{y_i, t} - U_{y_{(m)}, t}, L_{y_{(m)}, t} - U_{y_i, t}  \}$ as well: (1) for {\tt ROAIElim}, one could simply change \cref{def_FindAD}, and the subsequent \cref{lm_FindAD} and \cref{lm_extended_ci_MAD} still hold; (2) for {\tt ROAILUCB}, \cref{lm_length_confidence_bound_AD} still hold.}

		We first show $y_{(m)} \in \mathcal{I}_{y_{(m)}, t}$ through contradiction: Suppose $y_{(m)} < L_{y_{(m)}, t}$, we then know there exists at least $m$ arms with means greater than $y_{(m)}$, which contradicts the fact that there could have at most $m-1$ arms with means greater than $y_{(m)}$; similarly, we know that $y_{(m)} > U_{y_{(m)}, t}$ cannot hold. We thus know $y_{(m)} \in \mathcal{I}_{y_{(m)}, t}$. The confidence interval of $y_{(m)}$ is tight in the sense that we could have $y_{(m)} = L_{y_{(m)}, t}$ or $y_{(m)} = U_{y_{(m)}, t}$: It is possible that all means of arms are on the right or left end points of their corresponding confidence intervals.
		
		We next show $\AD_i \in [L_{\AD_i, t}, U_{\AD_i, t}]$. We first argue $\AD_i \geq L_{\AD_i, t}$: When $\mathcal{I}_{y_i, t} \cap \mathcal{I}_{y_{(m)}, t} \neq \emptyset$, we have both $L_{y_i, t} - U_{y_{(m)}, t} \leq 0$ and $ L_{y_{(m)}, t} - U_{y_i, t} \leq 0$; since $\AD_i \geq 0$ by definition, we have $\AD_i \geq L_{\AD_i, t} = 0$. When $\mathcal{I}_{y_i, t} \cap \mathcal{I}_{y_{(m)}, t} = \emptyset$, we have one of $L_{y_i, t} - U_{y_{(m)}, t} $ and $ L_{y_{(m)}, t} - U_{y_i, t} $ represents the smallest distance between two points within their corresponding confidence intervals and the other one represents the negative of the largest distance between two points within their corresponding confidence intervals; since $\AD_i$ cannot be smaller than the smallest distance between two confidence intervals, we have $\AD_i \geq L_{\AD_i, t}$. To prove $\AD_i \leq U_{\AD_i, t}$: We notice that no matter two confidence intervals overlap or not, we have $\max \{ U_{y_i, t} - L_{y_{(m)}, t}, U_{y_{(m)}, t} - L_{y_i, t} \}$ represents the largest distance between two points within their corresponding confidence intervals; thus, $\AD_i \leq U_{\AD_i, t}$. The confidence interval $\mathcal{I}_{\AD_i, t}$ is tight in the sense that we could have $\AD_i = U_{AD_i, t}$ or $\AD_i = L_{AD_i, t}$.
		
		The proof of $\AD_{(m)} \in \mathcal{I}_{\AD_{(m), t}}$ is similar to the proof of $y_{(m)} \in \mathcal{I}_{y_{(m)}, t}$; and the proof of $\theta \in \mathcal{I}_{\theta, t}$ follows from the fact that $\theta = y_{(m)} + k \cdot \AD_{(m)}$ and the construction of $\mathcal{I}_{y_{(m)}, t}$ and $\mathcal{I}_{\AD_{(m), t}}$.
		
		We notice that above analyses show $W^\prime \implies W$. Thus, $\mathbb{P} (\neg W) \leq \mathbb{P} (\neg W^\prime) \leq \delta$.
	\end{proof}

\correctness*

	\begin{proof}
		We first notice that, under event $W$, when $A^{\theta}_{E, t} = \emptyset$ or $A^{\theta}_{L, t} = \emptyset$, we have the confidence interval of all arms being separated from the confidence interval of outlier threshold; and thus we could correctly output the subset of outlier arms.
		
		We next show that $\mathbb{P}(W) \geq 1- \delta$: Since we have $W^\prime \implies W$, which leads to $\mathbb{P} (\neg W) \leq \mathbb{P} (\neg W^\prime)$; we thus only need to show $\mathbb{P} (\neg W^\prime) < \delta$ in the following. Based on Hoeffding's inequality and the construction of confidence bound for individual arms, we actually allocate $\delta/(2n N_{i, t}^2)$ failure probability to arm $i$ at the $N_{i, t}$-th pull. The fact that 
		\begin{equation*}
		\sum_{i \in [n]} \sum_{N_{i, t} = 1}^{\infty} \frac{\delta}{2n N_{i, t}^2} = \frac{\pi^2 \delta}{12} < \delta
		\end{equation*}
		directly leads to the desired result $\mathbb{P} (\neg W^\prime) < \delta$.
		
		We prove that both algorithms stop in finite time in our sample complexity analysis.
	\end{proof}

	\section{Sample Complexity Analysis: Restate of \cref{thm_complexity}}
	\label{appendix_sample_complexity}
	
	Recall
	 \begin{align*}
	\Delta_i^{\theta} = |\theta - y_i|, &\quad \Delta_*^{\theta} = \min_{i \in [n]} \{ \Delta_i^{\theta}\}, \nonumber \\
	\Delta_i^{\median} = |y_{(m)} - y_i|,&\quad \Delta_i^{\MAD} = | \AD_{(m)} - \AD_i |,\nonumber\\
	\Delta^*_i = \max \{\Delta_*^{\theta},  \min \{ &\Delta_i^{\theta}, \Delta_i^{\median}, \Delta_i^{\MAD} \} \}.
	\end{align*}
		
	We restate Theorem \ref{thm_complexity} as below. The proofs of \roaielim and \roailucb can be found in Appendix \ref{appendix_ROAIELim} and Appendix \ref{appendix_ROAILucb}, respectively. The factor $k$ appears in the upper bound in \cref{thm_complexity} as $ \mathcal{I}_{\AD_{(m)}, t}$ is enlarged by a factor of $k$ when constructing $\mathcal{I}_{\theta, t}$.

	
	\complexity*

	We also provide the following remark, which will be referred frequently in our formal sample complexity analysis.
	\begin{remark}
		\label{remark_sample_needed}
		There exists a universal constant $C$ such that for any $\lambda > 0$, when $s > C \frac{\log (n/\delta \lambda)}{\lambda^2}$, we have $\beta_s < \lambda$ with
		\begin{equation*}
		\beta_{s} = \sqrt{\frac{\log(4ns^2/ \delta)}{2s}}.
		\end{equation*}
	\end{remark}

	\section{Sample Complexity Analysis of \roaielim}
	\label{appendix_ROAIELim}
	
	We conduct the sample complexity analysis of \roaielim on top of intersecting confidence intervals introduced in Appendix \ref{appendix_intersecting_confidence_interval}.
	
	We start by providing some supporting lemmas; we then characterize the
	confidence interval of the median $y_{(m)}$, the MAD $\AD_{(m)}$ and the outlier threshold $\theta$ before getting into the sample complexity analysis. We will upper bound sample complexity under the good event $W$, which happens with probability of at least $1-\delta$.
	
	\subsection{Intersecting Confidence Interval}
	\label{appendix_intersecting_confidence_interval}
	
	Start from now on and up to the proof of sample complexity of \roaielim (Theorem \ref{thm_complexity_Elim}), we will conduct our analysis with respect to the following intersecting confidence intervals:
	\begin{equation}
	\label{eq_intersecting_confidence_interval}
	\mathcal{I}^{\prime}_{y_i, t} = \bigcap_{t^\prime \leq t} \mathcal{I}_{y_i, t^{\prime}}   = \left[ \max_{t^\prime \leq t}L_{y_i, t^\prime}, \min_{t^\prime \leq t}U_{y_i, t^{\prime}} \right] =: \left[ L^\prime_{y_i, t},  U^\prime_{y_i, t}\right]
	\end{equation}
	for $\forall t \in \mathbb{N}, i \in [n]$. It's easy to see that $\mathcal{I}^{\prime}_{y_i, t} $ is a valid confidence interval and $\mathcal{I}^{\prime}_{y_i, t} \subseteq \mathcal{I}_{y_i, t} = \left[ L_{y_i, t},  U_{y_i, t}\right]$, a property will be used frequently. A formal analysis for the correctness of $\mathcal{I}^{\prime}_{y_i, t}$ could be obtained from Lemma 9 in \cite{katariya2019maxgap}. We apply $\mathcal{I}^{\prime}_{y_i, t}$ in \cref{algorithm_confidence interval} and \cref{algorithm_ROAIElim}. 
	

	We analyze confidence intervals of the median, the median absolute deviation and the threshold, with the help of this tighter confidence interval. For convenience, we will keep using notation $\mathcal{I}_{y_i, t} = \left[ L_{y_i, t},  U_{y_i, t}\right]$ to actually represent $\mathcal{I}^{\prime}_{y_i, t}= \left[ L^\prime_{y_i, t},  U^\prime_{y_i, t}\right]$; and $\mathcal{I}_{y_{(m)}, t}$, $\mathcal{I}_{\AD_{i}, t}$, $\mathcal{I}_{\AD_{(m)}, t}$, $\mathcal{I}_{\theta, t}$ to represent the generated confidence intervals through Algorithm \ref{algorithm_confidence interval} with the intersecting confidence intervals as input. There will be no ambiguity of this slightly abuse of notation as original confidence intervals are not used anymore up to the end of Appendix \ref{appendix_ROAIELim}.
	
	\subsection{Supporting Definition and Lemmas}
	
	\begin{lemma}
		\label{lm_extended_ci}
		$\forall i \in [n]$ and $\forall t \in \mathbb{N}$, we have
		\begin{equation*}
		[L_{y_i, t}, U_{y_i, t}] \subseteq [y_i - 2 \beta_{N_{i, t}}, y_i + 2 \beta_{N_{i, t}}].
		\end{equation*}
	\end{lemma}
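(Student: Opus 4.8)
The plan is to prove the two-sided containment by bounding the upper and lower endpoints of the interval separately, working throughout on the good event $W$ (equivalently $W^\prime$, which was shown to imply $W$ and holds with probability at least $1-\delta$). On this event the raw Hoeffding interval contains the true mean at every round, i.e. $\hat{y}_{i,t} - \beta_{N_{i,t}} \le y_i \le \hat{y}_{i,t} + \beta_{N_{i,t}}$ for all $i$ and $t$, and this is the only probabilistic input needed.

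First I would unwind the notational convention of Appendix \ref{appendix_intersecting_confidence_interval}: here $[L_{y_i,t}, U_{y_i,t}]$ actually denotes the \emph{intersected} interval $\mathcal{I}^{\prime}_{y_i,t} = \bigcap_{t^\prime \le t} \mathcal{I}_{y_i,t^\prime}$, whose endpoints are $L^\prime_{y_i,t} = \max_{t^\prime \le t}(\hat{y}_{i,t^\prime} - \beta_{N_{i,t^\prime}})$ and $U^\prime_{y_i,t} = \min_{t^\prime \le t}(\hat{y}_{i,t^\prime} + \beta_{N_{i,t^\prime}})$, whereas $\beta_{N_{i,t}}$ still refers to the raw Hoeffding half-width at the single round $t$. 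The essential observation is that intersecting only shrinks the interval, so $\mathcal{I}^{\prime}_{y_i,t} \subseteq \mathcal{I}_{y_i,t}$, and it therefore suffices to bound against the single-round-$t$ raw interval.

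For the upper endpoint I would instantiate $t^\prime = t$ in the defining minimum to get $U^\prime_{y_i,t} \le \hat{y}_{i,t} + \beta_{N_{i,t}}$. On $W$ the left-hand inclusion $\hat{y}_{i,t} - \beta_{N_{i,t}} \le y_i$ rearranges to $\hat{y}_{i,t} \le y_i + \beta_{N_{i,t}}$, and substituting gives $U^\prime_{y_i,t} \le y_i + 2\beta_{N_{i,t}}$. The lower endpoint is symmetric: taking $t^\prime = t$ in the defining maximum gives $L^\prime_{y_i,t} \ge \hat{y}_{i,t} - \beta_{N_{i,t}}$, and the right-hand inclusion $y_i \le \hat{y}_{i,t} + \beta_{N_{i,t}}$ rearranges to $\hat{y}_{i,t} \ge y_i - \beta_{N_{i,t}}$, yielding $L^\prime_{y_i,t} \ge y_i - 2\beta_{N_{i,t}}$. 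Combining the two bounds gives the claimed containment.

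There is essentially no obstacle here; the single point requiring care is distinguishing the intersected endpoints from the raw half-width that shares the name $L_{y_i,t}/U_{y_i,t}$ after the convention is adopted. In particular one never needs monotonicity of $s \mapsto \beta_s$ nor the intervals at earlier rounds $t^\prime < t$, because the min/max over $t^\prime \le t$ is only used through the single term $t^\prime = t$. The whole mechanism reduces to the elementary fact that a width-$\beta_{N_{i,t}}$ interval containing $y_i$ must have its far endpoint within $2\beta_{N_{i,t}}$ of $y_i$.
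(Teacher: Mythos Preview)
Your proof is correct and follows essentially the same approach as the paper: both arguments bound each endpoint of the intersected interval by the single-round-$t$ raw Hoeffding endpoint (using that $t$ is one term in the $\max/\min$), and then shift by another $\beta_{N_{i,t}}$ using the good-event inclusion $|\hat{y}_{i,t}-y_i|\le \beta_{N_{i,t}}$. The paper presents only the lower-endpoint direction and declares the other symmetric, while you spell out both and make the intersecting-interval convention explicit, but the substance is identical.
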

	\begin{proof}
		We only need to prove $L_{y_i, t} \geq y_i - 2 \beta_{N_{i, t}}$ and the other side is symmetric. Combining $L_{y_i, t} \geq \hat{y}_{i, t} - \beta_{N_{i, t}}$ with the fact that $\hat{y}_{i, t} \geq y_i - \beta_{N_{i, t}}$ leads to the desired result.\footnote{Recall we use $L_{y_i, t}$ to present the lower bound of intersecting confidence intervals; and that's why we have `$L_{y_i, t} \geq \hat{y}_{i, t} - \beta_{N_{i, t}}$' rather than `$L_{y_i, t} = \hat{y}_{i, t} - \beta_{N_{i, t}}$'.}
	\end{proof}
	
	\begin{definition}
		\label{def_FindAD}
		We define a $\findad\bigl([l_1, u_1], [l_2, u_2]\bigr)$ operator with two confidence intervals as input and output a single confidence interval of absolute deviation as following
		\begin{equation}
		\label{eq_FindAD}
		\findad\bigl([l_1, u_1], [l_2, u_2]\bigr) = [\max \{0, l_1 - u_2, l_2 - u_1\}, \max\{ u_1 - l_2, u_2 - l_1 \}].
		\end{equation} 
		Note that $\findad$ is symmetric with respect to its inputs.
	\end{definition}
	
	\begin{lemma}
		\label{lm_FindAD}
		
		Suppose $[l_1, u_1] \subseteq [l_1^\prime, u_1^\prime]$, then we have $\findad\bigl([l_1, u_1], [l_2, u_2]\bigr) \subseteq \findad\bigl([l_1^\prime, u_1^\prime], [l_2, u_2]\bigr)$.
	\end{lemma}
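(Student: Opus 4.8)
The plan is to unfold both sides of the claimed containment using the explicit formula for $\findad$ in \cref{eq_FindAD}, and then verify the two coordinate inequalities that characterize interval containment. Writing $A = [a_L, a_U] \subseteq [b_L, b_U] = B$ is equivalent to the pair of inequalities $b_L \leq a_L$ and $a_U \leq b_U$, so it suffices to establish these two comparisons between the endpoints of $\findad\bigl([l_1, u_1], [l_2, u_2]\bigr)$ and $\findad\bigl([l_1^\prime, u_1^\prime], [l_2, u_2]\bigr)$, with $l_2, u_2$ held fixed.

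First I would record the hypothesis in the form it will actually be used: $[l_1, u_1] \subseteq [l_1^\prime, u_1^\prime]$ means precisely $l_1^\prime \leq l_1$ and $u_1 \leq u_1^\prime$. For the lower endpoints, I would compare $\max\{0, l_1^\prime - u_2, l_2 - u_1^\prime\}$ against $\max\{0, l_1 - u_2, l_2 - u_1\}$ term by term: $l_1^\prime \leq l_1$ gives $l_1^\prime - u_2 \leq l_1 - u_2$, and $u_1 \leq u_1^\prime$ gives $l_2 - u_1^\prime \leq l_2 - u_1$, while the constant $0$ matches on both sides. Hence each argument of the left-hand maximum is dominated by the corresponding argument on the right, so by monotonicity of $\max$ in each of its arguments the left maximum is no larger than the right maximum, which is exactly $b_L \leq a_L$.

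Symmetrically, for the upper endpoints I would compare $\max\{u_1 - l_2, u_2 - l_1\}$ against $\max\{u_1^\prime - l_2, u_2 - l_1^\prime\}$: here $u_1 \leq u_1^\prime$ gives $u_1 - l_2 \leq u_1^\prime - l_2$, and $l_1^\prime \leq l_1$ gives $u_2 - l_1 \leq u_2 - l_1^\prime$, so a termwise comparison again yields $a_U \leq b_U$. Combining the two endpoint inequalities establishes the containment.

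There is no substantive obstacle; the only point requiring care is tracking the direction of each inequality, since enlarging $[l_1, u_1]$ simultaneously pushes the lower endpoint of the $\findad$ output down and its upper endpoint up, and these two effects are driven by opposite ends ($l_1^\prime$ and $u_1^\prime$) of the enlarged interval. Once the endpoints are paired correctly, monotonicity of $\max$ closes both inequalities immediately.
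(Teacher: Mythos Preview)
Your proof is correct and follows exactly the approach the paper has in mind: the paper's own proof simply says the claim is ``almost self-evident by combining Eq.~\eqref{eq_FindAD} with the fact that $u_1^\prime \geq u_1$ and $l_1^\prime \leq l_1$,'' and you have spelled out precisely that termwise comparison in full detail.
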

	
	\begin{proof}
		This is almost self-evident by combining Eq. \eqref{eq_FindAD} with the fact that $u_1^\prime \geq u_1$ and $l_1^\prime \leq l_1$.
	\end{proof}
	
	\begin{lemma}
		\label{lm_loc_comparison}
		Given two set $\{a_i\}_{i=1}^n$ and $\{b_i\}_{i=1}^n$. If $a_i \geq b_i$ for each $i \in [n]$, then we have $a_{(j)} \geq b_{(j)}$ for any $j \in [n]$.
	\end{lemma}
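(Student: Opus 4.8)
The plan is to prove this monotonicity of order statistics under pointwise domination by lifting the inequality $a_i \geq b_i$ through the variational (max–min) characterization of the $j$-th largest value. First I would record the identity
\[
a_{(j)} = \max_{S \subseteq [n],\, |S| = j}\; \min_{i \in S} a_i,
\]
together with the analogous identity for $\{b_i\}$. To justify it, note that for any index set $S$ of size $j$ we have $\min_{i \in S} a_i \leq a_{(j)}$: otherwise all $j$ elements of $S$ would exceed $a_{(j)}$, forcing at least $j$ of the values to be strictly larger than the $j$-th largest, which is impossible. Equality is attained by taking $S$ to be the indices of the $j$ largest values, whose minimum is exactly $a_{(j)}$. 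This handles ties cleanly via the counting observation that at most $j-1$ values can be strictly larger than $a_{(j)}$.

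With both identities in hand, the lemma is immediate: for every fixed $S$ with $|S| = j$, the hypothesis $a_i \geq b_i$ gives $\min_{i \in S} a_i \geq \min_{i \in S} b_i$, and taking the maximum over all size-$j$ subsets preserves the inequality, so $a_{(j)} \geq b_{(j)}$ for every $j \in [n]$.

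Alternatively, I would give a direct contradiction argument that avoids stating the max–min identity explicitly. Suppose $a_{(j)} < b_{(j)}$ for some $j$, and consider $T = \{i : b_i \geq b_{(j)}\}$, which satisfies $|T| \geq j$ by definition of the $j$-th largest. Every $i \in T$ then obeys $a_i \geq b_i \geq b_{(j)} > a_{(j)}$, exhibiting at least $j$ of the $a_i$ strictly above $a_{(j)}$ and contradicting that at most $j-1$ values can strictly exceed the $j$-th largest. The main point is that there is no real obstacle here; the only subtlety is correctly accounting for ties, which both routes resolve through the same elementary bound (at least $j$ values are $\geq$ the $j$-th largest, at most $j-1$ are strictly greater). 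I would lean on the contradiction version as the most self-contained.
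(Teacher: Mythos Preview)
Your proposal is correct, and your contradiction argument is essentially the same as the paper's: both assume $a_{(j)} < b_{(j)}$, use that at least $j$ indices satisfy $b_i \geq b_{(j)}$, lift via $a_i \geq b_i$, and contradict the fact that at most $j-1$ of the $a_i$ can strictly exceed $a_{(j)}$. The max--min variational characterization you offer as an alternative is a valid and slightly different packaging, but the contradiction route you favor matches the paper's proof almost verbatim.
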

	
	\begin{proof}
		We prove the result through contradiction. Suppose $a_{(j)} < b_{(j)}$, then there exists a subset $S \subseteq [n]$ with $|S| \geq n-j+1$ such that $\forall i \in S, a_{i} < b_{(j)}$; this results in at most $j-1$ items among $\{a_i\}_{i=1}^{n}$ are greater than or equal to $b_{(j)}$. On the other side, since $a_i \geq b_i$ and $b_{(j)}$ is the $j$-th largest item among $\{b_i\}_{i=1}^n$, there should have at least $j$ items among $\{a_i\}_{i=1}^n$ being greater or equal to $b_{(j)}$, which leads to a contradiction.
	\end{proof}

\begin{lemma}
\label{lm_other_intersecting_confidence_interval}
Let $\mathcal{I}_{y_i, t} = [L_{y_i, t}, U_{y_i, t}]$ represents the intersecting confidence intervals of arms $i$. Suppose $\mathcal{I}_{y_{(m)}, t}$,  $\mathcal{I}_{\AD_i, t}$, $\mathcal{I}_{\AD_{(m)}, t}$ and $\mathcal{I}_{\theta, t}$ are generated from Algorithm \ref{algorithm_confidence interval} with input $\{\mathcal{I}_{y_i, t}\}_{i=1}^{n}$, then for any $t^\prime \leq t$, we have 
\begin{align*}
\mathcal{I}_{y_{(m)}, t}  \subseteq \mathcal{I}_{y_{(m)}, t^\prime}, & \quad \mathcal{I}_{\AD_i, t}  \subseteq \mathcal{I}_{\AD_i, t^\prime},\\
\mathcal{I}_{\AD_{(m)}, t}  \subseteq \mathcal{I}_{\AD_{(m)}, t^\prime}, & \quad \mathcal{I}_{\theta, t}  \subseteq \mathcal{I}_{\theta, t^\prime}.\\
\end{align*}
\end{lemma}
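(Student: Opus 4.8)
The plan is to show that nestedness of the intersecting confidence intervals propagates through each step of \cref{algorithm_confidence interval}, processing the derived quantities in the same order the algorithm constructs them: first $y_{(m)}$, then each $\AD_i$, then $\AD_{(m)}$, and finally $\theta$. Throughout it suffices to work at the level of endpoints and verify that each lower endpoint is non-decreasing in $t$ while each upper endpoint is non-increasing in $t$, since nestedness $\mathcal{I}_{q,t} \subseteq \mathcal{I}_{q,t'}$ (for $t' \le t$) is equivalent to these two monotonicities. The base case is immediate from the definition in Eq. \eqref{eq_intersecting_confidence_interval}: $L_{y_i, t} = \max_{t'' \le t} L_{y_i, t''}$ and $U_{y_i, t} = \min_{t'' \le t} U_{y_i, t''}$, so for $t' \le t$ we have $L_{y_i, t} \ge L_{y_i, t'}$ and $U_{y_i, t} \le U_{y_i, t'}$, i.e. $\mathcal{I}_{y_i, t} \subseteq \mathcal{I}_{y_i, t'}$ for every $i$. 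This endpoint monotonicity is what we feed into the construction.

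Next I would propagate to the median. Since $L_{y_i, t} \ge L_{y_i, t'}$ for all $i$, \cref{lm_loc_comparison} applied to the sets $\{L_{y_i,t}\}$ and $\{L_{y_i,t'}\}$ gives $\median\{L_{y_i,t}\} \ge \median\{L_{y_i,t'}\}$, that is $L_{y_{(m)},t} \ge L_{y_{(m)},t'}$. For the upper endpoint the pointwise inequality is reversed, so I apply \cref{lm_loc_comparison} instead to the sets $\{U_{y_i,t'}\}$ and $\{U_{y_i,t}\}$ (note $U_{y_i,t'} \ge U_{y_i,t}$), obtaining $\median\{U_{y_i,t'}\} \ge \median\{U_{y_i,t}\}$ and hence $U_{y_{(m)},t} \le U_{y_{(m)},t'}$. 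Together these yield $\mathcal{I}_{y_{(m)},t} \subseteq \mathcal{I}_{y_{(m)},t'}$.

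For the absolute deviations I would write $\mathcal{I}_{\AD_i, t} = \findad\bigl(\mathcal{I}_{y_i,t}, \mathcal{I}_{y_{(m)},t}\bigr)$, using the tightened lower endpoint noted in the footnote of \cref{lem:confidence_interval}. Having established $\mathcal{I}_{y_i,t} \subseteq \mathcal{I}_{y_i,t'}$ and $\mathcal{I}_{y_{(m)},t} \subseteq \mathcal{I}_{y_{(m)},t'}$, two applications of \cref{lm_FindAD} — once to shrink the first argument, then, by the stated symmetry of $\findad$, to shrink the second — give $\findad\bigl(\mathcal{I}_{y_i,t}, \mathcal{I}_{y_{(m)},t}\bigr) \subseteq \findad\bigl(\mathcal{I}_{y_i,t'}, \mathcal{I}_{y_{(m)},t'}\bigr)$, so $\mathcal{I}_{\AD_i,t} \subseteq \mathcal{I}_{\AD_i,t'}$. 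The MAD then follows exactly as the median did: from $L_{\AD_{(m)},t} = \median\{L_{\AD_i,t}\}$ and $U_{\AD_{(m)},t} = \median\{U_{\AD_i,t}\}$, the endpoint monotonicity of the $\AD_i$ intervals feeds into \cref{lm_loc_comparison} once more to give $\mathcal{I}_{\AD_{(m)},t} \subseteq \mathcal{I}_{\AD_{(m)},t'}$. Finally, since $L_{\theta,t} = L_{y_{(m)},t} + k\, L_{\AD_{(m)},t}$ and $U_{\theta,t} = U_{y_{(m)},t} + k\, U_{\AD_{(m)},t}$ with $k > 0$, the threshold endpoints inherit monotonicity directly from those of the median and MAD, yielding $\mathcal{I}_{\theta,t} \subseteq \mathcal{I}_{\theta,t'}$.

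There is no genuinely hard step here; the statement is a bookkeeping argument that chains \cref{lm_loc_comparison} and \cref{lm_FindAD} in the construction order. The one point requiring care is the direction reversal between upper and lower endpoints in the median and MAD steps: \cref{lm_loc_comparison} only compares order statistics of two sets dominated pointwise in a single direction, so for the upper endpoints one must apply it to the reversed pair of sets. Getting this orientation right is the only place a careless reader could slip.
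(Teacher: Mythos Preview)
Your proposal is correct and follows essentially the same approach as the paper: establish endpoint monotonicity for the intersecting intervals of individual arms, then propagate in construction order using \cref{lm_loc_comparison} for the median and MAD endpoints and \cref{lm_FindAD} (applied twice) for the $\AD_i$ intervals, with the threshold following directly. The paper handles the upper-endpoint direction by simply saying ``the other side is similar,'' whereas you spell out the reversal explicitly; otherwise the arguments are identical.
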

\begin{proof}
We first notice $[L_{y_i, t}, U_{y_i, t}] \subseteq [L_{y_i, t^\prime}, U_{y_i, t^\prime}]$ according to the construction of intersecting confidence intervals in Eq. \eqref{eq_intersecting_confidence_interval}.

To prove $\mathcal{I}_{y_{(m)}, t}  \subseteq \mathcal{I}_{y_{(m)}, t^\prime}$, we show $L_{y_{(m)}, t} \geq L_{y_{(m)}, t^\prime}$ here and the other side is similar: Since $L_{y_{(m)}, t} = {\median}\{ L_{y_i, t}\}$ and $L_{y_{(m)}, t^\prime} = {\median}\{ L_{y_i, t^\prime}\}$ and $L_{y_i, t} \geq L_{y_i, t^\prime}$, invoking Lemma \ref{lm_loc_comparison} with $j = m$ lead to the desired result.

To prove $\mathcal{I}_{\AD_i, t}  \subseteq \mathcal{I}_{\AD_i, t^\prime}$, we notice that $[L_{y_i, t}, U_{y_i, t}] \subseteq [L_{y_i, t^\prime}, U_{y_i, t^\prime}]$ and $[L_{y_{(m)}, t}, U_{y_{(m)}, t}] \subseteq [L_{y_{(m)}, t^\prime}, U_{y_{(m)}, t^\prime}]$. Thus, invoking Lemma \ref{lm_FindAD} twice we have $\mathcal{I}_{\AD_i, t}  = \findad([L_{y_i, t}, U_{y_i, t}], [L_{y_{(m)}, t}, U_{y_{(m)}, t}]) \subseteq \findad([L_{y_i, t^\prime}, U_{y_i, t^\prime}],[L_{y_{(m)}, t^\prime}, U_{y_{(m)}, t^\prime}]) = \mathcal{I}_{\AD_i, t^\prime}$.

The proof of $\mathcal{I}_{\AD_{(m)}, t}  \subseteq \mathcal{I}_{\AD_{(m)}, t^\prime}$ is similar to the proof of $\mathcal{I}_{y_{(m)}, t}  \subseteq \mathcal{I}_{y_{(m)}, t^\prime}$ after noticing $\mathcal{I}_{\AD_i, t}  \subseteq \mathcal{I}_{\AD_i, t^\prime}$; the proof of $\mathcal{I}_{\theta, t}  \subseteq \mathcal{I}_{\theta, t^\prime}$ is a direct consequence of $\mathcal{I}_{y_{(m)}, t}  \subseteq \mathcal{I}_{y_{(m)}, t^\prime}$, $\mathcal{I}_{\AD_{(m)}, t}  \subseteq \mathcal{I}_{\AD_{(m)}, t^\prime}$ and the construction of $\mathcal{I}_{\theta, t}$ described in Algorithm \ref{algorithm_confidence interval}.
\end{proof}

\subsection{Confidence Interval of $\theta$}

\begin{lemma}
	\label{lm_extended_ci_median}
	In {\tt ROAIElim}, at time $t$, we have 
	\begin{equation*}
	\mathcal{I}_{y_{(m)}, t} \subseteq \left[ {y}_{(m)} - 2 \beta_{t}, {y}_{(m)} + 2 \beta_{t}\right].
	\end{equation*}
\end{lemma}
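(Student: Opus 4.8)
The plan is to establish the two endpoint inclusions $L_{y_{(m)},t} \ge y_{(m)} - 2\beta_t$ and $U_{y_{(m)},t} \le y_{(m)} + 2\beta_t$ separately; since they are symmetric (reflecting the means about the median), I would write out only the lower endpoint and note that the upper one is analogous. Recall $L_{y_{(m)},t} = \median\{L_{y_i,t}\}$ is the $m$-th largest of the $n = 2m-1$ left endpoints. Hence it suffices to exhibit $m$ indices $i$ with $L_{y_i,t} \ge y_{(m)} - 2\beta_t$, since then at most $m-1$ of the left endpoints fall strictly below this value, and so the $m$-th largest—which is exactly $L_{y_{(m)},t}$—is at least $y_{(m)} - 2\beta_t$. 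The natural candidate set is $\{1,\dots,m\}$, the arms with $y_i \ge y_{(m)}$, of which there are exactly $m$.

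For each such arm I would split into two cases according to whether it is still in the median active set $A^{\median}_{E,t}$ at round $t$. If $i \in A^{\median}_{E,t}$, then because the median active sets are nested and decreasing and every active arm is sampled each round, $i$ has been pulled in each of rounds $1,\dots,t$, so $N_{i,t} = t$; \cref{lm_extended_ci} then gives $L_{y_i,t} \ge y_i - 2\beta_{N_{i,t}} = y_i - 2\beta_t \ge y_{(m)} - 2\beta_t$. If instead $i$ was eliminated from $A^{\median}_{E,t}$ at some earlier round $t' \le t$, I would argue that under the good event $W$ the elimination can only be ``upward'': since $y_i \ge y_{(m)}$, we have $U_{y_i,t'} \ge y_i \ge y_{(m)} \ge L_{y_{(m)},t'}$, ruling out $U_{y_i,t'} < L_{y_{(m)},t'}$, so the separation must be $L_{y_i,t'} > U_{y_{(m)},t'} \ge y_{(m)}$. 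Using that the intersecting confidence intervals of Eq.~\eqref{eq_intersecting_confidence_interval} make $L_{y_i,t}$ nondecreasing in $t$, I then get $L_{y_i,t} \ge L_{y_i,t'} > y_{(m)} \ge y_{(m)} - 2\beta_t$.

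Combining the two cases, every arm in $\{1,\dots,m\}$ satisfies $L_{y_i,t} \ge y_{(m)} - 2\beta_t$, which yields $L_{y_{(m)},t} \ge y_{(m)} - 2\beta_t$; the upper endpoint follows by the mirror argument on $\{m,\dots,n\}$, where eliminated arms separate ``downward'' and the intersecting intervals keep $U_{y_i,t}$ nonincreasing. The step I expect to be the main obstacle is precisely the eliminated-arm case: an arm that has dropped out of the median active set may have $N_{i,t} < t$, so the crude bound $y_i - 2\beta_{N_{i,t}}$ is too weak to deliver $\beta_t$. The resolution is to avoid $\beta$ altogether for these arms, relying instead on the sign of the separation together with the monotonicity of the intersecting intervals; making this dichotomy airtight—in particular verifying that, under $W$, an arm above the median can never separate downward—is where the care is needed.
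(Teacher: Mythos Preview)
Your proof is correct and takes essentially the same approach as the paper's: both split into active versus eliminated arms, use that active arms have $N_{i,t}=t$ together with \cref{lm_extended_ci}, and handle eliminated arms via the monotonicity of the intersecting confidence intervals under the good event $W$. The paper organizes the argument by partitioning $[n]$ into $S^{\median}_{top,t}$, $S^{\median}_{bottom,t}$, and $A^{\median}_{E,t}$ and then invoking \cref{lm_loc_comparison}, whereas you fix the candidate set $\{1,\dots,m\}$ up front and argue per arm, but the underlying ideas coincide.
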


\begin{proof}
	Recall $\mathcal{I}_{y_{(m)}, t} = [L_{y_{(m)}, t}, U_{y_{(m)}, t}]$. We only prove here that ${y}_{(m)} - 2 \beta_{t}  \leq L_{y_{(m)}, t}$ here, and the other side could be proved similarly.

	Let 
	\begin{equation*}
	S^{\median}_{top, t} = \{ i \in [n]: L_{y_i, t} > U_{y_{(m)}, t} \}, \quad	S^{\median}_{bottom, t} = \{ i \in [n]: U_{y_i, t} < L_{y_{(m)}, t} \}.
	\end{equation*}
	Due to the application of intersecting confidence intervals, one could see that $S^{\median}_{top, t}$ represents an \emph{identified}, whether at time $t$ or a previous time step $t^\prime < t$, subset of arms with means greater than $y_{(m)}$, and thus $|S^{\median}_{top, t}| \leq m-1$; similarly, $S^{\median}_{bottom, t}$ represents the \emph{identified} subset of arms with means smaller than $y_{(m)}$, and $|S^{\median}_{bottom, t}| \leq m-1$. Since $A^{\median}_{E, t}$ in Algorithm \ref{algorithm_ROAIElim} essentially represents the subset of arms un-distinguished from the median $y_{(m)}$ \emph{up to} time $t$, we have
	\begin{equation*}
	A^{\median}_{E, t} = [n] \backslash(S^{\median}_{top, t} \cup S^{\median}_{bottom, t}).
	\end{equation*}
	
	Suppose $|S^{\median}_{top, t}|=k_{top, t} \leq m-1$. We first notice that $S^{\median}_{top, t}$ contains $k_{top, t}$ arms with lower bounds greater than $L_{y_{(m)}, t}$: for any $i_a \in S^{\median}_{top, t}$, there exists $t^\prime \leq t$ such that $L_{y_{i_a}, t} \geq L_{y_{i_a}, t^\prime} > U_{y_{(m)}, t^\prime} \geq L_{y_{(m)}, t}$, where the first inequality comes from \cref{lm_other_intersecting_confidence_interval} and the last inequality comes from the fact that: if $U_{y_{(m)}, t^\prime} < L_{y_{(m)}, t}$, we have $\mathcal{I}_{y_{(m)}, t^\prime} \cap \mathcal{I}_{y_{(m)}, t} = \emptyset$ contradicting with event $W$.
	
	Suppose $|S^{\median}_{bottom, t}|=k_{bottom, t} \leq m-1$. We could similarly notice that $S^{\median}_{bottom, t}$ contains $k_{bottom, t}$ arms with lower bounds smaller than $L_{y_{(m)}, t}$: for any $i_b \in S^{\median}_{bottom, t}$, there exist $t^\prime \leq t$ such that $L_{y_{i_b}, t} \leq U_{y_{i_b}, t^\prime} < L_{y_{(m)}, t^\prime} \leq L_{y_{(m)}, t}$, where the first inequality comes from the fact that $\mathcal{I}_{y_{i_b}, t} \cap \mathcal{I}_{y_{i_b}, t^\prime} \neq \emptyset$ and the last inequality comes from \cref{lm_other_intersecting_confidence_interval}.
	
	Thus, to identify $L_{y_{(m)}, t}$, we only need to identify the $(m-k_{top, t})$-th largest lower bound among arms in $A^{\median}_{E, t}$. For $i \in A^{\median}_{E, t}$, we have 
	\begin{equation*}
	y_i - 2 \beta_{t} = y_i - 2 \beta_{N_{i, t}} \leq L_{y_i,t }
	\end{equation*}
	according to Lemma \ref{lm_extended_ci} and the fact that arms in $A^{\median}_{E, t}$ are pulled $t$ times as $A^{\median}_{E, t} \subseteq A_{E, t}$.
	
	Invoking Lemma \ref{lm_loc_comparison} with $j = m-k_{top, t}$ concludes the proof as $y_{(m)} - 2 \beta_t$ is the $(m-k_{top, t})$-th largest quantity among $\{y_i - 2 \beta_{t}\}$ for $i \in A^{\median}_{E, t}$.
\end{proof}

\begin{lemma}
	\label{lm_extended_ci_AD}
	In {\tt ROAIElim}, at time $t$, for $\forall i \in A^{\MAD}_{E, t}$, we have 
	\begin{equation*}
	\mathcal{I}_{\AD_{i}, t} \subseteq \left[ {\AD_{i}} - 4 \beta_{t}, {\AD_{i}} + 4 \beta_{t} \right].
	\end{equation*}
\end{lemma}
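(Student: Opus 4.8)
The plan is to bound the half-width of $\mathcal{I}_{\AD_i,t}$ by $4\beta_t$ for arms still active in the MAD set, building on the two ingredients already available: \cref{lm_extended_ci}, which shows $\mathcal{I}_{y_i,t}\subseteq[y_i-2\beta_{N_{i,t}},y_i+2\beta_{N_{i,t}}]$, and \cref{lm_extended_ci_median}, which gives $\mathcal{I}_{y_{(m)},t}\subseteq[y_{(m)}-2\beta_t,y_{(m)}+2\beta_t]$. The key structural observation is that for $i\in A^{\MAD}_{E,t}$ we also have $i\in A_{E,t}$ (since the sampling set is the union of the three active sets), so arm $i$ has been pulled $t$ times and $\beta_{N_{i,t}}=\beta_t$. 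Thus both the individual-arm interval and the median interval have half-width at most $2\beta_t$ around their true centers $y_i$ and $y_{(m)}$ respectively.

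First I would recall that $\mathcal{I}_{\AD_i,t}=\findad\bigl(\mathcal{I}_{y_i,t},\,\mathcal{I}_{y_{(m)},t}\bigr)$ by the construction in \cref{algorithm_confidence interval} together with \cref{def_FindAD}. Then I would invoke \cref{lm_FindAD} twice, as in the proof of \cref{lm_other_intersecting_confidence_interval}: first replacing $\mathcal{I}_{y_i,t}$ by its superset $[y_i-2\beta_t,y_i+2\beta_t]$, then replacing $\mathcal{I}_{y_{(m)},t}$ by its superset $[y_{(m)}-2\beta_t,y_{(m)}+2\beta_t]$, to conclude
\begin{equation*}
\mathcal{I}_{\AD_i,t}\subseteq\findad\bigl([y_i-2\beta_t,y_i+2\beta_t],\,[y_{(m)}-2\beta_t,y_{(m)}+2\beta_t]\bigr).
\end{equation*}
It then remains to compute the right-hand side explicitly and show it is contained in $[\AD_i-4\beta_t,\AD_i+4\beta_t]$, where $\AD_i=|y_i-y_{(m)}|$.

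The final step is the explicit $\findad$ computation on two symmetric intervals of half-width $2\beta_t$ centered at $y_i$ and $y_{(m)}$. By \cref{eq_FindAD}, the upper endpoint is $\max\{(y_i+2\beta_t)-(y_{(m)}-2\beta_t),\,(y_{(m)}+2\beta_t)-(y_i-2\beta_t)\}=|y_i-y_{(m)}|+4\beta_t=\AD_i+4\beta_t$, and the lower endpoint is $\max\{0,\,|y_i-y_{(m)}|-4\beta_t\}\geq\AD_i-4\beta_t$, which gives the claimed inclusion. I expect this arithmetic to be entirely routine; the only point requiring care is the justification that $N_{i,t}=t$ for $i\in A^{\MAD}_{E,t}$ so that $\beta_{N_{i,t}}=\beta_t$ can be applied to the individual-arm interval as well as the median interval. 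This is the main (mild) obstacle, and it follows directly from $A^{\MAD}_{E,t}\subseteq A_{E,t}$ and the sampling rule of \cref{algorithm_ROAIElim}, exactly paralleling the argument at the end of the proof of \cref{lm_extended_ci_median}.
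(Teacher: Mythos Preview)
Your proposal is correct and follows essentially the same approach as the paper: apply \cref{lm_extended_ci} and \cref{lm_extended_ci_median} to obtain the extended intervals for $y_i$ and $y_{(m)}$, invoke \cref{lm_FindAD} twice to pass to $\findad$, compute the result explicitly, and use $A^{\MAD}_{E,t}\subseteq A_{E,t}$ to get $\beta_{N_{i,t}}=\beta_t$. The only cosmetic difference is that the paper carries $\beta_{N_{i,t}}$ through the $\findad$ computation and substitutes $\beta_{N_{i,t}}=\beta_t$ at the end, whereas you substitute it at the beginning.
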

\begin{proof}
	We are first going to quantify the \emph{extended} confidence interval of $\AD_{i}$, namely, $[\tilde{L}_{\AD_i, t}, \tilde{U}_{\AD_i, t}] \supseteq [L_{\AD_i, t}, U_{\AD_i, t}] $. 
	
	According to Lemma \ref{lm_extended_ci_median} and Lemma \ref{lm_extended_ci}, we have $\left[ {y}_{(m)} - 2 \beta_{t}, {y}_{(m)} + 2 \beta_t\right] \supseteq \mathcal{I}_{y_{(m)}, t}$ and $[y_i - 2 \beta_{N_{i, t}}, y_i + 2 \beta_{N_{i, t}}] \supseteq \mathcal{I}_{y_i, t}$. Feeding $\left[ {y}_{(m)} - 2 \beta_{t}, {y}_{(m)} + 2 \beta_t\right]$ and $[y_i - 2 \beta_{N_{i, t}}, y_i + 2 \beta_{N_{i, t}}]$ into the ${\findad}$ operator and apply Lemma \ref{lm_FindAD} twice leads to
	\begin{equation*}
	[\tilde{L}_{\AD_i, t}, \tilde{U}_{\AD_i, t}] :=[\AD_i - 2 \beta_t - 2 \beta_{N_{i, t}} , \AD_i + 2 \beta_t + 2 \beta_{N_{i, t}}]  \supseteq	[L_{\AD_i, t}, U_{\AD_i, t}].
	\end{equation*}
	We conclude the proof with $\beta_{N_{i, t}} = \beta_t$ for $\forall i \in A^{\MAD}_{E, t}$.
\end{proof}

\begin{lemma}
	\label{lm_extended_ci_MAD}
	In {\tt ROAIElim}, at time $t$, we have 
	\begin{equation*}
	\mathcal{I}_{\AD_{(m)}, t} \subseteq \left[ {\AD_{(m)}} - 4 \beta_{t}, {\AD_{(m)}} + 4 \beta_{t} \right].
	\end{equation*}
\end{lemma}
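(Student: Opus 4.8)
The plan is to mirror the proof of \cref{lm_extended_ci_median} almost verbatim, replacing the arm means $y_i$ by the absolute deviations $\AD_i$, the median $y_{(m)}$ by the MAD $\AD_{(m)}$, and invoking the per-arm bound of \cref{lm_extended_ci_AD} (half-width $4\beta_t$) in place of \cref{lm_extended_ci} (half-width $2\beta_t$). I would prove only the lower side $\AD_{(m)} - 4\beta_t \le L_{\AD_{(m)},t}$; the bound $U_{\AD_{(m)},t} \le \AD_{(m)} + 4\beta_t$ follows by the symmetric argument.

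First I would introduce the identified top/bottom sets for the MAD,
\[
S^{\MAD}_{top,t} = \{ i \in [n] : L_{\AD_i,t} > U_{\AD_{(m)},t}\}, \qquad S^{\MAD}_{bottom,t} = \{ i \in [n] : U_{\AD_i,t} < L_{\AD_{(m)},t}\},
\]
and argue, exactly as in \cref{lm_extended_ci_median}, that under event $W$ each has cardinality at most $m-1$ (since $i\in S^{\MAD}_{top,t}$ forces $\AD_i \ge L_{\AD_i,t} > U_{\AD_{(m)},t} \ge \AD_{(m)}$, and at most $m-1$ arms can have AD strictly above the MAD) and that $A^{\MAD}_{E,t} = [n]\setminus(S^{\MAD}_{top,t}\cup S^{\MAD}_{bottom,t})$. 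The one ingredient beyond the plain-median case is that the ``identified at some $t'\le t$'' reasoning must now be applied to the derived quantity $\AD_i$ rather than to $y_i$; this is exactly what the monotonicity $\mathcal{I}_{\AD_i,t}\subseteq\mathcal{I}_{\AD_i,t'}$ and $\mathcal{I}_{\AD_{(m)},t}\subseteq\mathcal{I}_{\AD_{(m)},t'}$ of \cref{lm_other_intersecting_confidence_interval} supplies (and which in turn rests on \cref{lm_FindAD}). With these facts, every arm in $S^{\MAD}_{top,t}$ satisfies $L_{\AD_i,t} > U_{\AD_{(m)},t} \ge L_{\AD_{(m)},t}$ and every arm in $S^{\MAD}_{bottom,t}$ satisfies $L_{\AD_i,t} < L_{\AD_{(m)},t}$, so writing $k_{top,t}=|S^{\MAD}_{top,t}|$, the median lower bound $L_{\AD_{(m)},t}=\median\{L_{\AD_i,t}\}$ coincides with the $(m-k_{top,t})$-th largest value of $\{L_{\AD_i,t}\}_{i\in A^{\MAD}_{E,t}}$.

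Finally, every active arm $i\in A^{\MAD}_{E,t}\subseteq A_{E,t}$ has been pulled $t$ times, so \cref{lm_extended_ci_AD} gives $\AD_i - 4\beta_t \le L_{\AD_i,t}$ for all such $i$. Applying \cref{lm_loc_comparison} to the families $\{\AD_i - 4\beta_t\}$ and $\{L_{\AD_i,t}\}$ over $i\in A^{\MAD}_{E,t}$ with $j=m-k_{top,t}$ shows that $L_{\AD_{(m)},t}$ is at least the $(m-k_{top,t})$-th largest value of $\{\AD_i-4\beta_t\}_{i\in A^{\MAD}_{E,t}}$; since all active arms share the same $\beta_t$, this ordering matches that of $\{\AD_i\}$, and removing the $k_{top,t}$ above-MAD and $k_{bottom,t}$ below-MAD arms leaves the overall median in rank $m-k_{top,t}$, so the value equals $\AD_{(m)}-4\beta_t$. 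I expect the main obstacle to be the careful bookkeeping of the cardinality bounds and the order-statistic identification of $L_{\AD_{(m)},t}$, since—unlike for the plain median—these steps depend on monotonicity of confidence intervals for a quantity ($\AD_i$) that is built jointly from arm $i$'s interval and the median's interval; once \cref{lm_other_intersecting_confidence_interval} and \cref{lm_extended_ci_AD} are invoked, the remainder is routine.
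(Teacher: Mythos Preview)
Your proposal is correct and mirrors the paper's own proof essentially step for step: the paper likewise defines $S^{\MAD}_{top,t}$ and $S^{\MAD}_{bottom,t}$, uses \cref{lm_other_intersecting_confidence_interval} for the monotonicity needed to identify $A^{\MAD}_{E,t}=[n]\setminus(S^{\MAD}_{top,t}\cup S^{\MAD}_{bottom,t})$, invokes \cref{lm_extended_ci_AD} to obtain $\AD_i-4\beta_t\le L_{\AD_i,t}$ on the active set, and concludes via \cref{lm_loc_comparison} with $j=m-k_{top,t}$. Your direct argument that $i\in S^{\MAD}_{top,t}$ implies $L_{\AD_i,t}>U_{\AD_{(m)},t}\ge L_{\AD_{(m)},t}$ is in fact slightly cleaner than the paper's detour through a prior time $t'$, but the structure and ingredients are identical.
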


\begin{proof}
	The proof of this Lemma largely follows from the proof of Lemma \ref{lm_extended_ci_median}. As before, we will give a proof for ${\AD_{(m)}} - 4 \beta_{t} \leq L_{\AD_{(m)}, t}$, and the other side could be proved similarly.
	
	Let
	\begin{equation*}
	S^{\MAD}_{top, t} = \{ i \in [n]: L_{\AD_i, t} > U_{\AD_{(m)}, t} \}, \quad	S^{\MAD}_{bottom, t} = \{ i \in [n]: U_{\AD_i, t} < L_{\AD_{(m)}, t} \}.
	\end{equation*}
	
	According to Lemma \ref{lm_other_intersecting_confidence_interval}, one could see that $S^{\MAD}_{top, t}$ represents an \emph{identified}, whether at time $t$ or a previous time step $t^\prime < t$, subset of arms with absolute deviations greater than $\AD_{(m)}$, and thus $|S^{\MAD}_{top, t}| \leq m-1$; similarly, $S^{\MAD}_{bottom, t}$ represents the \emph{identified} subset of arms with absolute deviations smaller than $\AD_{(m)}$, and $|S^{\MAD}_{bottom, t}| \leq m-1$. Since $A^{\MAD}_{E, t}$ in Algorithm \ref{algorithm_ROAIElim} essentially represents the subset of arms un-distinguished from the median absolute deviation $\AD_{(m)}$ \emph{up to} time $t$, we have
	\begin{equation*}
	A^{\MAD}_{E, t} = [n] \backslash(S^{\MAD}_{top, t} \cup S^{\MAD}_{bottom, t}).
	\end{equation*}

	Suppose $|S^{\MAD}_{top, t}|=k_{top, t} \leq m-1$. We first notice that $S^{\MAD}_{top, t}$ contains $k_{top, t}$ arms with lower bounds on absolute deviation greater than $L_{\AD_{(m)}, t}$: for any $i_a \in S^{\MAD}_{top, t}$, there exists $t^\prime \leq t$ such that $L_{\AD_{i_a}, t} \geq L_{\AD_{i_a}, t^\prime} > U_{\AD_{(m)}, t^\prime} \geq L_{\AD_{(m)}, t}$, where the first inequality comes from Lemma \ref{lm_other_intersecting_confidence_interval} and the last inequality come from the fact that: if $U_{\AD_{(m)}, t^\prime} < L_{\AD_{(m)}, t}$, we have $\mathcal{I}_{\AD_{(m)}, t^\prime} \cap \mathcal{I}_{\AD_{(m)}, t} = \emptyset$ contradicting with event $W$.
	
	Suppose $|S^{\MAD}_{bottom, t}|=k_{bottom, t} \leq m-1$. We could similarly notice that $S^{\MAD}_{bottom, t}$ contains $k_{bottom, t}$ arms with lower bounds smaller than $L_{\AD_{(m)}, t}$: for any $i_b \in S^{\MAD}_{bottom, t}$, there exist $t^\prime \leq t$ such that $L_{\AD_{i_b}, t} \leq U_{\AD_{i_b}, t^\prime} < L_{\AD_{(m)}, t^\prime} \leq L_{\AD_{(m)}, t}$, where the first inequality comes from the fact that $\mathcal{I}_{\AD_{i_b}, t} \cap \mathcal{I}_{\AD_{i_b}, t^\prime} \neq \emptyset$ and the last inequality comes from Lemma \ref{lm_other_intersecting_confidence_interval}.
	
	Thus, to identify $L_{\AD_{(m)}, t}$, we only need to identify the $(m-k_{top, t})$-th largest lower bound among arms in $A^{\MAD}_{E, t}$. For $i \in A^{\MAD}_{E, t}$, according to Lemma \ref{lm_extended_ci_AD}, we have 
	\begin{equation*}
    \AD_i - 4 \beta_t  = \AD_i - 2 \beta_t - 2 \beta_{N_{i, t}} \leq L_{\AD_{i}, t}.
    \end{equation*}
    Invoking Lemma \ref{lm_loc_comparison} with $j = m-k_t$ and with respect to $i \in  A^{\MAD}_{E, t}$ leads to the desired result as $\AD_{(m)} - 4 \beta_t$ is the $(m - k_{top, t})$-th largest quantity among $\{\AD_i - 4 \beta_t\}$ for $i \in A^{\MAD}_{E, t}$.
\end{proof}

\begin{lemma}
	\label{lm_extended_ci_threshold}
	In {\tt ROAIElim}, at time $t$, we have 
	\begin{equation*}
	\mathcal{I}_{\theta, t} \subseteq \left[ \theta - (2 + 4 k) \beta_{t}, \theta + (2 + 4 k) \beta_{t} \right].
	\end{equation*}
\end{lemma}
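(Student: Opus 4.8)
The plan is to combine the two extended confidence interval bounds just established—Lemma \ref{lm_extended_ci_median} for the median and Lemma \ref{lm_extended_ci_MAD} for the MAD—through the additive construction of $\mathcal{I}_{\theta, t}$ in \cref{algorithm_confidence interval}. Recall that the threshold CI is built as $L_{\theta, t} = L_{y_{(m)}, t} + k \cdot L_{\AD_{(m)}, t}$ and $U_{\theta, t} = U_{y_{(m)}, t} + k \cdot U_{\AD_{(m)}, t}$, so the endpoints of $\mathcal{I}_{\theta, t}$ decompose linearly into the endpoints of $\mathcal{I}_{y_{(m)}, t}$ and $\mathcal{I}_{\AD_{(m)}, t}$. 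The strategy is simply to substitute the deviation bounds for these two pieces and collect the $\beta_t$ terms.

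First I would bound the lower endpoint. From Lemma \ref{lm_extended_ci_median} we have $L_{y_{(m)}, t} \geq y_{(m)} - 2\beta_t$, and from Lemma \ref{lm_extended_ci_MAD} we have $L_{\AD_{(m)}, t} \geq \AD_{(m)} - 4\beta_t$. Since $k \geq 1 > 0$, multiplying the second bound by $k$ preserves the inequality, and adding the two yields
\begin{equation*}
L_{\theta, t} \geq (y_{(m)} - 2\beta_t) + k(\AD_{(m)} - 4\beta_t) = \theta - (2 + 4k)\beta_t,
\end{equation*}
where the final equality uses $\theta = y_{(m)} + k \cdot \AD_{(m)}$ from \cref{eq_robust_threshold}.

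The upper endpoint is entirely symmetric. The upper bounds $U_{y_{(m)}, t} \leq y_{(m)} + 2\beta_t$ and $U_{\AD_{(m)}, t} \leq \AD_{(m)} + 4\beta_t$ combine in the same fashion to give $U_{\theta, t} \leq \theta + (2 + 4k)\beta_t$. Together these two endpoint inequalities establish the claimed containment $\mathcal{I}_{\theta, t} = [L_{\theta, t}, U_{\theta, t}] \subseteq [\theta - (2 + 4k)\beta_t, \theta + (2 + 4k)\beta_t]$.

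I do not anticipate any substantive obstacle: the lemma is a direct corollary of the two preceding deviation bounds, and all of the genuine work was carried out in proving Lemma \ref{lm_extended_ci_median} and Lemma \ref{lm_extended_ci_MAD}. The only point requiring care is tracking the factor $k$ through the MAD term and confirming $k \geq 1$ so that multiplication does not flip the inequalities; it is precisely the $4k\beta_t$ contribution, added to the $2\beta_t$ coming from the median, that produces the $(2 + 4k)$ coefficient in the final interval.
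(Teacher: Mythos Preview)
Your proposal is correct and follows exactly the same approach as the paper's own proof, which simply states that combining Lemma~\ref{lm_extended_ci_median} and Lemma~\ref{lm_extended_ci_MAD} with Eq.~\eqref{eq_robust_threshold} immediately gives the result. You have merely written out the endpoint arithmetic that the paper leaves implicit.
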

\begin{proof}
Combining Lemma \ref{lm_extended_ci_median} and \ref{lm_extended_ci_MAD} with Eq. \eqref{eq_robust_threshold} immediately gives the desired result.
\end{proof}

\subsection{Sample complexity analysis}

The sample complexity of \roaielim could be characterized in the following theorem.

\begin{theorem}
	\label{thm_complexity_Elim}
	With probability of at least $1 - \delta$, the sample complexity of \roaielim is upper bounded by 
	\begin{equation*}
O \left(  \sum_{i=1}^n   \frac{ \log \left( n / \delta \tilde\Delta_i^{*} \right)}{(\tilde\Delta_i^{*})^2}\right),
\end{equation*}
		where	
	\begin{equation*}
	\tilde{\Delta}^*_i = \max \{\Delta_*^{\theta}/(1+k),  \min \{ \Delta_i^{\theta}/(1+k), \Delta_i^{\median}, \Delta_i^{\MAD} \} \}.
	\end{equation*}
\end{theorem}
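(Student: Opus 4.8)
The plan is to bound, for each arm $i \in [n]$, the number of pulls it receives before it is permanently eliminated from the union active set $A_{E,t}$, and then sum these bounds over all arms. The key structural fact is that an arm $i$ is sampled at round $t$ only if it lies in at least one of $A^{\median}_{E,t}$, $A^{\MAD}_{E,t}$, or $A^{\theta}_{E,t}$, so I need to show each of these three memberships forces enough shrinkage to eliminate $i$ once it has been pulled $O(\log(n/\delta\tilde\Delta^*_i)/(\tilde\Delta^*_i)^2)$ times.

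First I would invoke the confidence-interval width lemmas already established: by \cref{lm_extended_ci}, \cref{lm_extended_ci_MAD}, and \cref{lm_extended_ci_threshold}, at time $t$ the CIs of $y_i$, $\AD_{(m)}$, $y_{(m)}$, and $\theta$ all have half-width bounded by $O(k\beta_t)$ (the threshold CI being the widest, scaling like $(2+4k)\beta_t$). The elimination criteria are: $i$ leaves $A^{\median}_{E,t}$ once $\mathcal{I}_{y_i,t}\cap\mathcal{I}_{y_{(m)},t}=\emptyset$, leaves $A^{\MAD}_{E,t}$ once $\mathcal{I}_{\AD_i,t}\cap\mathcal{I}_{\AD_{(m)},t}=\emptyset$, and leaves $A^{\theta}_{E,t}$ once $\mathcal{I}_{y_i,t}\cap\mathcal{I}_{\theta,t}=\emptyset$. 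For each criterion I would show that two CIs of half-width $c\beta_t$ centered at points separated by gap $\Delta$ become disjoint as soon as the total width drops below $\Delta$, i.e. once $\beta_t \lesssim \Delta/c$. Using \cref{remark_sample_needed}, this happens after $t = O(\log(n/\delta\,\Delta)/\Delta^2)$ pulls. Applying this with $\Delta = \Delta_i^{\median}$ (width $O(\beta_t)$), $\Delta=\Delta_i^{\MAD}$ (width $O(\beta_t)$ by \cref{lm_extended_ci_AD}), and $\Delta=\Delta_i^\theta$ (width $O(k\beta_t)$, so the effective gap is $\Delta_i^\theta/(1+k)$) gives exactly the three quantities appearing inside the $\min$ in $\tilde\Delta^*_i$.

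The subtlety is the $\max\{\Delta^\theta_*/(1+k),\,\cdot\}$ outer term, which reflects the global stopping condition: the algorithm halts when $A^\theta_{E,t}=\emptyset$, and no arm can be sampled past the round where \emph{every} arm's CI has separated from the threshold CI. That universal stopping happens once $\beta_t \lesssim \Delta^\theta_*/(1+k)$, i.e. after $O(\log(n/\delta\,\Delta^\theta_*)/(\Delta^\theta_*/(1+k))^2)$ rounds, bounding the pulls of \emph{any} surviving arm regardless of its individual median/MAD gaps. Taking the max of the per-arm elimination time and this global ceiling yields $\tilde\Delta^*_i$, and summing $O(\log(n/\delta\,\tilde\Delta^*_i)/(\tilde\Delta^*_i)^2)$ over $i\in[n]$ gives the claimed bound; finite stopping follows since each $\tilde\Delta^*_i>0$.

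The main obstacle I anticipate is handling the fact that the MAD and threshold boundaries are not tied to a single fixed arm: the identity of the median and MAD arms may change over time, and the CI widths of $\AD_{(m)}$ and $\theta$ compound the uncertainty of many arms at once. The width lemmas (\cref{lm_extended_ci_median}--\cref{lm_extended_ci_threshold}) are precisely what tames this, but applying them correctly requires the intersecting-confidence-interval machinery of \cref{appendix_intersecting_confidence_interval} and the counting argument via \cref{lm_loc_comparison} to guarantee that at most $m-1$ arms lie above (resp. below) each boundary, so that an arm still in $A^{\median}_{E,t}$ really is pulled $t$ times and thus enjoys the $\beta_t$ width. Making the disjointness-implies-elimination step airtight for the $\AD$ case, where the arm's own CI and the MAD CI can both be moving, is where I would spend the most care.
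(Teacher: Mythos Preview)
Your proposal is correct and follows essentially the same route as the paper: bound the pulls of each arm by the minimum of (i) the time to be eliminated from all three active sets $A^{\median}_{E,t}$, $A^{\MAD}_{E,t}$, $A^{\theta}_{E,t}$ via the width lemmas \cref{lm_extended_ci}--\cref{lm_extended_ci_threshold}, and (ii) the global termination time governed by $\Delta^\theta_*/(1+k)$, then sum over $i$. One wording slip: where you write ``taking the max of the per-arm elimination time and this global ceiling,'' you mean the \emph{min} of the two times (equivalently the \emph{max} of the corresponding gaps), which is what actually produces $\tilde\Delta^*_i$; your conclusion is correct despite the phrasing.
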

\begin{proof}
    \roaielim stops sampling an arm $i$ if either the algorithm stops, or if arm $i$ is eliminated from the active set $A_{E,t}$. We analyze the sample complexity of both these events. The number of samples of arm i is then the minimum of these two sample complexities.
	
	\roaielim stops when $A^{\theta}_{E, t} = \emptyset$. When $A^{\theta}_{E, t}  \neq \emptyset$, for all $i \in A^{\theta}_{E, t} $, we have $\beta_{N_{i, t}} = \beta_t$. Notice that, based on Lemma \ref{lm_extended_ci} and \ref{lm_extended_ci_threshold}, when $(2 + 4 k) \beta_{t} + 2 \beta_{t} < \Delta_i^{\theta} $, $i \notin A^{\theta}_{E, t} $. Remark \ref{remark_sample_needed} immediately shows that there exists a constant $C$ such that when $N_{i, t} \geq C \frac{ (1+k)^2\log \left( n (1+k) / \delta \Delta_i^{\theta} \right)}{(\Delta_i^{\theta})^2}$, arm $i$ is guaranteed to be expelled from $A^{\theta}_{E, t} $. As another consequence, no arm will be pulled more than $C \frac{ (1+k)^2\log \left( n (1+k) / \delta \Delta_*^{\theta} \right)}{(\Delta_*^{\theta})^2}$ times as that's when the algorithm stops, i.e., $A^{\theta}_{E, t} = \emptyset$.
	
    We next calculate the number of samples before $i \notin A_{E,t}$. Note that $A_{E, t} = A^{\median}_{E, t} \cup A^{\MAD}_{E, t} \cup A^{\theta}_{E, t}$, thus we only need to further consider when arm $i$ is out of set $A^{\median}_{E, t}$ and set $A^{\MAD}_{E, t}$. Based on Lemma \ref{lm_extended_ci} and \ref{lm_extended_ci_median}, we know that when $2\beta_t + 2\beta_t < \Delta^{\median}_i$, we have $i \notin A^{\median}_{E, t}$. Remark \ref{remark_sample_needed} shows that there exists a constant $C$ such that when $N_{i, t}  \geq C \frac{\log (n / \delta \Delta_i^{\median})}{(\Delta_i^{\median})^2}$, we have $2\beta_t + 2\beta_t < \Delta^{\median}_i $, i.e., $i \notin A^{\median}_{E, t}$. In a similar manner, invoking Lemma \ref{lm_extended_ci_AD} and Lemma \ref{lm_extended_ci_MAD}, we have $i \notin A^{\MAD}_{E, t}$ when $N_{i, t} \geq C \frac{\log (n / \delta \Delta_i^{\MAD})}{(\Delta_i^{\MAD})^2}$.
	
	To summarize, the total number of pulls on arm $i \in [n]$ could be upper bounded by 
	\begin{equation*}
	O \left(  \sum_{i=1}^n   \frac{ \log \left( n / \delta \tilde\Delta_i^{*} \right)}{(\tilde\Delta_i^{*})^2}\right).
	\end{equation*}

\end{proof}

\section{Sample Complexity Analysis of \roailucb}
\label{appendix_ROAILucb}

We first prove some supporting Lemmas in Appendix \ref{appendix_ROAILucb_supporting}, \ref{appendix_ROAILucb_outlier}, \ref{appendix_ROAILucb_median} and \ref{appendix_ROAILucb_MAD}, and then move to the proof of sample complexity in Appendix \ref{appendix_ROAILucb_sample_complexity}. As before, we will prove sample complexity upper bound under the good event $W$, which happens with probability of at least $1-\delta$. During the analysis, for any quantity indexed by two arguments $q$ and $t$ (time step), if $q$ itself already indicates the time step, we will simply drop the second argument $t$. For example, we will simplify $L_{y_{l_{\theta, t}}, t}$ as $L_{y_{l_{\theta, t}}}$.

\subsection{Supporting Lemma and Notation}
\label{appendix_ROAILucb_supporting}

Recall that we assume the number of arms $n = 2m - 1$ is odd, $y_i \geq y_{i+1}$ and the set of outlier arms is $S_o = \{1, \dots, n_1\}$. In the identifiable case, i.e., $y_i \neq \theta$, we then know $\theta \in (y_{n_1}, y_{n_1 + 1})$. 

\begin{lemma}
\label{lm_length_confidence_bound_AD}
For any $i \in [n]$ and $t \in \mathbb{N}$, we have ${\len}(\mathcal{I}_{\AD_i, t}) \leq {\len}(\mathcal{I}_{y_i, t}) + {\len}(\mathcal{I}_{y_{(m)}, t})$.
\end{lemma}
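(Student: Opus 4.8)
The plan is to reduce the statement to a one-line algebraic identity about the endpoints of the two input intervals. Write $[a,b] := \mathcal{I}_{y_i,t} = [L_{y_i,t}, U_{y_i,t}]$ and $[c,d] := \mathcal{I}_{y_{(m)},t} = [L_{y_{(m)},t}, U_{y_{(m)},t}]$, so that $\len(\mathcal{I}_{y_i,t}) = b-a \geq 0$ and $\len(\mathcal{I}_{y_{(m)},t}) = d-c \geq 0$. By \cref{def_FindAD} (equivalently, the construction in \cref{algorithm_confidence interval}), $\mathcal{I}_{\AD_i,t} = \findad([a,b],[c,d])$ has upper endpoint $U_{\AD_i,t} = \max\{b-c,\, d-a\}$ and lower endpoint $L_{\AD_i,t} = \max\{0,\, a-d,\, c-b\}$, so $\len(\mathcal{I}_{\AD_i,t}) = U_{\AD_i,t} - L_{\AD_i,t}$.

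First I would treat the version of the lower bound \emph{without} the clamp at $0$, i.e.\ $L_{\AD_i,t} = \max\{a-d,\, c-b\}$. Here I would use the elementary identities $-\max\{a-d,\, c-b\} = \min\{d-a,\, b-c\}$ and $\max\{x,y\} + \min\{x,y\} = x+y$. Taking $x = b-c$ and $y = d-a$ gives
\begin{equation*}
\len(\mathcal{I}_{\AD_i,t}) = \max\{b-c,\, d-a\} + \min\{b-c,\, d-a\} = (b-c)+(d-a) = (b-a)+(d-c),
\end{equation*}
which equals $\len(\mathcal{I}_{y_i,t}) + \len(\mathcal{I}_{y_{(m)},t})$ exactly, so the claim holds with equality in this case.

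To get the stated inequality for the clamped definition $L_{\AD_i,t} = \max\{0,\, a-d,\, c-b\}$, I would simply note that clamping can only raise $L_{\AD_i,t}$ and hence can only shrink $\len(\mathcal{I}_{\AD_i,t})$ relative to the unclamped value computed above; the bound then follows immediately. (Equivalently, one can argue by cases: when $[a,b]$ and $[c,d]$ are disjoint both candidate lower endpoints are already nonnegative, the clamp is inactive, and equality persists; when they overlap, $a-d \le 0$ and $c-b \le 0$ force $L_{\AD_i,t}=0$, and one checks directly that $\max\{b-c,\, d-a\} \le (b-a)+(d-c)$ using $a \le d$ and $c \le b$.)

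I do not expect a real obstacle here, since the result is essentially the additivity of interval lengths under the $\findad$ map. The only two points requiring care are (i) reading off the two endpoints of $\findad([a,b],[c,d])$ correctly, and (ii) remembering that the footnote to \cref{algorithm_confidence interval} requires the bound to hold for \emph{both} the clamped and unclamped definitions of $L_{\AD_i,t}$ — which is precisely why I separate the exact-equality computation for the unclamped case from the short monotonicity remark that handles the clamp.
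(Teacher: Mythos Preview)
Your proof is correct and follows essentially the same approach as the paper's: a direct algebraic computation showing that the unclamped $\findad$ interval has length exactly $\len(\mathcal{I}_{y_i,t})+\len(\mathcal{I}_{y_{(m)},t})$, after which the clamp at $0$ can only shorten it. The only cosmetic difference is that the paper reparametrises both input intervals in midpoint--halfwidth form (so the $\max$ becomes an absolute value $|\hat y_{i,t}-\tilde y_t|$), whereas you stay in endpoint coordinates and invoke the identity $\max\{x,y\}+\min\{x,y\}=x+y$; these are the same computation.
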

\begin{proof}
Suppose $\mathcal{I}_{y_i, t} = [L_{y_i, t}, U_{y_i, t}] = [\hat{y}_{i, t} - \beta_{N_{i, t}}, \hat{y}_{i, t} + \beta_{N_{i, t}}]$, and $\mathcal{I}_{y_{(m)}, t} = [L_{y_{(m)}, t}, U_{y_{(m)}, t}] = [\tilde{y}_{t} - \tilde{\beta}_t, \tilde{y}_{t} + \tilde{\beta}_t]$ with $\tilde{y}_{t}$ defined as the midpoint of $[L_{y_{(m)}, t}, U_{y_{(m)}, t}]$ and $\tilde{\beta}_t = {\len}(\mathcal{I}_{y_{(m), t}})/2$. Let $\mathcal{I}_{\AD_{i, t}} = [L_{\AD_{i}, t}, U_{\AD_{i}, t}]$, by construction of $\mathcal{I}_{\AD_{i}, t}$ in Algorithm \ref{algorithm_confidence interval},\footnote{We consider the version with slight modification $L_{\AD_i, t} = \max \{0,  L_{y_i, t} - U_{y_{(m)}, t}, L_{y_{(m)}, t} - U_{y_i, t}  \}$; the reasoning is mentioned in the proof of \cref{lem:confidence_interval}.} we have 
\begin{equation*}
U_{\AD_i, t} = \max \{ U_{y_i, t} - L_{y_{(m)}, t}, U_{y_{(m)}, t} - L_{y_i, t} \} = |\hat{y}_{i, t} - \tilde{y}_{t}| + \beta_{N_{i, t}} + \tilde{\beta}_t.
\end{equation*}
On the other side, we have 
\begin{equation*}
L_{\AD_i, t} = \max \{ 0, L_{y_i, t} - U_{y_{(m)}, t}, L_{y_{(m)}, t} - U_{y_i, t}  \} = \max \{ 0,  |\hat{y}_{i, t} - \tilde{y}_{t}| - \beta_{N_{i, t}} - \tilde{\beta}_t\} \geq |\hat{y}_{i, t} - \tilde{y}_{t}| - \beta_{N_{i, t}} - \tilde{\beta}_t,
\end{equation*}
where the second inequality comes from the fact that $\max \{ L_{y_i, t} - U_{y_{(m)}, t}, L_{y_{(m)}, t} - U_{y_i, t} \}= |\hat{y}_{i, t} - \tilde{y}_{t}| - \beta_{N_{i, t}} - \tilde{\beta}_t$.

Thus, we have ${\len}(\mathcal{I}_{\AD_i, t})  = U_{\AD_i, t}  - L_{\AD_i, t}  \leq 2 \beta_{N_{i, t}} + 2 \tilde{\beta}_t = \len(\mathcal{I}_{y_i, t}) + \len(\mathcal{I}_{y_{(m)}, t})$.
\end{proof}

We next define the following set of constants, which we shall refer to frequently in our analysis. 

\begin{equation}
\label{eq_ROAILucb_constant}
\begin{cases}
c^{\theta}_1 =   \frac{y_{n_1} + \theta}{2}, \\
c^{\theta}_2 =   \frac{y_{n_1 + 1} + \theta}{2},\\
c^{\median}_{1} = \frac{y_{(m-1)} + y_{(m)}}{2},\\
c^{\median}_{2} = \frac{y_{(m)}+ y_{(m+1)}}{2},\\
c^{\MAD}_{1} = \frac{AD_{(m-1)} + AD_{(m)}}{2},\\
c^{\MAD}_{2} = \frac{AD_{(m)} + AD_{(m+1)}}{2}.
\end{cases}
\end{equation}
For $i \in \{1,2\}$, we could potentially have $c^{\median}_i = y_{(m)}$ or $c^{\MAD}_i = \AD_{(m)}$ when there exists multiple medians among $\{y_i\}$ or $\{\AD_i\}$; we have $c^{\theta}_i \neq \theta$ due to assumption on identifiability. Note that these constants are only used in analysis, and \cref{algorithm_ROAILucb} proceeds without knowing constants defined in Eq. \eqref{eq_ROAILucb_constant}.

\subsection{Analysis for Outlier Identification}
\label{appendix_ROAILucb_outlier}
In this section, we define $\NEEDY^{\theta}_{i, t}$, which denotes the event arm $i$/threshold $\theta$ is needy at round $t$ in the sense of determining outlier/normal arms, and analyze its properties.

\begin{definition}
	At any time $t$, we separate arms into three different subsets as follows, according to the relation of their confidence intervals and unknown constants $c^{\theta}_i$,
	\begin{align*}
	S^{\theta}_{top, t} & = \left\{ i \in [n] : L_{y_i, t} > c^{\theta}_{1} \right\}, \\
	S^{\theta}_{bottom, t} & = \left\{ i \in [n]  : U_{y_i, t} < c^{\theta}_{2} \right\}, \\
	S^{\theta}_{middle, t} & = \left\{ i \in [n]: i \notin S^{\theta}_{top, t}  \cup S^{\theta}_{bottom, t} \right\}. 
	\end{align*}
		Then, we define the following $\NEEDY$ event for arm $i \in [n]$ in the sense of to be separated from $\theta$
	\begin{equation*}
	\NEEDY^{\theta}_{i, t} =  \left( i \in S^{\theta}_{middle, t}  \right),
	\end{equation*}
	we also define the $\NEEDY$ event for the outlier threshold $\theta$
	\begin{equation*}
	\NEEDY^{\theta}_{\theta, t} =  \left( c^{\theta}_1 \in \mathcal{I}_{\theta, t} \right) \cup \left( c^{\theta}_2 \in \mathcal{I}_{\theta, t}  \right).
	\end{equation*}
\end{definition}
Recall $\hat{S}_{o, t} = \{ i \in [n]: \hat{y}_{i, t} > \hat{\theta}_t\}$ and $\hat{S}_{n, t} = [n]\backslash \hat{S}_{o, t}$. We define (breaking ties arbitrarily)
\begin{equation*}
l_{{\theta}, t} = \argmin_{a \in \hat{S}_{o, t}} \{L_{y_a, t} \}, \quad  u_{{\theta}, t} = \argmax_{a \in \hat{S}_{n, t}}  \{U_{y_a, t} \}.
\end{equation*}

\begin{lemma}
	\label{lm_non_termination_global}
	If Algorithm \ref{algorithm_ROAILucb} doesn't stop at time $t$, then there exists $a \in  \{l_{\theta, t}, u_{\theta, t}, \theta\}$ such that $\NEEDY^{\theta}_{a, t}$ holds.
\end{lemma}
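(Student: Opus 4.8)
The plan is to reduce the statement to a case analysis driven by whether the threshold itself is needy. First I would unpack ``does not stop at time $t$'' as $A^{\theta}_{L,t+1}\neq\emptyset$; by the definition of $A^{\theta}_{L,t+1}$ in \cref{algorithm_ROAILucb} this is exactly the statement that at least one of the two boundary arms $l_{\theta,t}$ or $u_{\theta,t}$ has $\mathcal{I}_{y_i,t}\cap\mathcal{I}_{\theta,t}\neq\emptyset$. If $\NEEDY^{\theta}_{\theta,t}$ holds we are done immediately with $a=\theta$, so the whole content lies in the complementary case $\neg\NEEDY^{\theta}_{\theta,t}$, where I will show that the overlapping boundary arm is itself needy.

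The key intermediate step is to turn $\neg\NEEDY^{\theta}_{\theta,t}$ into a tight localization of $\mathcal{I}_{\theta,t}$. Working under the good event $W$ we have $\theta\in\mathcal{I}_{\theta,t}$, and because arm $n_1$ is an outlier while arm $n_1+1$ is normal, the constants in Eq.~\eqref{eq_ROAILucb_constant} satisfy $c^{\theta}_{2}<\theta<c^{\theta}_{1}$. I would argue that $c^{\theta}_{1}\notin\mathcal{I}_{\theta,t}$ together with $c^{\theta}_{1}>\theta\geq L_{\theta,t}$ forces $U_{\theta,t}<c^{\theta}_{1}$, and symmetrically $c^{\theta}_{2}\notin\mathcal{I}_{\theta,t}$ together with $c^{\theta}_{2}<\theta\leq U_{\theta,t}$ forces $c^{\theta}_{2}<L_{\theta,t}$. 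This produces the chain $c^{\theta}_{2}<L_{\theta,t}\leq U_{\theta,t}<c^{\theta}_{1}$, i.e. the entire threshold interval sits strictly inside the gap $(c^{\theta}_{2},c^{\theta}_{1})$.

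Given this localization I would finish by checking that whichever boundary arm overlaps $\mathcal{I}_{\theta,t}$ must land in $S^{\theta}_{middle,t}$; I treat $l_{\theta,t}$, the case of $u_{\theta,t}$ being symmetric. Since $l_{\theta,t}\in\hat{S}_{o,t}$ its empirical mean (the midpoint of $\mathcal{I}_{y_{l_{\theta,t}},t}$) exceeds $\hat{\theta}_t\geq L_{\theta,t}$, so the overlap condition collapses to $L_{y_{l_{\theta,t}},t}\leq U_{\theta,t}$; combining with $U_{\theta,t}<c^{\theta}_{1}$ gives $L_{y_{l_{\theta,t}},t}<c^{\theta}_{1}$, hence $l_{\theta,t}\notin S^{\theta}_{top,t}$. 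For the other side, $U_{y_{l_{\theta,t}},t}\geq\hat{y}_{l_{\theta,t},t}>\hat{\theta}_t\geq L_{\theta,t}>c^{\theta}_{2}$ gives $l_{\theta,t}\notin S^{\theta}_{bottom,t}$. Thus $l_{\theta,t}\in S^{\theta}_{middle,t}$, so $\NEEDY^{\theta}_{l_{\theta,t},t}$ holds and we take $a=l_{\theta,t}$.

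The hard part will be the localization step: everything rests on correctly using that $\theta$ lies in its own confidence interval (event $W$) and that $c^{\theta}_{1},c^{\theta}_{2}$ straddle $\theta$, so that ``$\theta$ not needy'' pins $\mathcal{I}_{\theta,t}$ strictly inside the gap between them. Once that is in place, membership in the middle set follows from a short interval-overlap computation, which crucially exploits that a boundary arm is \emph{empirically} classified, so that one endpoint of its interval is automatically on the correct side of $\mathcal{I}_{\theta,t}$.
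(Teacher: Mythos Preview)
Your proposal is correct and follows essentially the same approach as the paper. The paper organizes the argument as a full contrapositive (assume none of the three are needy, then show all arms in $\hat{S}_{o,t}$ land in $S^{\theta}_{top,t}$ and all arms in $\hat{S}_{n,t}$ land in $S^{\theta}_{bottom,t}$, hence $A^{\theta}_{L,t+1}=\emptyset$), whereas you do the direct case split, but the two pivot on exactly the same observations: the localization $\mathcal{I}_{\theta,t}\subseteq(c^{\theta}_{2},c^{\theta}_{1})$ from $\neg\NEEDY^{\theta}_{\theta,t}$ under $W$, and the use of $\hat{y}_{l_{\theta,t}}>\hat{\theta}_t$ to rule out $S^{\theta}_{bottom,t}$ for the boundary arm.
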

\begin{proof}
	We analyze this under the good event $W$ where all confidence intervals are valid. If $\NEEDY^{\theta}_{a, t}$ don't occur for all $a \in  \{l_{\theta, t}, u_{\theta, t}, \theta\}$, we then show Algorithm \ref{algorithm_ROAILucb} necessarily terminates as followings. 
	
	We first notice that when $\NEEDY^\theta_{\theta, t}$ doesn't occur, we will have $\mathcal{I}_{\theta, t} \subseteq (c^{\theta}_2, c^{\theta}_1)$ according to the fact that $\theta \in \mathcal{I}_{\theta, t}$ and the definition of $c^{\theta}_1$ and $c^{\theta}_2$. Secondly, if $\NEEDY^{\theta}_{l_{\theta, t}}$ doesn't occur either, we necessarily have $L_{y_{l_{\theta, t}}} > c^{\theta}_1$ as $U_{y_{l_{\theta, t}}} < c^{\theta}_2$ cannot be true due to $\hat{y}_{l_{\theta, t}} > \hat{\theta}_t$; as a consequence, for any $i \in \hat{S}_{o, t}$, we have $L_{y_i, t} \geq L_{y_{l_{\theta, t}}} > c_1^{\theta}$. Similarly, we have $U_{y_i, t} < c^{\theta}_2$ for any $i \in \hat{S}_{n, t}$ if $\NEEDY^{\theta}_{u_{\theta, t}}$ doesn't occur either. To summarize, we have $A^{\theta}_{L, t} = \emptyset$, which indicates the termination of Algorithm \ref{algorithm_ROAILucb}.
\end{proof}

\begin{lemma}
	\label{lm_non_termination_local}
	If $\NEEDY^{\theta}_{\theta, t} $ holds, then we have $\len(\mathcal{I}_{\theta, t}) \geq {\Delta_{*}^{\theta}}/{2}$; furthermore, we have either $\len(\mathcal{I}_{y_{(m)}, t}) \geq \epsilon^{\median}$ or $\len(\mathcal{I}_{AD_{(m)}, t}) \geq \epsilon^{\MAD}$ with $\epsilon^{\median} := {\Delta_{*}^\theta }/{(2 + 8k)}$ and $\epsilon^{\MAD} := {\Delta_{*}^\theta }/{(1/2 + 2 k)}$.
\end{lemma}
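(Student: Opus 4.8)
The plan is to split the claim into its two assertions and establish them in sequence, leaning on two facts available under the good event $W$. The first is that $\theta \in \mathcal{I}_{\theta, t}$ always holds (by \cref{lem:confidence_interval}). The second is that the construction of $\mathcal{I}_{\theta, t}$ in \cref{algorithm_confidence interval} makes its length decompose additively: since $L_{\theta, t} = L_{y_{(m)}, t} + k\, L_{\AD_{(m)}, t}$ and $U_{\theta, t} = U_{y_{(m)}, t} + k\, U_{\AD_{(m)}, t}$, we get $\len(\mathcal{I}_{\theta, t}) = \len(\mathcal{I}_{y_{(m)}, t}) + k \cdot \len(\mathcal{I}_{\AD_{(m)}, t})$. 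These two observations drive the whole argument.

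For the first assertion, I would unpack the definition $\NEEDY^{\theta}_{\theta, t} = (c^{\theta}_1 \in \mathcal{I}_{\theta, t}) \cup (c^{\theta}_2 \in \mathcal{I}_{\theta, t})$ and treat the two cases symmetrically. In either case $\mathcal{I}_{\theta, t}$ contains both $\theta$ and one of the midpoints $c^{\theta}_i$, so its length is at least $|c^{\theta}_i - \theta|$. Since $c^{\theta}_1 = (y_{n_1} + \theta)/2$ and $c^{\theta}_2 = (y_{n_1+1} + \theta)/2$, a one-line computation gives $|c^{\theta}_1 - \theta| = \Delta^{\theta}_{n_1}/2$ and $|c^{\theta}_2 - \theta| = \Delta^{\theta}_{n_1+1}/2$, and each is at least $\Delta^{\theta}_*/2$ because $\Delta^{\theta}_* = \min_i \Delta^{\theta}_i$. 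This yields $\len(\mathcal{I}_{\theta, t}) \geq \Delta^{\theta}_*/2$.

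For the second assertion, I would combine the additive length decomposition with the bound just obtained and argue by contradiction. Suppose both $\len(\mathcal{I}_{y_{(m)}, t}) < \epsilon^{\median}$ and $\len(\mathcal{I}_{\AD_{(m)}, t}) < \epsilon^{\MAD}$. Then $\len(\mathcal{I}_{\theta, t}) < \epsilon^{\median} + k\,\epsilon^{\MAD}$, and a direct computation using the relations $2 + 8k = 8(1/4 + k)$ and $1/2 + 2k = 2(1/4 + k)$ shows that $\epsilon^{\median} + k\,\epsilon^{\MAD} = \Delta^{\theta}_*/2$ \emph{exactly}. This contradicts the first assertion, so at least one of the two length lower bounds must hold.

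The calculations here are elementary, so there is no genuine analytic obstacle; the only point that needs care is the bookkeeping that makes the contradiction tight, namely verifying that $\epsilon^{\median}$ and $\epsilon^{\MAD}$ were chosen precisely so that $\epsilon^{\median} + k\,\epsilon^{\MAD}$ \emph{equals} $\Delta^{\theta}_*/2$ rather than merely bounding it from below. I would also be careful to invoke $\theta \in \mathcal{I}_{\theta, t}$ under $W$, since that is what lets the $\NEEDY$ condition actually pin down two distinct points in the interval and thereby force the length lower bound.
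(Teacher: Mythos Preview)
Your proposal is correct and follows essentially the same approach as the paper's proof: both use $\theta \in \mathcal{I}_{\theta,t}$ together with $c^{\theta}_i \in \mathcal{I}_{\theta,t}$ to get the length lower bound $\Delta^{\theta}_*/2$, and then argue the second assertion by contradiction via the additive decomposition $\len(\mathcal{I}_{\theta,t}) = \len(\mathcal{I}_{y_{(m)},t}) + k\cdot\len(\mathcal{I}_{\AD_{(m)},t})$ and the identity $\epsilon^{\median} + k\,\epsilon^{\MAD} = \Delta^{\theta}_*/2$. Your write-up is simply more explicit about the intermediate computations than the paper's terse version.
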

\begin{proof}
	Notice that $\min\{ c^{\theta}_1 - \theta, \theta - c^{\theta}_2\} \geq \Delta^{\theta}_*/2$, thus the first statement is necessarily true to ensure $\theta \in \mathcal{I}_{\theta, t}$; the second statement need to be true as otherwise we will have $\len(\mathcal{I}_{\theta, t}) < \Delta_{*}^{\theta} / 2$ due to carefully chosen $\epsilon^{\median}$ and $\epsilon^{\MAD}$ such that $\epsilon^{\median} + k \cdot \epsilon^{\MAD} = \Delta_{*}^{\theta} /2$.
\end{proof}

\begin{remark}
	\label{rm_twp_eps_relation}
	Here we deliberately chose $\epsilon^{\median} = {\epsilon^{\MAD}}/{4}$ here for convenience, while our analysis works as long as $\epsilon^{\median} < {\epsilon^{\MAD}}$. One may also optimize over $\epsilon^{\median}$ and $\epsilon^{\MAD}$ to get a slightly tighter, in terms of constant, sample complexity upper bound.
\end{remark}

\subsection{Analysis for Median Identification}
\label{appendix_ROAILucb_median}
In this section, we define $\NEEDY^{\median}_{i, t}$, which denotes the event arm $i$ is needy at round $t$ in the sense of shrinking the confidence interval of $y_{(m)}$, and analyze its properties.

\begin{definition}
	\label{def_ROAILucb_median}
		At any time $t$, we separate arms into three different subsets as follows, according to the relation of their confidence intervals and unknown constants $c^{\median}_i$,
		\begin{align*}
			S^{\median}_{top, t} & = \left\{ i \in [n]: L_{y_i, t} > c^{\median}_{1} \right\}, \\
		S^{\median}_{bottom, t}& = \left\{ i \in [n]: U_{y_i, t} < c^{\median}_{2} \right\}, \\
		S^{\median}_{middle, t} & = \left\{ i \in [n]: i \notin 	S^{\median}_{top, t} \cup	S^{\median}_{bottom, t}\right\}. 
		\end{align*}
		Then, we define the following $\NEEDY$ event for arm $i \in [n]$ in the sense of shrinking the confidence interval of $y_{(m)}$
	\begin{equation*}
	\NEEDY^{\median}_{i, t} =  \left( i \in 	S^{\median}_{middle, t} \right)  \cap \left(\beta_{N_{i, t}} \geq {\epsilon^{\median}}/{2} \right).
	\end{equation*}
\end{definition}

\begin{remark}
	\label{remark_needy_median}
	Note that $S^{\median}_{middle, t}$ is equivalent to $\left\{ i \in [n]: c^{\median}_j \in \mathcal{I}_{y_i, t} \right\}  \cup  \left\{ i \in [n]: y_i = y_{(m)}\right\}$ as arms in $\left\{ i \in [n]: y_i = y_{(m)}\right\}$ cannot be in $S^{\median}_{top, t}$ or $S^{\median}_{bottom, t}$ under the good event $W$.
\end{remark}

We perform LUCB at both $(m-1)$-th and $m$-th locations, and aim at shrinking the confidence interval of the median below length $\epsilon^{\median}$, i.e., $\len(\mathcal{I}_{y_{(m)}, t}) < \epsilon^{\median}$. Recall we set $\kappa_1 = m-1$ and $\kappa_2 = m$; and for $i \in \{1, 2\}$, we let $J_{\kappa_i, t}$ denote a subset of $\kappa_i$ arms with the highest empirical rewards among $\{\hat{y}_i\}$, breaking ties arbitrarily. For $i = {1, 2}$, we further define
\begin{equation}
\label{eq_critical_arm_median}
l_{i, t} = \argmin_{a \in J_{\kappa_i, t}}  \{L_{y_a, t} \} , \quad  u_{i, t} = \argmax_{a \notin J_{\kappa_i, t}}  \{ U_{y_a, t} \}
\end{equation}
to be the two critical arms from $J_{\kappa_i, t}$ and $(J_{\kappa_i, t})^c$ that are likely to be misclassified. Recall that, to simplify notations, we will ignore the second subscript $t$ whenever $u_{i, t}$ or $l_{i, t}$ appears in the first subscript, which already indicates the time step $t$.

\begin{lemma}
	\label{lm_median_needy_1}
	For any $i \in \{1, 2\}$, if $U_{y_{u_{i, t}}} - L_{y_{l_{i, t}}} \geq \epsilon^{\median}$ holds, then $\NEEDY^{\median}_{k, t}$ holds for either $k = l_{i, t}$ or $k = u_{i, t}$, i.e., $k$ satisfies 
	\begin{equation}
	\label{eq_location_not_separated}
	c^{\median}_i \in \mathcal{I}_{y_k, t}, \ \text{and} \  \beta_{N_{k, t}} \geq {\epsilon^{\median}}/{2}.
	\end{equation}
\end{lemma}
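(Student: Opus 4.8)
The plan is to work throughout under the good event $W$ (all confidence intervals valid), abbreviate $l := l_{i,t}$ and $u := u_{i,t}$, and exploit two structural facts. First, since $l \in J_{\kappa_i, t}$ and $u \notin J_{\kappa_i, t}$, the empirical means satisfy $\hat{y}_{l,t} \ge \hat{y}_{u,t}$; because the LUCB intervals are symmetric, $\mathcal{I}_{y_k,t} = [\hat{y}_{k,t} - \beta_{N_{k,t}}, \hat{y}_{k,t}+\beta_{N_{k,t}}]$, this yields $\beta_{N_{l,t}} + \beta_{N_{u,t}} \ge U_{y_u, t} - L_{y_l, t} \ge \epsilon^{\median}$, so the wider of the two intervals has half-width at least $\epsilon^{\median}/2$. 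Second, because $c^{\median}_i$ is the midpoint of adjacent order statistics, at most $\kappa_i$ arm means lie strictly above $c^{\median}_i$ and at most $n-\kappa_i$ lie strictly below it (this is precisely where $\kappa_1 = m-1$ and $\kappa_2 = m$ are used). I would then prove the lemma in two stages: first show $c^{\median}_i$ lies in at least one of $\mathcal{I}_{y_l,t}, \mathcal{I}_{y_u,t}$, then show an interval containing $c^{\median}_i$ can be chosen with half-width $\ge \epsilon^{\median}/2$.

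For the first stage I would establish two ``no-escape'' facts by contradiction. (A) We cannot have $c^{\median}_i < L_{y_l, t}$ and $c^{\median}_i < L_{y_u, t}$ at once: since $l = \argmin_{a \in J_{\kappa_i,t}} L_{y_a, t}$, the first inequality forces all $\kappa_i$ arms of $J_{\kappa_i,t}$ to have mean above $c^{\median}_i$ (using $W$), while the second adds $u \notin J_{\kappa_i,t}$ as a $(\kappa_i+1)$-th arm above $c^{\median}_i$, contradicting the counting bound. (B) Symmetrically, using $u = \argmax_{a \notin J_{\kappa_i,t}} U_{y_a, t}$ and the ``below'' count, we cannot have $c^{\median}_i > U_{y_l, t}$ and $c^{\median}_i > U_{y_u, t}$. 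Now if $c^{\median}_i$ were outside both intervals, the four sign patterns are each excluded: two contradict (A) or (B); the pattern $U_{y_l, t} < c^{\median}_i < L_{y_u, t}$ forces $\hat{y}_{l,t} < \hat{y}_{u,t}$, contradicting fact one; and the pattern $U_{y_u, t} < c^{\median}_i < L_{y_l, t}$ forces $U_{y_u, t} - L_{y_l, t} < 0$, contradicting the hypothesis $U_{y_u, t} - L_{y_l, t} \ge \epsilon^{\median} > 0$. Hence $c^{\median}_i \in \mathcal{I}_{y_l,t} \cup \mathcal{I}_{y_u,t}$.

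For the second stage, if $c^{\median}_i$ lies in both intervals I take $k$ to be whichever of $l,u$ has $\beta_{N_{k,t}} \ge \epsilon^{\median}/2$, which exists by fact one. If it lies only in $\mathcal{I}_{y_l,t}$, I split on why $c^{\median}_i \notin \mathcal{I}_{y_u,t}$: if $c^{\median}_i > U_{y_u, t}$ then $\epsilon^{\median} \le U_{y_u, t} - L_{y_l, t} < c^{\median}_i - L_{y_l, t} \le U_{y_l, t} - L_{y_l, t} = 2\beta_{N_{l,t}}$; if instead $c^{\median}_i < L_{y_u, t}$ then $\hat{y}_{u,t} > c^{\median}_i$, so $\beta_{N_{l,t}} \ge \hat{y}_{l,t} - c^{\median}_i \ge \hat{y}_{u,t} - c^{\median}_i > \beta_{N_{u,t}}$, which with the sum bound again gives $\beta_{N_{l,t}} > \epsilon^{\median}/2$. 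The case where $c^{\median}_i$ lies only in $\mathcal{I}_{y_u,t}$ is symmetric. In every branch the chosen $k$ satisfies both $c^{\median}_i \in \mathcal{I}_{y_k,t}$ and $\beta_{N_{k,t}} \ge \epsilon^{\median}/2$, which is exactly \eqref{eq_location_not_separated}.

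The main obstacle, and the reason this is not a one-liner, is that membership and width must hold for the \emph{same} arm: fact one only guarantees that \emph{some} arm is wide, and stages (A)/(B) only guarantee that \emph{some} arm contains $c^{\median}_i$, and these need not coincide. The second-stage case analysis is exactly what reconciles them, forcing the containing interval to be wide whenever it is the \emph{only} one containing $c^{\median}_i$. A secondary technical point is to keep the counting bounds (hence (A) and (B)) robust to ties at $c^{\median}_i$ and to multiple medians; this follows because $c^{\median}_i$ is a midpoint of adjacent order statistics, so arms ranked on the wrong side are always weakly separated from it.
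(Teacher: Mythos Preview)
Your proposal is correct and follows essentially the same approach as the paper's own proof: a four-case elimination (your facts (A), (B) and the two remaining sign patterns) to show $c^{\median}_i$ lies in one of the two confidence intervals, followed by the sum bound $\beta_{N_{l,t}} + \beta_{N_{u,t}} \ge \epsilon^{\median}$ and a case split to match containment with width. The paper organizes the second-stage cases by which interval contains $c^{\median}_i$ and on which side the other interval falls, just as you do, and credits the main idea to \cite{kalyanakrishnan2012pac}.
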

\begin{proof}
	The main idea of this proof comes from \cite{kalyanakrishnan2012pac}; we provide the proof here for completeness.
	
	We start arguing that $c^{\median}_i \in \mathcal{I}_{y_{u_{i, t}}}$ or $ c^{\median}_i \in \mathcal{I}_{y_{l_{i, t}}}$ by arguing the following four exclusive cases cannot be true under event $W$.
	
	\textbf{Case 1. $c^{\median}_i > U_{y_{u_{i, t}}}$ and $c^{\median}_i > U_{y_{l_{i, t}}}$:} This indicates there will be at least $n-\kappa_i + 1$ arms with means smaller than $c^{\median}_i$ as all $n-\kappa_i$ arms in $(J_{\kappa_i, t})^c$ have upper bounds smaller than $c^{\median}_i$ and at least one arm in $J_{\kappa_i, t}$, i.e., arm $L_{y_i, t}$, has upper bound smaller than $c^{\median}_i$; on the other side, we can have at most $n-\kappa_i$ arms with means smaller than $c^{\median}_i$ according to definition in Eq. \eqref{eq_ROAILucb_constant}, which leads to a contradiction.
	
	\textbf{Case 2. $c^{\median}_i > U_{y_{u_{i, t}}}$ and $c^{\median}_i < L_{y_{l_{i, t}}}$:} This indicates $U_{y_{u_{i, t}}} < L_{y_{l_{i, t}}}$, which contradicts with the fact that $U_{y_{u_{i, t}}} - L_{y_{l_{i, t}}} \geq \epsilon^{\median} > 0$.
	
	\textbf{Case 3. $c^{\median}_i < L_{y_{u_{i, t}}}$ and $c^{\median}_i > U_{y_{l_{i, t}}}$:} This leads to the contradiction that $c^{\median}_i < L_{y_{u_{i, t}}} \leq \hat{y}_{u_{i, t}} \leq \hat{y}_{l_{i, t}} \leq U_{y_{l_{i, t}}} < c^{\median}_i$, where the third inequality comes from the fact ${u_{i, t}} \notin J_{\kappa_i, t}$ and ${l_{i, t}} \in J_{\kappa_i, t}$.
	
	\textbf{Case 4. $c^{\median}_i < L_{y_{u_{i, t}}}$ and $c^{\median}_i < L_{y_{l_{i, t}}}$:} Similar to Case 1, Case 4 indicates there will be at least $\kappa_i + 1$ arms with mean greater than $c^{\median}_i$, which contradicts with the fact that there can have at most $\kappa_i$ such arms.

We next show Eq. \eqref{eq_location_not_separated} holds true by considering two situations: (1) $c^{\median}_i$ belongs to both $\mathcal{I}_{y_{u_{i, t}}}$ and $\mathcal{I}_{y_{l_{i, t}}}$; (2) $c^{\median}_i$ only belongs to one of $\mathcal{I}_{y_{u_{i, t}}}$ and $\mathcal{I}_{y_{l_{i, t}}}$.
	
	 In both situations, we notice that 
	\begin{equation}
	\label{eq_needy_sum_ci}
	\beta_{N_{u_{i, t}}} + \beta_{N_{l_{i, t}}} \geq \hat{y}_{u_{i, t}}  + \beta_{N_{u_{i, t}}} - (\hat{y}_{l_{i, t}} - \beta_{N_{l_{i, t}}}) = U_{y_{u_{i, t}}} - L_{y_{l_{i, t}}} \geq \epsilon^{\median},
	\end{equation}
	where the first inequality comes from $\hat{y}_{u_{i, t}} \leq \hat{y}_{l_{i, t}}$.

	In the first situation: Since $c^{\median}_i \in \mathcal{I}_{y_{u_{i, t}}}$ and $c^{\median}_i \in \mathcal{I}_{y_{l_{i, t}}}$, and we also have either $\beta_{N_{u_{i, t}}} \geq {\epsilon^{\median}}/{2}$ or $\beta_{N_{l_{i, t}}} \geq {\epsilon^{\median}}/{2}$; thus Eq. \eqref{eq_location_not_separated} is satisfied.
	
	In the second situation: We consider when $c^{\median}_i$ only belongs to one of the confidence intervals. Specifically, we consider the following four exclusive cases:
	
    \textbf{Case 1. $c^{\median}_i \in \mathcal{I}_{y_{u_{i, t}}}, \ \text{and} \ \ c^{\median}_i > U_{y_{l_{i, t}}}  \implies \beta_{N_{u_{i, t}}} \geq {\epsilon^{\median}}/{2}$:}
    
    This case indicates $\hat{y}_{u_{i, t}} + \beta_{N_{u_{i, t}}} \geq c_i^{\median} > \hat{y}_{l_{i, t}} + \beta_{N_{l_{i, t}}}$; combine this with the fact that $\hat{y}_{u_{i, t}} \leq \hat{y}_{l_{i, t}}$, we further have
    \begin{equation}
    \label{eq_needy_case1_larger_ci}
    \beta_{N_{u_{i, t}}} \geq \beta_{N_{l_{i, t}}}.
    \end{equation}
    Combine Eq. \eqref{eq_needy_case1_larger_ci} with Eq. \eqref{eq_needy_sum_ci} leads to the desired result.
    
    \textbf{Case 2. $c^{\median}_i \in \mathcal{I}_{y_{u_{i, t}}}, \ \text{and} \ \ c^{\median}_i < L_{y_{l_{i, t}}}  \implies \beta_{N_{u_{i, t}}} \geq {\epsilon^{\median}}/{2}$:}
    
    This case indicates $\hat{y}_{u_i, t} - \beta_{N_{u_{i, t}}} \leq c^{\median}_i$; combine this with the fact that
    	\begin{equation*}
    \hat{y}_{u_i, t} + \beta_{N_{u_{i, t}}} \geq \hat{y}_{l_{i, t}} - \beta_{N_{l_{i, t}}} + \epsilon^{\median} = L_{y_{l_{i, t}}} + \epsilon^{\median} > c^{\median}_i + \epsilon^{\median}
    \end{equation*}
    leads to the desired result.

    \textbf{Case 3. $c^{\median}_i \in \mathcal{I}_{y_{l_{i, t}}}, \ \text{and} \ \ c^{\median}_i > U_{y_{u_{i, t}}}  \implies \beta_{N_{l_{i, t}}} \geq {\epsilon^{\median}}/{2}$:} The proof is similar to Case 2.
    
    \textbf{Case 4. $c^{\median}_i \in \mathcal{I}_{y_{l_{i, t}}}, \ \text{and} \ \  c^{\median}_i < L_{y_{u_{i, t}}} \implies \beta_{N_{l_{i, t}}} \geq {\epsilon^{\median}}/{2}$:} The proof is similar to Case 1.
    \end{proof}

\begin{lemma}
	\label{lm_median_needy_2}
	If $\len(\mathcal{I}_{y_{(m)}, t}) \geq \epsilon^{\median}$ and $U_{y_{u_{i, t}}} - L_{y_{l_{i, t}}} < \epsilon^{\median}$ for both $i = 1, 2$, then there exists an arm $k \in \{ l_{1, t}, l_{2, t},  u_{1, t}, u_{2, t }\}$ such that $\NEEDY^{\median}_{k, t}$ holds.
\end{lemma}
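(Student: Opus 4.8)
The plan is to tie the two endpoints of $\mathcal{I}_{y_{(m)}, t} = [\median\{L_{y_i,t}\}, \median\{U_{y_i,t}\}]$ to the critical arms $u_{1,t}$ and $l_{2,t}$ through a rank-counting argument in the spirit of \cref{lm_loc_comparison}, and then to show that the two hypotheses force these critical arms to collapse onto a single arm, which will be needy. Concretely, I would first prove two bracketing inequalities. Since $u_{1,t}$ maximizes the upper bound over the $m$ arms outside $J_{m-1,t}$, at most the $m-1$ arms of $J_{m-1,t}$ can have upper bound exceeding $U_{y_{u_{1,t}}}$, so the $m$-th largest upper bound obeys $U_{y_{(m)},t} \le U_{y_{u_{1,t}}}$. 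Symmetrically, since $l_{2,t}$ minimizes the lower bound over the $m$ arms of $J_{m,t}$, at least $m$ arms have lower bound at least $L_{y_{l_{2,t}}}$, giving $L_{y_{(m)},t} \ge L_{y_{l_{2,t}}}$. Combining these with the hypothesis $\len(\mathcal{I}_{y_{(m)},t}) \ge \epsilon^{\median}$ yields the key cross-gap inequality $U_{y_{u_{1,t}}} - L_{y_{l_{2,t}}} \ge \epsilon^{\median}$.

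Next I would exploit the two per-index hypotheses $U_{y_{u_{i,t}}} - L_{y_{l_{i,t}}} < \epsilon^{\median}$, $i=1,2$. Subtracting the $i=1$ bound from the cross-gap gives $L_{y_{l_{1,t}}} > L_{y_{l_{2,t}}}$, and subtracting the $i=2$ bound gives $U_{y_{u_{1,t}}} > U_{y_{u_{2,t}}}$. Adopting the natural nested choice $J_{m-1,t}\subseteq J_{m,t}$ (consistent tie-breaking), these two sets differ only by the arm $p$ of $m$-th largest empirical mean. The strict inequality $L_{y_{l_{1,t}}} > L_{y_{l_{2,t}}}$ forces the lower-bound minimizer over $J_{m,t}$ to be the newly added arm, i.e. $l_{2,t}=p$; likewise $U_{y_{u_{1,t}}} > U_{y_{u_{2,t}}}$ forces $u_{1,t}=p$. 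Hence $u_{1,t}=l_{2,t}=p$, and the cross-gap inequality becomes $U_{y_p,t}-L_{y_p,t}=2\beta_{N_{p,t}}\ge \epsilon^{\median}$, which is exactly the radius condition $\beta_{N_{p,t}} \ge \epsilon^{\median}/2$ required by $\NEEDY^{\median}_{p,t}$.

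It then remains to check $p \in S^{\median}_{middle,t}$. For this I would chain the bracketing inequalities with validity of the median CI under the good event $W$, namely $L_{y_{(m)},t}\le y_{(m)}\le U_{y_{(m)},t}$, together with $c^{\median}_2 \le y_{(m)} \le c^{\median}_1$. This gives $L_{y_p,t}=L_{y_{l_{2,t}}} \le L_{y_{(m)},t}\le y_{(m)} \le c^{\median}_1$ and $U_{y_p,t}=U_{y_{u_{1,t}}}\ge U_{y_{(m)},t}\ge y_{(m)} \ge c^{\median}_2$. These are precisely the conditions $L_{y_p,t}\le c^{\median}_1$ and $U_{y_p,t}\ge c^{\median}_2$ defining membership in the middle set, so $\NEEDY^{\median}_{p,t}$ holds with $k=p=l_{2,t}=u_{1,t}\in\{l_{1,t},l_{2,t},u_{1,t},u_{2,t}\}$, completing the proof.

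I expect the main obstacle to be the first step: correctly identifying that the upper endpoint of the median CI is governed by the $(m-1)$-boundary arm $u_{1,t}$ while the lower endpoint is governed by the $m$-boundary arm $l_{2,t}$, and handling the rank-counting cleanly under the sorted convention. Once these bracketing inequalities are in place, the collapse $u_{1,t}=l_{2,t}=p$ is the conceptual crux, after which the neediness of $p$ follows mechanically.
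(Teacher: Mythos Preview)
Your proof is correct and follows the same overall structure as the paper's: establish the bracketing $[L_{y_{(m)},t}, U_{y_{(m)},t}] \subseteq [L_{y_{l_{2,t}}}, U_{y_{u_{1,t}}}]$, deduce the collapse $u_{1,t}=l_{2,t}=p$ (the empirical median arm), and then verify $\NEEDY^{\median}_{p,t}$. Your derivation is slightly cleaner than the paper's in two places. First, you obtain the strict inequalities $L_{y_{l_{1,t}}} > L_{y_{l_{2,t}}}$ and $U_{y_{u_{1,t}}} > U_{y_{u_{2,t}}}$ directly by subtracting the per-index hypotheses from the cross-gap bound, whereas the paper organizes this as a case split (equal vs.\ unequal) and shows the equal case contradicts $\len(\mathcal{I}_{y_{(m)},t}) \ge \epsilon^{\median}$. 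Second, your middle-set argument is more direct: you chain $L_{y_p,t} \le L_{y_{(m)},t} \le y_{(m)} \le c^{\median}_1$ and $U_{y_p,t} \ge U_{y_{(m)},t} \ge y_{(m)} \ge c^{\median}_2$ to verify the defining inequalities of $S^{\median}_{middle,t}$, while the paper splits into three sub-cases on the sign of $y_p - y_{(m)}$ and argues in each that either $y_p = y_{(m)}$ or one of the constants $c^{\median}_i$ lies in $\mathcal{I}_{y_p,t}$.
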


\begin{proof}
    We first notice $[L_{y_{(m)}, t} , U_{y_{(m)}, t}] \subseteq [L_{y_{l_{2, t}}}, U_{y_{u_{1, t}}}]$ based on the selection of ${l_{2, t}}$ and $u_{1, t}$ in Eq. \eqref{eq_critical_arm_median}. We then prove the Lemma by considering the following two exclusive cases:
	
	\textbf{Case 1. $U_{y_{u_{1, t}}} = U_{y_{u_{2, t}}}$ or $L_{y_{l_{1, t}}} = L_{y_{l_{2, t}}}$:} We immediately have $\len(\mathcal{I}_{y_{(m)}, t}) \leq U_{y_{u_{1, t}}} - L_{y_{l_{2, t}}} < \epsilon^{\median}$ according to $U_{y_{u_{i, t}}} - L_{y_{l_{i, t}}} < \epsilon^{\median}$ for both $i = 1, 2$; but this contradicts with the assumption $\len(\mathcal{I}_{y_{(m)}, t}) \geq \epsilon^{\median}$. Thus, this case cannot happen.
	
	\textbf{Case 2. $U_{u_{1, t}} \neq U_{u_{2, t}}$ and $L_{l_{1, t}} \neq L_{l_{2, t}}$:} Let $k$ be the index of the empirical median arm at time $t$ according to the ranking, i.e., $k = J_{\kappa_2, t} \backslash J_{\kappa_1, t}$, we then know $u_{1, t} = l_{2, t} = k$. This further leads to $\len(\mathcal{I}_{y_k, t}) \geq \len(\mathcal{I}_{y_{(m)}, t}) \geq \epsilon^{\median}$ and thus $\beta_{N_{k, t}} \geq \epsilon^{\median}/2$. We next show $\NEEDY_{k, t}^{\median}$ holds true by showing that we have $k \in S^{\median}_{middle, t}$ for either of the three sub-cases: 
	
	(1) if $y_k = y_{(m)}$, we have $k \in S^{\median}_{middle, t}$ according to Remark \ref{remark_needy_median}; 
	
	(2) if $y_k > y_{(m)}$, we know that $y_k \geq y_{(m-1)}$. Since $y_{(m)} \in \mathcal{I}_{y_{(m)}, t} \subseteq \mathcal{I}_{y_k, t}$ and $y_k \in \mathcal{I}_{y_k, t}$, we then know $c^{\median}_1 = (y_{(m)} + y_{(m-1)})/2 \in \mathcal{I}_{y_k, t}$, which leads to $k \in S^{\median}_{middle, t}$; 
	
	(3) if $y_k < y_{(m)}$, similar to sub-case (2), we have $c^{\median}_2 \in \mathcal{I}_{y_k, t}$, which leads to $k \in S^{\median}_{middle, t}$.
\end{proof}

\begin{lemma}
	\label{lm_median_needy}
	If $\len(\mathcal{I}_{y_{(m)}, t}) \geq \epsilon^{\median}$, then there exists an arm $k = l_{i, t}$ or $k = u_{i, t}$ such that $\NEEDY^{\median}_{k, t}$ holds.
\end{lemma}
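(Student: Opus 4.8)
The plan is to derive \cref{lm_median_needy} as a clean dichotomy over the two preceding lemmas, \cref{lm_median_needy_1} and \cref{lm_median_needy_2}, which between them already cover every configuration of the critical arms. The key observation is that \cref{lm_median_needy_1} supplies a needy arm whenever \emph{some} level $i \in \{1,2\}$ has a wide boundary gap $U_{y_{u_{i,t}}} - L_{y_{l_{i,t}}} \ge \epsilon^{\median}$, while \cref{lm_median_needy_2} supplies one precisely in the complementary regime where \emph{both} gaps are narrow, provided the median CI is still wide. Since the hypothesis $\len(\mathcal{I}_{y_{(m)},t}) \ge \epsilon^{\median}$ is exactly what \cref{lm_median_needy_2} requires, the union of the two lemma hypotheses becomes a tautology once the median CI is wide.

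Concretely, I would split on whether there exists $i \in \{1,2\}$ with $U_{y_{u_{i,t}}} - L_{y_{l_{i,t}}} \ge \epsilon^{\median}$. In the first case, I invoke \cref{lm_median_needy_1} for that particular $i$, which immediately yields a needy arm equal to either $l_{i,t}$ or $u_{i,t}$. In the complementary case, $U_{y_{u_{i,t}}} - L_{y_{l_{i,t}}} < \epsilon^{\median}$ holds for both $i = 1, 2$; combining this with the standing hypothesis $\len(\mathcal{I}_{y_{(m)},t}) \ge \epsilon^{\median}$ places us exactly in the setting of \cref{lm_median_needy_2}, which produces a needy arm in $\{l_{1,t}, l_{2,t}, u_{1,t}, u_{2,t}\}$.

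In both branches the needy arm is one of the four critical arms $l_{i,t}, u_{i,t}$, so the conclusion of \cref{lm_median_needy} follows directly. There is essentially no new technical obstacle at this step: the real work has already been absorbed into the two feeder lemmas---the four-case sign argument forcing $c^{\median}_i \in \mathcal{I}_{y_{u_{i,t}}}$ or $c^{\median}_i \in \mathcal{I}_{y_{l_{i,t}}}$ in \cref{lm_median_needy_1}, and the identification of the empirical-median arm $k = J_{\kappa_2,t} \setminus J_{\kappa_1,t}$ carrying the full width of the median CI in \cref{lm_median_needy_2}. The only point requiring a moment's care is to confirm that these two lemma hypotheses genuinely partition the space under the standing assumption, i.e.\ that the ``both gaps narrow'' case is exactly where \cref{lm_median_needy_2} applies; this is immediate from the definitions and the fact that $\epsilon^{\median}$ is the common threshold in all three statements.
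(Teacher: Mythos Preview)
Your proposal is correct and mirrors the paper's own proof, which simply states that the lemma is a direct consequence of \cref{lm_median_needy_1} and \cref{lm_median_needy_2}. Your dichotomy on whether some $i\in\{1,2\}$ satisfies $U_{y_{u_{i,t}}} - L_{y_{l_{i,t}}} \ge \epsilon^{\median}$ is exactly the intended split, and you have correctly identified that the two feeder lemmas exhaustively cover both branches under the standing hypothesis.
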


\begin{proof}
	This Lemma is a direct consequence of the Lemma \ref{lm_median_needy_1} and Lemma \ref{lm_median_needy_2}.
\end{proof}

\subsection{Analysis for MAD Identification}
\label{appendix_ROAILucb_MAD}

In this section, we define $\NEEDY^{\MAD}_{i, t}$, which denotes the event arm $i$ is needy at round $t$ in the sense of shrinking the confidence interval of $\AD_{(m)}$, and analyze its properties.

\begin{definition}
	\label{def_ROAILucb_MAD}
	At any time $t$, we separate arms into three different subsets as follows, according to the relation of the confidence intervals of absolute deviations and unknown constants $c^{\MAD}_i$,
	\begin{align*}
		S^{\MAD}_{top, t} & = \left\{ i \in [n]: L_{\AD_i, t} > c^{\MAD}_{1} \right\}, \\
	S^{\MAD}_{bottom, t} & = \left\{ i \in [n]: U_{\AD_i, t} < c^{\MAD}_{2} \right\}, \\
	S^{\MAD}_{middle, t} & = \left\{ i \in [n]: i \notin S^{\MAD}_{top, t}  \cup S^{\MAD}_{bottom, t}\right\}.
	\end{align*}
	Then, we define the following $\NEEDY$ event for arm $i \in [n]$ in the sense of shrinking the confidence interval of $\AD_{(m)}$
	\begin{equation*}
	\NEEDY^{\MAD}_{i, t} =  \left(   i \in S^{\MAD}_{middle, t} \right) \cap \left(  \beta_{N_{i, t}} > {\epsilon^{\MAD}}/{4} \right).
	\end{equation*}
\end{definition}

\begin{remark}
	\label{remark_needy_MAD}
	Note that $S^{\MAD}_{middle, t}$ is equivalent to $\left\{ i \in [n]: c^{\MAD}_j \in \mathcal{I}_{\AD_i, t} \right\}  \cup \left\{ i \in [n]:  \AD_i = \AD_{(m)}\right\} $, as arms in $\left\{ i \in [n]:  \AD_i = \AD_{(m)}\right\}$ cannot be in set $S^{\MAD}_{top, t}$ or set $S^{\MAD}_{bottom, t}$ under the good event $W$. 
\end{remark}

Algorithm \ref{algorithm_ROAILucb} performs LUCB at both $(m-1)$-th and $m$-th locations with respect to $\widehat{\AD}_{i, t}$ and $\{L_{\AD_i, t}, U_{\AD_i, t}\}$, and aim at shrinking the length of $\mathcal{I}_{\AD_{(m), t}}$ below $\epsilon^{\MAD}$. Recall we set $\kappa_1 = m-1$ and $\kappa_2 = m$; and for $i \in \{1, 2\}$, we let $J^{\AD}_{\kappa_i, t}$ denote a subset of $\kappa_i$ arms with the largest empirical absolute deviations among $\{\widehat{\AD}_i\}$, breaking ties arbitrarily. For $i = {1, 2}$, we further define
\begin{equation}
\label{eq_critical_arm_MAD}
l^{\AD}_{i, t} = \argmin_{a \in J^{\AD}_{\kappa_i, t}}  \left\{L_{\AD_a, t}\right\}, \quad u^{\AD}_{i, t} = \argmax_{a \notin J^{\AD}_{\kappa_i, t}} \left\{ U_{\AD_a, t} \right\}
\end{equation}
to be the two critical arms from $J^{\AD}_{\kappa_i, t}$ and $(J^{\AD}_{\kappa_i, t})^c$ that are likely to be misclassified. Recall that, to simplify notations, we will ignore the second subscript $t$ whenever $u^{\AD}_{i, t}$ or $l^{\AD}_{i, t}$ appears in the first subscript.

\begin{lemma}
	\label{lm_MAD_needy_1}
	Assume $\len(\mathcal{I}_{y_{(m)}, t}) < \epsilon^{\median}$. If $U_{\AD_{u^{\AD}_{i, t}}} - L_{\AD_{l^{\AD}_{i, t}}} \geq \epsilon^{\MAD}$ holds, then either $k = l^{\AD}_{i, t}$ or $k = u^{\AD}_{i, t}$ and satisfies 
	\begin{equation}
	\label{eq_location_not_separated_AD}
	c^{\MAD}_i \in \mathcal{I}_{\AD_{k}, t}, \ \text{and} \  \beta_{N_{k, t}} > {\epsilon^{\MAD}}/{4}.
	\end{equation}
\end{lemma}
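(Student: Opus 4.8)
The plan is to follow the proof of \cref{lm_median_needy_1} step for step, transporting each ingredient from the median setting to the MAD setting under the substitutions $y_i \mapsto \AD_i$, $\hat{y}_{i,t} \mapsto \widehat{\AD}_{i,t}$, $c^{\median}_i \mapsto c^{\MAD}_i$, $J_{\kappa_i, t} \mapsto J^{\AD}_{\kappa_i, t}$, and $u_{i,t}, l_{i,t} \mapsto u^{\AD}_{i,t}, l^{\AD}_{i,t}$. The one genuine difference is that $\mathcal{I}_{\AD_k, t}$ does not have half-width $\beta_{N_{k,t}}$; recalling that \cref{algorithm_confidence interval} sets $\widehat{\AD}_{k,t}$ to be the midpoint of $\mathcal{I}_{\AD_k, t}$, I would carry the whole argument through in terms of the MAD half-width $h_k := \len(\mathcal{I}_{\AD_k, t})/2$ and only convert back to $\beta_{N_{k,t}}$ at the very end.

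First I would show the membership $c^{\MAD}_i \in \mathcal{I}_{\AD_{u^{\AD}_{i,t}}, t}$ or $c^{\MAD}_i \in \mathcal{I}_{\AD_{l^{\AD}_{i,t}}, t}$ by ruling out the same four exclusive cases as in \cref{lm_median_needy_1}. The two counting cases (both endpoints strictly on one side of $c^{\MAD}_i$) are excluded because $c^{\MAD}_i$ lies strictly between $\AD_{(m)}$ and its neighbouring order statistic, so at most $\kappa_i$ arms have absolute deviation above $c^{\MAD}_i$ and at most $n - \kappa_i$ below it, whereas those configurations would force $\kappa_i + 1$ or $n - \kappa_i + 1$ arms to one side. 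The ``crossed'' case contradicts $U_{\AD_{u^{\AD}_{i,t}}} - L_{\AD_{l^{\AD}_{i,t}}} \geq \epsilon^{\MAD} > 0$, and the remaining case contradicts $c^{\MAD}_i < L_{\AD_{u^{\AD}_{i,t}}} \leq \widehat{\AD}_{u^{\AD}_{i,t}} \leq \widehat{\AD}_{l^{\AD}_{i,t}} \leq U_{\AD_{l^{\AD}_{i,t}}} < c^{\MAD}_i$, where the middle inequality $\widehat{\AD}_{u^{\AD}_{i,t}} \leq \widehat{\AD}_{l^{\AD}_{i,t}}$ holds since $l^{\AD}_{i,t} \in J^{\AD}_{\kappa_i, t}$ and $u^{\AD}_{i,t} \notin J^{\AD}_{\kappa_i, t}$.

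Next I would attach the large half-width to the interval that actually contains $c^{\MAD}_i$. Writing $\mathcal{I}_{\AD_k, t} = [\widehat{\AD}_{k,t} - h_k, \widehat{\AD}_{k,t} + h_k]$ and using $\widehat{\AD}_{u^{\AD}_{i,t}} \leq \widehat{\AD}_{l^{\AD}_{i,t}}$, the hypothesis yields $h_{u^{\AD}_{i,t}} + h_{l^{\AD}_{i,t}} \geq U_{\AD_{u^{\AD}_{i,t}}} - L_{\AD_{l^{\AD}_{i,t}}} \geq \epsilon^{\MAD}$, the exact MAD analogue of \eqref{eq_needy_sum_ci}. If $c^{\MAD}_i$ lies in both intervals I take $k$ to be whichever of $u^{\AD}_{i,t}, l^{\AD}_{i,t}$ has $h_k \geq \epsilon^{\MAD}/2$. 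If it lies in exactly one, I reproduce the four sub-cases of \cref{lm_median_needy_1} with $\beta$ replaced by $h$: for instance $c^{\MAD}_i \in \mathcal{I}_{\AD_{u^{\AD}_{i,t}}, t}$ together with $c^{\MAD}_i > U_{\AD_{l^{\AD}_{i,t}}}$ forces $\widehat{\AD}_{u^{\AD}_{i,t}} + h_{u^{\AD}_{i,t}} \geq c^{\MAD}_i > \widehat{\AD}_{l^{\AD}_{i,t}} + h_{l^{\AD}_{i,t}}$, hence $h_{u^{\AD}_{i,t}} \geq h_{l^{\AD}_{i,t}}$ and therefore $h_{u^{\AD}_{i,t}} \geq \epsilon^{\MAD}/2$; the other three are symmetric. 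In every branch the selected arm $k$ satisfies $c^{\MAD}_i \in \mathcal{I}_{\AD_k, t}$ together with $h_k \geq \epsilon^{\MAD}/2$.

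The final step, which I expect to be the main obstacle since it has no counterpart in the median proof, is converting $h_k \geq \epsilon^{\MAD}/2$ into the claimed bound on $\beta_{N_{k,t}}$. Here \cref{lm_length_confidence_bound_AD} gives $2 h_k = \len(\mathcal{I}_{\AD_k, t}) \leq 2\beta_{N_{k,t}} + \len(\mathcal{I}_{y_{(m)}, t})$, so $\beta_{N_{k,t}} \geq h_k - \len(\mathcal{I}_{y_{(m)}, t})/2 > \epsilon^{\MAD}/2 - \epsilon^{\median}/2$ by the standing hypothesis $\len(\mathcal{I}_{y_{(m)}, t}) < \epsilon^{\median}$. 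Since the tolerances were calibrated so that $\epsilon^{\median} = \epsilon^{\MAD}/4$ (\cref{rm_twp_eps_relation}, matching \cref{lm_non_termination_local}), this gives $\beta_{N_{k,t}} > \epsilon^{\MAD}/2 - \epsilon^{\MAD}/8 = 3\epsilon^{\MAD}/8 > \epsilon^{\MAD}/4$, which is \eqref{eq_location_not_separated_AD}. The delicate point is precisely that the MAD interval is inflated relative to $\beta_{N_{k,t}}$ by the median's interval, and the hypothesis $\len(\mathcal{I}_{y_{(m)}, t}) < \epsilon^{\median}$ together with the choice $\epsilon^{\median} = \epsilon^{\MAD}/4$ is exactly what leaves the margin $3\epsilon^{\MAD}/8$ clearing the required $\epsilon^{\MAD}/4$.
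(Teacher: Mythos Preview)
Your proposal is correct and follows essentially the same approach as the paper. The paper's proof simply invokes \cref{lm_median_needy_1} (noting that $\widehat{\AD}_{k,t}$ is the midpoint of $\mathcal{I}_{\AD_k,t}$, which is the only structural property of the estimator used there) to obtain $c^{\MAD}_i \in \mathcal{I}_{\AD_k,t}$ with $\len(\mathcal{I}_{\AD_k,t}) \geq \epsilon^{\MAD}$, and then applies \cref{lm_length_confidence_bound_AD} together with $\len(\mathcal{I}_{y_{(m)},t}) < \epsilon^{\median} = \epsilon^{\MAD}/4$ exactly as you do; your explicit transport of the four exclusion cases and sub-cases just unpacks what the paper leaves implicit, and your final bound $\beta_{N_{k,t}} > 3\epsilon^{\MAD}/8$ is in fact the arithmetically correct value (the paper's displayed $3\epsilon^{\MAD}/4$ is a slip, though both exceed $\epsilon^{\MAD}/4$).
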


\begin{proof}
	Since we deliberately define select $\widehat{\AD}_i$ to be the median point of its confidence interval, i.e., $U_{\AD_{i},t} - \widehat{\AD}_{i, t} = \widehat{\AD}_{i, t} - L_{\AD_{i}, t}$, similar to the proof of Lemma \ref{lm_median_needy_1},\footnote{Note that in Lemma \ref{lm_median_needy_1}, in terms of the relation between confidence interval and the empirical value, we only use the property that $\hat{y}_{i, t}$ is the median point of the $\mathcal{I}_{y_i, t}$.} we have either $k = l^{\AD}_{i, t}$ or $k = u^{\AD}_{i, t}$ satisfies 
	\begin{equation*}
	c^{\MAD}_i \in \mathcal{I}_{\AD_{k}, t}, \ \text{and} \  \len(\mathcal{I}_{\AD_k, t}) \geq \epsilon^{\MAD}.
	\end{equation*}
	By assumption $\len(\mathcal{I}_{y_{(m)}, t}) < \epsilon^{\median} = {\epsilon^{\MAD}}/{4}$, we further obtain the following equation after invoking Lemma \ref{lm_length_confidence_bound_AD} 
	\begin{equation*}
	c^{\MAD}_i \in \mathcal{I}_{\AD_{k}, t}, \ \text{and} \ \ \beta_{N_{k, t}} > 3{\epsilon^{\MAD}}/{4}> {\epsilon^{\MAD}}/{4}.
	\end{equation*}
\end{proof}

\begin{lemma}
	\label{lm_MAD_needy_2}
	Assume $\len(\mathcal{I}_{y_{(m)}, t}) < \epsilon^{\median}$. If $\len(\mathcal{I}_{\AD_{(m)}, t}) \geq \epsilon^{\MAD}$ and $U_{\AD_{u^{\AD}_{i, t}}} - L_{\AD_{l^{\AD}_{i, t}}} < \epsilon^{\MAD}$ for both $i \in \{1, 2\}$, then there exists an arm $k \in \{l^{\AD}_{1, t}, l^{\AD}_{2, t}, u^{\AD}_{1, t}, u^{\AD}_{2, t} \}$ such that $\NEEDY^{\MAD}_{k, t}$ holds.
\end{lemma}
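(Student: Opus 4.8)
The plan is to mirror the proof of Lemma \ref{lm_median_needy_2}, replacing means by absolute deviations throughout, with one genuinely new ingredient needed to upgrade a length bound to the stronger sampling condition in $\NEEDY^{\MAD}$. First I would establish the MAD analogue of the containment used in the median case, namely $\mathcal{I}_{\AD_{(m)},t} = [L_{\AD_{(m)},t}, U_{\AD_{(m)},t}] \subseteq [L_{\AD_{l^{\AD}_{2,t}}}, U_{\AD_{u^{\AD}_{1,t}}}]$, directly from the selection rule in Eq. \eqref{eq_critical_arm_MAD}. The two facts I need are: the minimum lower bound over any $m$ arms is at most the $m$-th largest lower bound $L_{\AD_{(m)},t} = \median\{L_{\AD_i,t}\}$ (applied to $J^{\AD}_{m,t}$, giving $L_{\AD_{l^{\AD}_{2,t}}} \le L_{\AD_{(m)},t}$), and symmetrically the maximum upper bound over any $m$ arms is at least the $m$-th largest upper bound (applied to the complement of $J^{\AD}_{m-1,t}$, giving $U_{\AD_{u^{\AD}_{1,t}}} \ge U_{\AD_{(m)},t}$). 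Both follow from the same counting argument used for the median, and require no new property of the AD confidence intervals.

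Next I would split into the same two cases. If $U_{\AD_{u^{\AD}_{1,t}}} = U_{\AD_{u^{\AD}_{2,t}}}$ or $L_{\AD_{l^{\AD}_{1,t}}} = L_{\AD_{l^{\AD}_{2,t}}}$, then combining the containment with the hypotheses $U_{\AD_{u^{\AD}_{i,t}}} - L_{\AD_{l^{\AD}_{i,t}}} < \epsilon^{\MAD}$ yields $\len(\mathcal{I}_{\AD_{(m)},t}) \le U_{\AD_{u^{\AD}_{1,t}}} - L_{\AD_{l^{\AD}_{2,t}}} < \epsilon^{\MAD}$, contradicting $\len(\mathcal{I}_{\AD_{(m)},t}) \ge \epsilon^{\MAD}$, so this case is vacuous. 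In the remaining case, the same ranking argument as in the median proof forces the empirical-MAD-median arm $k = J^{\AD}_{\kappa_2,t} \setminus J^{\AD}_{\kappa_1,t}$ to satisfy $u^{\AD}_{1,t} = l^{\AD}_{2,t} = k$: the strict inequality $U_{\AD_{u^{\AD}_{1,t}}} \neq U_{\AD_{u^{\AD}_{2,t}}}$ forces $U_{\AD_k}$ to be the largest upper bound among arms outside $J^{\AD}_{m-1,t}$, and $L_{\AD_{l^{\AD}_{1,t}}} \neq L_{\AD_{l^{\AD}_{2,t}}}$ forces $L_{\AD_k}$ to be the smallest lower bound inside $J^{\AD}_{m,t}$. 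The containment then gives $\mathcal{I}_{\AD_{(m)},t} \subseteq \mathcal{I}_{\AD_k,t}$, hence $\len(\mathcal{I}_{\AD_k,t}) \ge \len(\mathcal{I}_{\AD_{(m)},t}) \ge \epsilon^{\MAD}$.

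The step where the MAD analysis genuinely departs from the median analysis — the one I expect to be the crux — is converting this length bound into the sampling condition $\beta_{N_{k,t}} > \epsilon^{\MAD}/4$ required by the definition of $\NEEDY^{\MAD}_{k,t}$, since a confidence interval on $\AD_k$ can be wide even when arm $k$ has been sampled a lot, as it inherits uncertainty from the median. Here I would invoke Lemma \ref{lm_length_confidence_bound_AD} together with the standing hypothesis $\len(\mathcal{I}_{y_{(m)},t}) < \epsilon^{\median} = \epsilon^{\MAD}/4$ (the identity from Remark \ref{rm_twp_eps_relation}): from $\len(\mathcal{I}_{\AD_k,t}) \le \len(\mathcal{I}_{y_k,t}) + \len(\mathcal{I}_{y_{(m)},t}) = 2\beta_{N_{k,t}} + \len(\mathcal{I}_{y_{(m)},t})$ I obtain $2\beta_{N_{k,t}} > \epsilon^{\MAD} - \epsilon^{\MAD}/4 = 3\epsilon^{\MAD}/4$, hence $\beta_{N_{k,t}} > \epsilon^{\MAD}/4$. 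This is precisely why the assumption on the median CI is needed and has no counterpart in the median lemma. Finally, to show $k \in S^{\MAD}_{middle,t}$ I would run the same three sub-cases on the sign of $\AD_k - \AD_{(m)}$, using $\AD_k, \AD_{(m)} \in \mathcal{I}_{\AD_k,t}$ under $W$ and Remark \ref{remark_needy_MAD}: if $\AD_k = \AD_{(m)}$ then $k$ lies in the middle set by Remark \ref{remark_needy_MAD}; if $\AD_k > \AD_{(m)}$ then $\AD_k \ge \AD_{(m-1)}$ and convexity places $c^{\MAD}_1 = (\AD_{(m)} + \AD_{(m-1)})/2 \in \mathcal{I}_{\AD_k,t}$; and symmetrically $c^{\MAD}_2 \in \mathcal{I}_{\AD_k,t}$ when $\AD_k < \AD_{(m)}$. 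Together the two conditions establish $\NEEDY^{\MAD}_{k,t}$.
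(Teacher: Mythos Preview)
Your proposal is correct and follows essentially the same approach as the paper's proof: the same containment $\mathcal{I}_{\AD_{(m)},t} \subseteq [L_{\AD_{l^{\AD}_{2,t}}}, U_{\AD_{u^{\AD}_{1,t}}}]$, the same two-case split, the same identification of $k = u^{\AD}_{1,t} = l^{\AD}_{2,t}$ in the non-vacuous case, the same use of Lemma~\ref{lm_length_confidence_bound_AD} with $\epsilon^{\median}=\epsilon^{\MAD}/4$ to obtain $\beta_{N_{k,t}} > 3\epsilon^{\MAD}/8 > \epsilon^{\MAD}/4$, and the same three sub-cases on the sign of $\AD_k - \AD_{(m)}$ to place $k$ in $S^{\MAD}_{middle,t}$.
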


\begin{proof}
	We first notice $[L_{\AD_{(m)}, t} , U_{\AD_{(m)}, t}] \subseteq [L_{\AD_{l^{\AD}_{2, t}}}, U_{\AD_{u^{\AD}_{1, t}}}]$ based on the selection of ${l^{\AD}_{2, t}}$ and $u^{\AD}_{1, t}$ in Eq. \eqref{eq_critical_arm_MAD}. We then prove the Lemma by considering the following two exclusive cases:
	
	\textbf{Case 1. $U_{\AD_{u^{\AD}_{1, t}}} = U_{\AD_{u^{\AD}_{2, t}}}$ or $L_{\AD_{l^{\AD}_{1, t}}} = L_{\AD_{l^{\AD}_{2, t}}}$:} We immediately have $\len(\mathcal{I}_{\AD_{(m)}, t}) \leq U_{\AD_{u^{\AD}_{1, t}}} - L_{\AD_{l^{\AD}_{2, t}}} < \epsilon^{\MAD}$ according to $U_{\AD_{u^{\AD}_{i, t}}} - L_{\AD_{l^{\AD}_{i, t}}} < \epsilon^{\MAD}$ for both $i \in \{1, 2\}$; but this contradicts with the assumption that $\len(\mathcal{I}_{\AD_{(m)}, t}) \geq \epsilon^{\MAD}$. Thus, this case cannot happen.
	
	\textbf{Case 2. $U_{\AD_{u^{\AD}_{1, t}}} \neq U_{\AD_{u^{\AD}_{2, t}}}$ and $L_{\AD_{l^{\AD}_{1, t}}} \neq L_{\AD_{l^{\AD}_{2, t}}}$:} Let $k$ be the index associated with the empirical median absolute deviation at time $t$ according to the ranking, i.e., $k = J^{\AD}_{\kappa_2, t} \backslash J^{\AD}_{\kappa_1, t}$, we then know $u^{\AD}_{1, t} = l^{\AD}_{2, t} = k$, which leads to $\len(\mathcal{I}_{\AD_k, t}) \geq \len(\mathcal{I}_{\AD_{(m)}, t}) \geq \epsilon^{\MAD}$. Since we have $\len(\mathcal{I}_{y_{(m)}, t}) < \epsilon^{\median} = \epsilon^{\MAD}/4$, we further have $\beta_{N_{k, t}} > 3\epsilon^{\MAD}/8 >\epsilon^{\MAD}/4$ according to Lemma \ref{lm_length_confidence_bound_AD}. $\NEEDY_{k, t}^{\MAD}$ holds true as we have $k \in S^{\MAD}_{middle, t}$ for either of the three sub-cases: 
	
	(1) if $\AD_k = \AD_{(m)}$, we have $k \in S^{\MAD}_{middle, t}$ according to Remark \ref{remark_needy_MAD}; 
	
	(2) if $\AD_k > \AD_{(m)}$, we know that $\AD_k \geq \AD_{(m-1)}$. Since $\AD_{(m)} \in \mathcal{I}_{\AD_{(m)}, t} \subseteq \mathcal{I}_{\AD_k, t}$ and $\AD_k \in \mathcal{I}_{\AD_k, t}$, we then know $c^{\MAD}_1 = (\AD_{(m)} + \AD_{(m-1)})/2\in \mathcal{I}_{\AD_k, t}$, which leads to $k \in S^{\MAD}_{middle, t}$;
	
	(3) if $\AD_k < \AD_{(m)}$, similar to sub-case (2), we have $c^{\MAD}_2 \in \mathcal{I}_{k, t}$, which leads to $k \in S^{\MAD}_{middle, t}$.
\end{proof}

\begin{lemma}
	\label{lm_MAD_needy}
	Assume $\len(\mathcal{I}_{y_{(m)}, t}) < \epsilon^{\median}$. If $\len(\mathcal{I}_{\AD_{(m)}, t}) \geq \epsilon^{\MAD}$, then there exists an arm $k = l^{\AD}_{i, t}$ or $k = u^{\AD}_{i, t}$ such that $\NEEDY^{\MAD}_{k, t}$ holds.
\end{lemma}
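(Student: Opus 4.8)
The plan is to mirror the proof of the median counterpart \cref{lm_median_needy} and obtain the claim as an immediate consequence of the two preceding MAD lemmas, \cref{lm_MAD_needy_1} and \cref{lm_MAD_needy_2}, both of which already carry the standing assumption $\len(\mathcal{I}_{y_{(m)}, t}) < \epsilon^{\median}$. The organizing device is a dichotomy on the widths of the two ``critical gaps'' at the $(m-1)$-th and $m$-th MAD boundaries, namely $U_{\AD_{u^{\AD}_{i, t}}} - L_{\AD_{l^{\AD}_{i, t}}}$ for $i \in \{1, 2\}$, exactly the split used in \cref{lm_MAD_needy_1} versus \cref{lm_MAD_needy_2}.

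First I would treat the case in which there exists some $i \in \{1, 2\}$ with $U_{\AD_{u^{\AD}_{i, t}}} - L_{\AD_{l^{\AD}_{i, t}}} \geq \epsilon^{\MAD}$. Invoking \cref{lm_MAD_needy_1} for this $i$ produces an arm $k \in \{l^{\AD}_{i, t}, u^{\AD}_{i, t}\}$ satisfying $c^{\MAD}_i \in \mathcal{I}_{\AD_k, t}$ together with $\beta_{N_{k, t}} > \epsilon^{\MAD}/4$. The only gluing step is to convert this into the event $\NEEDY^{\MAD}_{k, t}$: by \cref{remark_needy_MAD}, the membership $c^{\MAD}_i \in \mathcal{I}_{\AD_k, t}$ is precisely what places $k$ in $S^{\MAD}_{middle, t}$, and combined with the width bound $\beta_{N_{k, t}} > \epsilon^{\MAD}/4$ this is exactly the definition of $\NEEDY^{\MAD}_{k, t}$ given in \cref{def_ROAILucb_MAD}.

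In the complementary case, where $U_{\AD_{u^{\AD}_{i, t}}} - L_{\AD_{l^{\AD}_{i, t}}} < \epsilon^{\MAD}$ holds for both $i \in \{1, 2\}$, I would apply \cref{lm_MAD_needy_2} directly: together with the hypotheses $\len(\mathcal{I}_{y_{(m)}, t}) < \epsilon^{\median}$ and $\len(\mathcal{I}_{\AD_{(m)}, t}) \geq \epsilon^{\MAD}$, its conclusion hands me an arm $k \in \{l^{\AD}_{1, t}, l^{\AD}_{2, t}, u^{\AD}_{1, t}, u^{\AD}_{2, t}\}$ with $\NEEDY^{\MAD}_{k, t}$. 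Since in both branches the produced arm is of the form $l^{\AD}_{i, t}$ or $u^{\AD}_{i, t}$, the two cases exhaust the possibilities and the statement follows.

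There is essentially no deep obstacle here; the difficulty was already front-loaded into \cref{lm_MAD_needy_1} and \cref{lm_MAD_needy_2}, whose arguments reuse the reasoning of \cref{lm_median_needy_1}. The one point deserving care is that the hypothesis $\len(\mathcal{I}_{y_{(m)}, t}) < \epsilon^{\median}$ must be fed into both sub-lemmas: it is what lets \cref{lm_length_confidence_bound_AD} convert a lower bound on $\len(\mathcal{I}_{\AD_k, t})$ into a constant-fraction lower bound $\beta_{N_{k, t}} > \epsilon^{\MAD}/4$ on the individual-arm confidence width, which is the component of $\NEEDY^{\MAD}_{k, t}$ that is not immediate from set membership. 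I would therefore keep that hypothesis active throughout both branches of the dichotomy.
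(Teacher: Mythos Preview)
Your proposal is correct and mirrors the paper's own argument exactly: the paper's proof of \cref{lm_MAD_needy} is the one-line statement that it ``is a direct consequence of Lemma \ref{lm_MAD_needy_1} and Lemma \ref{lm_MAD_needy_2},'' and your dichotomy on whether some $i\in\{1,2\}$ satisfies $U_{\AD_{u^{\AD}_{i, t}}} - L_{\AD_{l^{\AD}_{i, t}}} \geq \epsilon^{\MAD}$ is precisely how those two lemmas dovetail. Your explicit gluing step in the first branch (using \cref{remark_needy_MAD} to pass from $c^{\MAD}_i \in \mathcal{I}_{\AD_k,t}$ to $k\in S^{\MAD}_{middle,t}$ and hence to $\NEEDY^{\MAD}_{k,t}$) is a correct and useful clarification that the paper leaves implicit.
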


\begin{proof}
	This Lemma is a direct consequence of Lemma \ref{lm_MAD_needy_1} and Lemma \ref{lm_MAD_needy_2}.
\end{proof}

\subsection{Sample Complexity Analysis}
\label{appendix_ROAILucb_sample_complexity}

We analyze the sample complexity upper bound of \roailucb in this section. To reduce the clutter, we will sometimes use the notation $[a \vee b] = \max\{a, b\}$, and use $\vee$ to represent \textbf{or}.

\begin{lemma}
	\label{lm_pulls_theta}
	There exists a universal constant $C$, for any $k \in [n]$, if
	\begin{equation*}
		N_{k, t} \geq \lambda^{\theta}_{k} := C \frac{ 1 }{ \left( \Delta^{\theta}_k \right)^2 } \log  \left( \frac{n}{\delta \Delta^{\theta}_{k}} \right),
	\end{equation*}
then $\NEEDY^{\theta}_{k, t}$ cannot happen.
\end{lemma}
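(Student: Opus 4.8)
The plan is to show that once arm $k$ has been sampled enough times, its confidence interval $\mathcal{I}_{y_k, t}$ separates cleanly from whichever of the two analysis constants $c^{\theta}_1, c^{\theta}_2$ lies on the same side of $\theta$ as $y_k$, which forces $k$ out of $S^{\theta}_{middle, t}$ and hence makes $\NEEDY^{\theta}_{k, t}$ impossible. First I would record the elementary geometry. Since the arms are sorted with $y_i \geq y_{i+1}$ and $y_{n_1+1} < \theta < y_{n_1}$, the constants in Eq. \eqref{eq_ROAILucb_constant} satisfy $c^{\theta}_2 < \theta < c^{\theta}_1$, and by definition $\NEEDY^{\theta}_{k, t}$ (i.e. $k \in S^{\theta}_{middle, t}$) holds exactly when $L_{y_k, t} \leq c^{\theta}_1$ and $U_{y_k, t} \geq c^{\theta}_2$. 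Throughout I work under the good event $W$, so $y_k \in \mathcal{I}_{y_k, t}$; and since $\mathcal{I}_{y_k, t} = [\hat{y}_{k,t} - \beta_{N_{k,t}}, \hat{y}_{k,t} + \beta_{N_{k,t}}]$ with $\hat{y}_{k,t} \in [y_k - \beta_{N_{k,t}}, y_k + \beta_{N_{k,t}}]$, we have $\mathcal{I}_{y_k, t} \subseteq [y_k - 2\beta_{N_{k,t}}, y_k + 2\beta_{N_{k,t}}]$.

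Next I would split into the two cases $y_k > \theta$ and $y_k < \theta$ and reduce each to a gap bound stated in terms of $\Delta_k^{\theta}$. In the outlier case $k \leq n_1$, monotonicity gives $y_k \geq y_{n_1}$, hence $\Delta_k^{\theta} \geq \Delta_{n_1}^{\theta}$, and since $c^{\theta}_1 - \theta = \tfrac12(y_{n_1} - \theta) = \tfrac12\Delta_{n_1}^{\theta}$ we obtain $y_k - c^{\theta}_1 = \Delta_k^{\theta} - \tfrac12\Delta_{n_1}^{\theta} \geq \tfrac12\Delta_k^{\theta}$. The mirror computation in the normal case $k \geq n_1+1$ uses $y_k \leq y_{n_1+1}$ and $\theta - c^{\theta}_2 = \tfrac12\Delta_{n_1+1}^{\theta}$ to give $c^{\theta}_2 - y_k = \Delta_k^{\theta} - \tfrac12\Delta_{n_1+1}^{\theta} \geq \tfrac12\Delta_k^{\theta}$.

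Then comes the key step. If $\beta_{N_{k,t}} < \Delta_k^{\theta}/4$, then in the outlier case $L_{y_k, t} \geq y_k - 2\beta_{N_{k,t}} > c^{\theta}_1$, placing $k$ in $S^{\theta}_{top, t}$; in the normal case $U_{y_k, t} \leq y_k + 2\beta_{N_{k,t}} < c^{\theta}_2$, placing $k$ in $S^{\theta}_{bottom, t}$. Either way $k \notin S^{\theta}_{middle, t}$, so $\NEEDY^{\theta}_{k, t}$ cannot hold. Finally I would invoke \cref{remark_sample_needed} with $\lambda = \Delta_k^{\theta}/4$, which supplies a universal $C$ for which $N_{k,t} \geq C (\Delta_k^{\theta})^{-2}\log\!\left(n/(\delta \Delta_k^{\theta})\right)$ guarantees $\beta_{N_{k,t}} < \Delta_k^{\theta}/4$; the constant factors introduced by the $\tfrac14$ scaling are absorbed into $C$, matching the stated $\lambda^{\theta}_k$. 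The only non-mechanical part is the gap-halving inequality $y_k - c^{\theta}_1 \geq \tfrac12\Delta_k^{\theta}$ (and its mirror), which is precisely where the sorted-arm structure and the choice of $c^{\theta}_1, c^{\theta}_2$ interlock; the rest is routine concentration bookkeeping.
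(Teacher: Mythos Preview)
Your proof is correct and follows essentially the same approach as the paper's: both reduce to showing $\beta_{N_{k,t}} < \Delta_k^\theta/4$ via \cref{remark_sample_needed} and then use the gap-halving inequality $\min_{i=1,2}|c_i^\theta - y_k| \geq \Delta_k^\theta/2$ to conclude $k \notin S^\theta_{middle,t}$. The paper simply asserts the gap-halving step ``by definition'' whereas you spell out the case split and the arithmetic; otherwise the arguments are identical.
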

\begin{proof}
	According to Remark \ref{remark_sample_needed}, we see there exists a universal constant $C$ such that when $N_{k, t} \geq \lambda_k^{\theta}$, we have $\beta_{N_{k, t}} < {\Delta_k^{\theta}}/{4}$. Since $\min_{i = 1, 2} \{|c^{\theta}_i - y_{k}| \} \geq {\Delta_k^{\theta}}/{2}$ by definition, we then know $c_i^{\theta} \notin \mathcal{I}_{k, t}$ when $N_{k, t} \geq \lambda_k^{\theta}$, which indicates that $\NEEDY^{\theta}_{k, t}$ cannot happen.
\end{proof}

\begin{lemma}
	\label{lm_pulls_median}
	There exists a universal constant $C$, for any $k \in [n]$, if
	\begin{equation*}
	N_{k, t} \geq \lambda^{\median}_{k} := C \frac{ 1 }{ \left[\Delta^{\median}_k \vee \epsilon^{\median} \right] ^2 } \log \left(  \frac{n}{\delta \left[ \Delta^{{\median}}_{k} \vee  \epsilon^{{\median}} \right]} \right),
	\end{equation*}
	then $\NEEDY^{{\median}}_{k, t}$ cannot happen.
\end{lemma}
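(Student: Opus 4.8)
The plan is to follow the template of the proof of \cref{lm_pulls_theta} and argue by contradiction: I will assume $\NEEDY^{\median}_{k, t}$ holds at a round $t$ with $N_{k, t} \geq \lambda^{\median}_k$ and derive a contradiction. The first step is to invoke \cref{remark_sample_needed} with confidence radius $\lambda = \tfrac14 [\Delta^{\median}_k \vee \epsilon^{\median}]$; absorbing the resulting multiplicative factors into the universal constant $C$, the hypothesis $N_{k, t} \geq \lambda^{\median}_k$ guarantees $\beta_{N_{k, t}} < \tfrac14 [\Delta^{\median}_k \vee \epsilon^{\median}]$. Since the definition of $\NEEDY^{\median}_{k, t}$ is a conjunction of the membership $k \in S^{\median}_{middle, t}$ and the width lower bound $\beta_{N_{k, t}} \geq \epsilon^{\median}/2$, the strategy is to show that this sample count forces at least one of the two conjuncts to fail.

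The second step is a short quantitative reduction. Assuming $\NEEDY^{\median}_{k, t}$ holds, the width condition gives $\epsilon^{\median}/2 \leq \beta_{N_{k, t}} < \tfrac14 [\Delta^{\median}_k \vee \epsilon^{\median}]$, hence $[\Delta^{\median}_k \vee \epsilon^{\median}] > 2 \epsilon^{\median}$. This cannot hold if the maximum were $\epsilon^{\median}$, so necessarily $[\Delta^{\median}_k \vee \epsilon^{\median}] = \Delta^{\median}_k$ with $\Delta^{\median}_k > 2\epsilon^{\median} > 0$, and consequently $\beta_{N_{k, t}} < \Delta^{\median}_k / 4$. In particular $y_k \neq y_{(m)}$, so by \cref{remark_needy_median} the membership $k \in S^{\median}_{middle, t}$ means $c^{\median}_1 \in \mathcal{I}_{y_k, t}$ or $c^{\median}_2 \in \mathcal{I}_{y_k, t}$; since under the good event $W$ both $y_k$ and any such $c^{\median}_j$ lie within $\beta_{N_{k, t}}$ of $\hat y_{k, t}$, the triangle inequality yields $\min_{j \in \{1,2\}} |c^{\median}_j - y_k| \leq 2 \beta_{N_{k, t}} < \Delta^{\median}_k / 2$.

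The remaining, and main, step is a purely geometric claim contradicting the last display: whenever $y_k \neq y_{(m)}$, one has $\min_{j \in \{1,2\}} |c^{\median}_j - y_k| \geq \Delta^{\median}_k / 2$. I would prove this in the case $y_k > y_{(m)}$ (the case $y_k < y_{(m)}$ being identical after replacing $c^{\median}_1, y_{(m-1)}$ by $c^{\median}_2, y_{(m+1)}$). Because the means are sorted, $y_k > y_{(m)}$ forces $y_k \geq y_{(m-1)}$, so $c^{\median}_2 = (y_{(m)} + y_{(m+1)})/2 \leq y_{(m)} < y_k$ gives $|c^{\median}_2 - y_k| \geq \Delta^{\median}_k$, while $c^{\median}_1 = (y_{(m-1)} + y_{(m)})/2 \leq y_k$ gives $|c^{\median}_1 - y_k| = \Delta^{\median}_k - \tfrac12(y_{(m-1)} - y_{(m)}) \geq \Delta^{\median}_k / 2$, where the inequality uses $y_{(m-1)} - y_{(m)} \leq y_k - y_{(m)} = \Delta^{\median}_k$. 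Combining this lower bound with the strict upper bound from the previous paragraph gives $\Delta^{\median}_k / 2 < \Delta^{\median}_k / 2$, the desired contradiction, so $\NEEDY^{\median}_{k, t}$ cannot happen. The subtlety I would flag as the hardest part is precisely this geometric estimate: unlike the threshold case of \cref{lm_pulls_theta}, where the two constants straddle $\theta$ and $\min_j |c^{\theta}_j - y_k| \geq \Delta^{\theta}_k/2$ is immediate from the definition, here both $c^{\median}_1$ and $c^{\median}_2$ can lie on the same side of $y_k$ and can even coincide with $y_{(m)}$ under ties, so the half-gap bound must be obtained through the sorting argument above rather than read off directly.
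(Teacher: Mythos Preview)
Your proof is correct and follows essentially the same approach as the paper: both invoke \cref{remark_sample_needed} to force $\beta_{N_{k,t}}$ small and then argue that either the width conjunct $\beta_{N_{k,t}} \geq \epsilon^{\median}/2$ or the membership conjunct $k \in S^{\median}_{middle,t}$ of $\NEEDY^{\median}_{k,t}$ must fail, the latter via the half-gap inequality $\min_{j}|c_j^{\median}-y_k|\geq \Delta_k^{\median}/2$. The only cosmetic differences are that the paper splits into two explicit cases rather than arguing by contradiction, and that the paper asserts the half-gap inequality ``by definition'' whereas you actually supply the sorting argument; your added detail on that geometric estimate is a genuine improvement in rigor over the paper's write-up.
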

\begin{proof}
	According to Remark \ref{remark_sample_needed}, we see there exist a universal constant $C$ such that when $N_{k, t} \geq \lambda_k^{\theta}$, we have $\beta_{N_{k, t}} < \max \left\{ {\Delta_k^{{\median}}}/{4}, {\epsilon^{{\median}}}/{2} \right\}$.\footnote{Note that $\max \left\{ {\Delta_k^{{\median}}}/{4}, {\epsilon^{{\median}}}/{2} \right\}$ and $\max \left\{ {\Delta_k^{{\median}}}, {\epsilon^{{\median}}} \right\}$ are in the same order.} We consider the following two cases:
	
	\textbf{Case 1. $\beta_{N_{k, t}} \leq \epsilon^{\median}/2$:} We directly know that $\NEEDY^{{\median}}_{k, t}$ cannot happen according to Definition \ref{def_ROAILucb_median}.
	
	\textbf{Case 2. $\beta_{N_{k, t}} \leq {\Delta_k^{{\median}}}/{4}$:} Since $\min_{i = 1, 2} \{|c^{{\median}}_i - y_{k}| \} \geq {\Delta_k^{{\median}}}/{2}$ by definition, we then know $c^{\median}_i \notin \mathcal{I}_{k, t}$ when $N_{k, t} \geq \lambda_k^{\median}$, which indicates that $\NEEDY^{{\median}}_{k, t}$ cannot happen.
\end{proof}

\begin{lemma}
	\label{lm_pulls_MAD}
	Assume $\len(\mathcal{I}_{y_{(m)}, t}) < \epsilon^{{\median}}= {\epsilon^{\MAD}}/{4}$. There exists a universal constant $C$, for any $k \in [n]$, if
	\begin{equation*}
		N_{k, t} \geq \lambda^{\MAD}_{k} := C \frac{ 1 }{ \left[ \Delta^{\MAD}_k \vee \epsilon^{\MAD} \right]^2 } \log  \left( \frac{n}{\delta  \left[ \Delta^{\MAD}_{k} \vee \epsilon^{\MAD} \right] } \right),
	\end{equation*}
 then $\NEEDY^{\MAD}_{k, t}$ cannot happen.
\end{lemma}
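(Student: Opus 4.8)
The plan is to replicate the argument of \cref{lm_pulls_median}, adding the extra care demanded by the fact that the width of $\mathcal{I}_{\AD_k, t}$ is governed not only by $\beta_{N_{k, t}}$ but also by the uncertainty in the median. First I would invoke \cref{remark_sample_needed} to convert the pull-count hypothesis into a width bound: choosing the universal constant $C$ appropriately, $N_{k, t} \geq \lambda^{\MAD}_k$ forces $\beta_{N_{k, t}} < \max\{\Delta^{\MAD}_k / 8, \epsilon^{\MAD}/4\}$ (the footnote's observation that this maximum is of the same order as $\max\{\Delta^{\MAD}_k, \epsilon^{\MAD}\}$ lets me absorb constants freely). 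Since $\NEEDY^{\MAD}_{k, t}$ is the conjunction $(k \in S^{\MAD}_{middle, t}) \cap (\beta_{N_{k, t}} > \epsilon^{\MAD}/4)$ by \cref{def_ROAILucb_MAD}, it suffices to falsify one conjunct, and I would split on which term attains the maximum above.

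In the regime where the maximum is $\epsilon^{\MAD}/4$, we have $\beta_{N_{k, t}} \leq \epsilon^{\MAD}/4$, so the second conjunct fails outright and $\NEEDY^{\MAD}_{k, t}$ cannot hold. In the complementary regime $\Delta^{\MAD}_k / 8 \geq \epsilon^{\MAD}/4$, where $\beta_{N_{k, t}} < \Delta^{\MAD}_k / 8$ and $\epsilon^{\MAD}/4 \leq \Delta^{\MAD}_k / 8$, I would instead kill the first conjunct by showing $c^{\MAD}_i \notin \mathcal{I}_{\AD_k, t}$ for both $i$. This is exactly where the standing assumption $\len(\mathcal{I}_{y_{(m)}, t}) < \epsilon^{\median} = \epsilon^{\MAD}/4$ enters: by \cref{lm_length_confidence_bound_AD},
\begin{equation*}
\len(\mathcal{I}_{\AD_k, t}) \leq \len(\mathcal{I}_{y_k, t}) + \len(\mathcal{I}_{y_{(m)}, t}) = 2\beta_{N_{k, t}} + \len(\mathcal{I}_{y_{(m)}, t}) < \frac{\Delta^{\MAD}_k}{4} + \frac{\epsilon^{\MAD}}{4} \leq \frac{\Delta^{\MAD}_k}{2}.
\end{equation*}
Because $\AD_k \in \mathcal{I}_{\AD_k, t}$ under the good event $W$, while the geometric fact $\min_{i \in \{1,2\}} |c^{\MAD}_i - \AD_k| \geq \Delta^{\MAD}_k / 2$ holds by the definitions in \cref{eq_ROAILucb_constant} (exactly as for the median: when $\AD_k > \AD_{(m)}$ one has $\AD_k \geq \AD_{(m-1)}$, and symmetrically otherwise), no point of the interval $\mathcal{I}_{\AD_k, t}$ can reach either $c^{\MAD}_i$. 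Hence $k \notin S^{\MAD}_{middle, t}$ and the first conjunct fails, so again $\NEEDY^{\MAD}_{k, t}$ cannot hold.

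The main obstacle, and the only genuine departure from \cref{lm_pulls_median}, is the inflation of $\mathcal{I}_{\AD_k, t}$ by the median's confidence width: unlike $\mathcal{I}_{y_k, t}$, whose half-width is exactly $\beta_{N_{k, t}}$, the AD interval inherits an additive $\len(\mathcal{I}_{y_{(m)}, t})/2$ term that does not shrink as arm $k$ alone is sampled. Controlling this term is precisely why the lemma is stated under the hypothesis $\len(\mathcal{I}_{y_{(m)}, t}) < \epsilon^{\MAD}/4$ and why \cref{lm_length_confidence_bound_AD} is invoked; once the term is pinned below $\epsilon^{\MAD}/4$, the displayed chain closes with room to spare and the remainder is a routine case split. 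The only other point I would verify cleanly is the geometric inequality on the boundary constants $c^{\MAD}_i$, dispatched by the same sorted-order argument already used for the median case.
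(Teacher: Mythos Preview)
Your proposal is correct and follows essentially the same approach as the paper: invoke \cref{remark_sample_needed} to bound $\beta_{N_{k,t}}$, split into two cases depending on which of $\Delta^{\MAD}_k/8$ or $\epsilon^{\MAD}/4$ dominates, dispatch the first case directly from \cref{def_ROAILucb_MAD}, and in the second case use \cref{lm_length_confidence_bound_AD} together with the standing bound on $\len(\mathcal{I}_{y_{(m)},t})$ to squeeze $\len(\mathcal{I}_{\AD_k,t})$ below $\Delta^{\MAD}_k/2$ and exclude both $c^{\MAD}_i$. The only cosmetic difference is that the paper first rewrites the bound as $\beta_{N_{k,t}} < \Delta^{\MAD}_k/4 - \epsilon^{\MAD}/4$ before plugging into the length estimate, whereas you keep $\beta_{N_{k,t}} < \Delta^{\MAD}_k/8$ and use $\epsilon^{\MAD}/4 \leq \Delta^{\MAD}_k/8$ at the end; the arithmetic is equivalent.
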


\begin{proof}
	According to Remark \ref{remark_sample_needed}, it's easy to see that there exist a universal constant $C$ such that when $N_{k, t} \geq \lambda_k^{\theta}$, we have $\beta_{N_{k, t}} < \max \left\{ {\Delta_k^{\MAD}}/{8} , {\epsilon^{\MAD}}/{4} \right\}  \leq \max \left\{ {\Delta_k^{\MAD}}/{4} - {\epsilon^{\MAD}}/{4} , {\epsilon^{\MAD}}/{4} \right\} $, where the second inequality is a mathematical fact obtained by comparing ${\Delta_k^{\MAD}}/{8}$ and ${\epsilon^{\MAD}}/{4} $ and also noticing ${\Delta_k^{\MAD}}/{4} - {\epsilon^{\MAD}}/{4} = {\Delta_k^{\MAD}}/{8} + {\Delta_k^{\MAD}}/{8} - {\epsilon^{\MAD}}/{4}$. As before, we consider the following two cases.
	
	\textbf{Case 1. $\beta_{N_{k, t}} < {\epsilon^{\MAD}}/{4} $:} We directly know that $\NEEDY^{\MAD}_{k, t}$ cannot happen according to Definition \ref{def_ROAILucb_MAD}.
	
	\textbf{Case 2. $\beta_{N_{k, t}} < {\Delta_k^{\MAD}}/{4} - {\epsilon^{\MAD}}/{4}$:} If $N_{k, t} \geq \lambda_k^{\MAD}$ and $\len(\mathcal{I}_{y_{(m)}, t}) \leq \epsilon^{{\median}} = {\epsilon^{\MAD}}/{4}$, we then have
	\begin{equation*}
	\len(\mathcal{I}_{\AD_k, t}) \leq \len(\mathcal{I}_{k, t})   +\len(\mathcal{I}_{y_{(m)}, t}) < 2\left({\Delta_k^{\MAD}}/{4} - {\epsilon^{\MAD}}/{4}\right) +  {\epsilon^{\MAD}}/{4} < {\Delta_k^{\MAD}}/{2}
	\end{equation*}
	according to Lemma \ref{lm_length_confidence_bound_AD}. Since $\min_{i = 1, 2} \{|c^{\MAD}_i - \AD_{k}| \} \geq {\Delta_k^{\MAD}}/{2}$ by definition, we then know $c^{\MAD}_i \notin \mathcal{I}_{\AD_k, t}$, which indicates that $\NEEDY^{\MAD}_{k, t}$ cannot happen.
\end{proof}

The sample complexity of \roailucb is characterized in the following theorem. 

\begin{theorem}
	\label{thm_complexity_Lcub}
	With probability of at least $1 - \delta$, the sample complexity of \roailucb is upper bounded by 
	\begin{equation*}
		O \left(  \sum_{i=1}^n  \frac{ \log \left( n / \delta \tilde\Delta_i^{*} \right)}{(\tilde\Delta_i^{*})^2} \right),
	\end{equation*}

			where	
	\begin{equation*}
	\tilde{\Delta}^*_i = \max \{\Delta_*^{\theta}/(1+k),  \min \{ \Delta_i^{\theta}, \Delta_i^{\median}, \Delta_i^{\MAD} \} \}.
	\end{equation*}
\end{theorem}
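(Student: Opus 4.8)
The plan is to run the whole argument deterministically under the good event $W$ of \cref{lem:confidence_interval}, which holds with probability at least $1-\delta$ and makes every confidence interval valid for all $t$. On top of $W$ I will use a charging scheme: to every round $t$ in which \cref{algorithm_ROAILucb} has not yet stopped I assign a pair $(k,q)$, where $k\in[n]$ is an arm and $q\in\{\theta,\median,\MAD\}$ is a quantity, such that arm $k$ is actually sampled that round and the event $\NEEDY^{q}_{k,t}$ holds. I then cap, for each fixed pair, the number of rounds that can be charged to it, sum the caps to bound the number of rounds, and multiply by $10$ (the per-round sampling budget) to bound the total number of pulls.

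First I would show that every live round admits such a sampled needy arm. Fix $t$ with $A^{\theta}_{L,t+1}\neq\emptyset$. If $\NEEDY^{\theta}_{\theta,t}$ fails, then $\mathcal{I}_{\theta,t}\subseteq(c^{\theta}_2,c^{\theta}_1)$ (as in the proof of \cref{lm_non_termination_global}); picking any $a\in A^{\theta}_{L,t+1}$, its interval overlaps $\mathcal{I}_{\theta,t}$ and hence $(c^{\theta}_2,c^{\theta}_1)$, which forces $a\in S^{\theta}_{middle,t}$, i.e.\ $\NEEDY^{\theta}_{a,t}$ holds, and since $a$ is sampled I charge the round to $(a,\theta)$. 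If instead $\NEEDY^{\theta}_{\theta,t}$ holds, \cref{lm_non_termination_local} gives $\len(\mathcal{I}_{y_{(m)},t})\geq\epsilon^{\median}$ or $\len(\mathcal{I}_{\AD_{(m)},t})\geq\epsilon^{\MAD}$. In the former case \cref{lm_median_needy} produces a median-boundary arm $k\in\{l_{i,t},u_{i,t}\}$ with $\NEEDY^{\median}_{k,t}$, which lies in $A^{\median}_{L,t+1}$ and is sampled, so I charge to $(k,\median)$. Otherwise $\len(\mathcal{I}_{y_{(m)},t})<\epsilon^{\median}$ and $\len(\mathcal{I}_{\AD_{(m)},t})\geq\epsilon^{\MAD}$, so the hypothesis of \cref{lm_MAD_needy} is met and it yields a MAD-boundary arm $k$ with $\NEEDY^{\MAD}_{k,t}$, sampled in $A^{\MAD}_{L,t+1}$; I charge to $(k,\MAD)$.

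Next I would cap each pair. Fix $(k,q)$. Every round charged to $(k,q)$ samples $k$, so $N_{k,t}$ strictly increases along these rounds; and \cref{lm_pulls_theta}, \cref{lm_pulls_median}, \cref{lm_pulls_MAD} state that once $N_{k,t}$ passes $\lambda^{q}_{k}$ the event $\NEEDY^{q}_{k,t}$ can no longer hold (for $q=\MAD$ the relevant lemma assumes $\len(\mathcal{I}_{y_{(m)},t})<\epsilon^{\median}$, which is exactly the sub-case in which rounds are charged to $\MAD$, so its hypothesis holds at every such round). Hence at most $\lambda^{q}_{k}$ rounds are charged to $(k,q)$. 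Since each live round is charged to exactly one pair, the number of live rounds is at most $\sum_{k\in[n]}(\lambda^{\theta}_{k}+\lambda^{\median}_{k}+\lambda^{\MAD}_{k})$, which is finite; this proves the stopping time is almost surely finite, and since at most $10$ arms are pulled per round the total number of pulls is at most $10$ times this sum.

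Finally I would simplify the sum. Using $\Delta^{\theta}_i\geq\Delta^{\theta}_*$ and $\epsilon^{\median},\epsilon^{\MAD}=\Theta(\Delta^{\theta}_*/(1+k))$, the contribution $\lambda^{\theta}_i+\lambda^{\median}_i+\lambda^{\MAD}_i$ of a fixed arm $i$ is dominated by the term with the smallest denominator $\min\{\Delta^{\theta}_i,\ \Delta^{\median}_i\vee\epsilon^{\median},\ \Delta^{\MAD}_i\vee\epsilon^{\MAD}\}$, which up to constants equals $\tilde\Delta^{*}_i=\max\{\Delta^{\theta}_*/(1+k),\min\{\Delta^{\theta}_i,\Delta^{\median}_i,\Delta^{\MAD}_i\}\}$: when $\min\{\Delta^{\theta}_i,\Delta^{\median}_i,\Delta^{\MAD}_i\}\geq\Delta^{\theta}_*/(1+k)$ the $\vee\epsilon$ floors are inactive, and otherwise the binding gap is a median or MAD gap (since $\Delta^{\theta}_i\geq\Delta^{\theta}_*$) whose $\vee\epsilon$ floor pins it at $\Theta(\Delta^{\theta}_*/(1+k))$. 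Substituting yields the claimed $O\bigl(\sum_i \log(n/\delta\tilde\Delta^{*}_i)/(\tilde\Delta^{*}_i)^2\bigr)$. The main obstacle is Step~1 coupled with the per-pair cap: one must ensure the needy arm located by the non-termination lemmas is always one the algorithm actually samples (this is why the threshold active set is intersected with the overlap condition, while the median/MAD boundary arms carry no such filter), and the case split through \cref{lm_non_termination_local} is essential so that \cref{lm_MAD_needy} and \cref{lm_pulls_MAD}, which both require the median to be resolved, apply precisely on the rounds charged to $\MAD$. The concentration input is inherited from \cref{remark_sample_needed}, and the bookkeeping in the last step is routine.
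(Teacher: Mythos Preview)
Your proposal is correct and follows essentially the same route as the paper: the paper does exactly this case split (via \cref{lm_non_termination_global}, \cref{lm_non_termination_local}, \cref{lm_median_needy}, \cref{lm_MAD_needy}) and then caps using \cref{lm_pulls_theta}--\cref{lm_pulls_MAD}, arriving at $\sum_a \max_j \lambda_a^j$ instead of your $\sum_a\sum_q \lambda_a^q$, which differs only by a factor of $3$. The only cosmetic slips are that the needy arm you locate lies in $A_{L,t+1}$ and is sampled at round $t{+}1$ rather than $t$ (harmless for the cap, since $N_{k,t}$ still strictly increases along charged rounds), and your direct shortcut in the $\theta$ case---using that any $a\in A^\theta_{L,t+1}$ has $\mathcal{I}_{y_a,t}$ overlapping $\mathcal{I}_{\theta,t}\subset(c^\theta_2,c^\theta_1)$---is a clean replacement for the paper's invocation of \cref{lm_non_termination_global}.
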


\begin{proof}
	
	We only need to upper bound the total number of rounds performed by Algorithm \ref{algorithm_ROAILucb} before termination as Algorithm \ref{algorithm_ROAILucb} only plays a constant number of arms at each round. To simplify the analysis, we first define the following notations:
	
	\begin{align*}
		\text{NT}_t &= ( \text{Algorithm \ref{algorithm_ROAILucb} doesn't terminate at round} \ t  ), \\
		A_t & = ( \len(\mathcal{I}_{y_{(m)}, t})\geq \epsilon^{{\median}}   ),\\
		B_t & = (\len(\mathcal{I}_{\AD_{(m)}, t}) \geq \epsilon^{\MAD} ),\\
		C_t & = \text{NT}_t \cap \NEEDY^{\theta}_{\theta, t}.
	\end{align*}

	The total number of rounds up to time $T$ is 

\begingroup
\allowdisplaybreaks
	\begin{align}
	\# \text{rounds} (T)  = & \sum_{t=1}^{T} \mathds{1} \left[ \text{NT}_t \right] \nonumber\\
	= & \sum_{t=1}^{T} \mathds{1} \left[ \left( \text{NT}_t \cap \neg \NEEDY^{\theta}_{\theta, t}\right) \bigcup \left( \text{NT}_t \cap \NEEDY^{\theta}_{\theta, t} \right) \right] \nonumber \\
	= & \sum_{t=1}^{T} \mathds{1} \biggl[ \left( \NEEDY^{\theta}_{l_{\theta, t}} \cup \NEEDY^{\theta}_{u_{\theta, t}}  \right) \bigcup \Bigl( C_t \cap A_t \Bigr) \bigcup \Bigl( C_t \cap \neg A_t  \cap B_t\Bigr) \bigcup  \Bigl( C_t \cap \neg A_t  \cap \neg B_t\Bigr)\biggr] \label{eq_lucb_rounds_line3}\\
	\leq & \sum_{t=1}^{T} \mathds{1} \biggl[ \left( \NEEDY^{\theta}_{l_{\theta, t}} \cup \NEEDY^{\theta}_{u_{\theta, t}}  \right) \bigcup \Bigl(A_t \Bigr) \bigcup \Bigl(  \neg A_t  \cap B_t\Bigr) \biggr] \label{eq_lucb_rounds_line4}\\
	\leq & \sum_{t=1}^{T} \mathds{1} \biggl[ \bigcup_{a \in [n]} \biggl( \Bigl( \left( a = l_{\theta, t} \vee u_{\theta, t} \right) \cap \NEEDY^{\theta}_{a, t} \Bigr) \bigcup \Bigl(\left( a = l_{i, t} \vee u_{i, t} \right) \cap \NEEDY^{{\median}}_{a, t} \Bigr)  \label{eq_lucb_rounds_line5} \\
	& \bigcup \Bigl(  \neg A_t  \cap \left( a = l^{\AD}_{i, t} \vee u^{\AD}_{i, t} \right) \cap \NEEDY^{\MAD}_{a, t}\Bigr) \biggr)\biggr]  \nonumber \\
	\leq & \sum_{t=1}^{T} \sum_{a \in [n]} \mathds{1} \biggl[ \Bigl( \left( a = l_{\theta, t} \vee u_{\theta, t} \right) \cap \NEEDY^{\theta}_{a, t} \Bigr) \bigcup \Bigl(\left( a = l_{i, t} \vee u_{i, t} \right) \cap \NEEDY^{{\median}}_{a, t} \Bigr) \nonumber   \\
	& \bigcup \Bigl(  \neg A_t  \cap \left( a = l^{\AD}_{i, t} \vee u^{\AD}_{i, t} \right) \cap \NEEDY^{\MAD}_{a, t}\Bigr) \biggr] \nonumber  \\
	\leq & \sum_{t=1}^{T} \sum_{a \in [n]} \mathds{1} \biggl[ \Bigl( \left( a = l_{\theta, t} \vee u_{\theta, t} \right) \cap N_{a,t} \leq \lambda_a^{\theta} \Bigr) \bigcup \Bigl(\left( a = l_{i, t} \vee u_{i, t} \right) \cap N_{a,t} \leq \lambda_a^{{\median}} \Bigr)   \label{eq_lucb_rounds_line9} \\
	& \bigcup \Bigl(  \neg A_t  \cap \left( a = l^{\AD}_{i, t} \vee u^{\AD}_{i, t} \right) \cap N_{a,t} \leq \lambda_a^{\MAD}\Bigr) \biggr]\nonumber  \\
	\leq &  \sum_{a \in [n]}  \sum_{t=1}^{T} \mathds{1} \biggr[ \Bigr( \left( a = l_{\theta, t} \vee u_{\theta, t} \right) \cap N_{a,t} \leq \lambda_a^{\theta} \Bigl) \bigcup \Bigl(\left( a = l_{i, t} \vee u_{i, t} \right) \cap N_{a,t} \leq \lambda_a^{{\median}} \Bigr)  \nonumber \\
	& \bigcup \Bigl(   \left( a = l^{\AD}_{i, t} \vee u^{\AD}_{i, t} \right) \cap N_{a,t} \leq \lambda_a^{\MAD}\Bigr) \biggl] \nonumber \\
	\leq & \sum_{a \in [n]} \max_{j \in \{\theta, {\median}, \MAD\}} \{ \lambda_a^j \}\nonumber .
	\end{align}
	\endgroup

	where the Eq. \eqref{eq_lucb_rounds_line3} comes form Lemma \ref{lm_non_termination_global} and Lemma \ref{lm_non_termination_local}; the Eq. \eqref{eq_lucb_rounds_line4} is derived by noticing $C_t \cap \neg A_t  \cap \neg B_t = \emptyset$; the Eq. \eqref{eq_lucb_rounds_line5} comes from Lemma \ref{lm_median_needy} and Lemma \ref{lm_MAD_needy}; and the Eq. \eqref{eq_lucb_rounds_line9} comes from Lemma \ref{lm_pulls_theta}, Lemma \ref{lm_pulls_median} and Lemma \ref{lm_pulls_MAD}.
	
	Notice the fact that $\epsilon^{{\median}} = {\epsilon^{\MAD}}/{4} = \Theta \left( {\Delta^\theta_*}/{(1+k)} \right)$, and for $\forall i \in [n]$, $\Delta^{\theta}_{i} \geq \Delta^{\theta}_*/(1+k)$. An analysis with respect to all possible orderings over $\{\Delta^{\theta}_*/(1+k), \Delta^{\theta}_{i}, \Delta^{\median}_i, \Delta^{\MAD}_i\}$ leads to the following result

	\begin{equation*}
	\sum_{a \in [n]} \max_{j \in \{\theta, {\median}, \MAD\}} \{ \lambda_a^j \} \leq O \left(  \sum_{i=1}^n  \frac{ \log \left( n / \delta \tilde\Delta_i^{*} \right)}{(\tilde\Delta_i^{*})^2} \right).
	\end{equation*}
\end{proof}

\section{Lower Bound: Proof of \cref{thm_lower_bound_worst_case}}


Our proof of lower bounds rely on the \emph{change of measure} lemma proved in \citep{kaufmann2016complexity}, which will be restated shortly for completeness. Recall that we use $D_y = (D_{y_1}, \dots, D_{y_n})$ to represent a bandit instance, and assume each arm follows the distribution $D_{y_i} := \mathcal{N}(y_i, 1)$. $\mathbb{E}_{y} (\cdot)$ is used to represent the expectation with respect to the bandit instance $D_y$ and randomness in the algorithm. We will use ${\KL}(P, Q)$ to denote the KL-divergence between distribution $P$ and $Q$. Based on the calculation of KL-divergence between two Gaussian distributions, we also have
\begin{equation*}
 {\KL} (D_{y_1}, D_{y_2}) = {\KL} (\mathcal{N}(y_1, 1), \mathcal{N}(y_2, 1)) = {(y_1 - y_2)^2}/{2}.
\end{equation*}

\begin{lemma} (\cite{kaufmann2016complexity})
	\label{lm_change_measure}
	Let $D_y$ and $D_{y^{\prime}}$ be two bandit instances with $n$ arms such that for all $i$, the distributions $D_{y_i}$ and $D_{y_i^{\prime}}$ are mutually absolutely continuous. Let $\tau$ be a stopping time with respect to filtration $\{\mathcal{F}_t \}$. For any event $A \in \mathcal{F}_{\tau}$, we have
	\begin{equation*}
	\sum_{i = 1}^{n} \mathbb{E}_y[N_{i, \tau}] \cdot {\KL}(D_{y_i} , D_{y_i^{\prime}}) \geq d \left( \mathbb{P}_y(A), \mathbb{P}_{y^{\prime}}(A) \right),
	\end{equation*}
	where $d(x,y) = x \log \left( \frac{x}{y} \right) + (1 - x) \log \left( \frac{1-x}{1-y} \right) $, with the convention that $d(0, 0) = d(1, 1) = 0$.
\end{lemma}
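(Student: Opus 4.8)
The plan is to reduce the claim to the data-processing (contraction) inequality for the KL divergence, applied to the log-likelihood-ratio process generated by the interaction between the algorithm and the bandit. Write $I_t$ for the arm pulled at round $t$ and $X_t$ for the observed reward, so that $\mathcal{F}_t = \sigma(I_1, X_1, \dots, I_t, X_t)$. Since each pair $D_{y_i}, D_{y_i'}$ is mutually absolutely continuous, the restrictions $\mathbb{P}_y^{\mathcal{F}_t}$ and $\mathbb{P}_{y'}^{\mathcal{F}_t}$ of the two path measures to $\mathcal{F}_t$ are mutually absolutely continuous, and I would define the log-likelihood ratio
\begin{equation*}
L_t = \log \frac{d\mathbb{P}_y^{\mathcal{F}_t}}{d\mathbb{P}_{y'}^{\mathcal{F}_t}} = \sum_{s=1}^{t} \log \frac{dD_{y_{I_s}}}{dD_{y'_{I_s}}}(X_s),
\end{equation*}
where the second equality holds because the algorithm's internal randomization and sampling rule are shared across the two instances and cancel in the ratio, leaving only the reward-likelihood terms.

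The first key step is a Wald-type identity, $\mathbb{E}_y[L_\tau] = \sum_{i=1}^n \mathbb{E}_y[N_{i,\tau}] \, \KL(D_{y_i}, D_{y'_i})$. To establish it I would condition on $\mathcal{F}_{s-1}$ at each round: given $\mathcal{F}_{s-1}$ the arm $I_s$ is measurable and $X_s \sim D_{y_{I_s}}$ under $\mathbb{P}_y$, so the increment $\log \tfrac{dD_{y_{I_s}}}{dD_{y'_{I_s}}}(X_s)$ has conditional expectation $\KL(D_{y_{I_s}}, D_{y'_{I_s}})$. Consequently $L_t - \sum_i N_{i,t}\KL(D_{y_i},D_{y'_i})$ is a $\mathbb{P}_y$-martingale, and optional stopping (valid since $\tau$ is almost surely finite) yields the identity at $\tau$; equivalently one can rearrange $\mathbb{E}_y[L_\tau] = \sum_i \KL(D_{y_i},D_{y'_i}) \, \mathbb{E}_y\!\left[\sum_{s \le \tau} \mathds{1}[I_s=i]\right]$ and apply the tower property, using $N_{i,\tau} = \sum_{s\le\tau}\mathds{1}[I_s=i]$.

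The second key step identifies $\mathbb{E}_y[L_\tau]$ with $\KL(\mathbb{P}_y^{\mathcal{F}_\tau}, \mathbb{P}_{y'}^{\mathcal{F}_\tau})$, the KL divergence between the two measures restricted to the stopped $\sigma$-algebra $\mathcal{F}_\tau$ (immediate from $L_\tau$ being the log Radon--Nikodym derivative). I would then apply the data-processing inequality to the binary statistic $\mathds{1}[A]$, which is $\mathcal{F}_\tau$-measurable by hypothesis: its pushforwards under $\mathbb{P}_y^{\mathcal{F}_\tau}$ and $\mathbb{P}_{y'}^{\mathcal{F}_\tau}$ are $\mathrm{Ber}(\mathbb{P}_y(A))$ and $\mathrm{Ber}(\mathbb{P}_{y'}(A))$, so contraction gives
\begin{equation*}
\KL(\mathbb{P}_y^{\mathcal{F}_\tau}, \mathbb{P}_{y'}^{\mathcal{F}_\tau}) \ge \KL\bigl(\mathrm{Ber}(\mathbb{P}_y(A)), \mathrm{Ber}(\mathbb{P}_{y'}(A))\bigr) = d\bigl(\mathbb{P}_y(A), \mathbb{P}_{y'}(A)\bigr).
\end{equation*}
Chaining the three displays proves the lemma.

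The hard part will be the rigorous treatment of the random stopping time $\tau$: justifying that $L_\tau$ is integrable and that both the Wald identity and the Radon--Nikodym identification survive the passage from a fixed horizon to $\mathcal{F}_\tau$. This requires the almost-sure finiteness of $\tau$ (or an optional-stopping argument on the martingale above) and the verification that $\mathbb{P}_y^{\mathcal{F}_\tau} \ll \mathbb{P}_{y'}^{\mathcal{F}_\tau}$ on the stopped $\sigma$-algebra. Once these measure-theoretic points are secured, the per-step KL computation and the final contraction step are routine.
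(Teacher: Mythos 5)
The paper does not prove this lemma itself; it imports it verbatim from \citet{kaufmann2016complexity}, whose proof is exactly the two-step argument you outline: the Wald-type identity $\mathbb{E}_y[L_\tau] = \sum_{i=1}^n \mathbb{E}_y[N_{i,\tau}] \cdot \KL(D_{y_i}, D_{y_i^{\prime}})$ for the log-likelihood ratio of the stopped observation process, followed by the data-processing inequality applied to the $\mathcal{F}_\tau$-measurable indicator $\mathds{1}[A]$. Your proposal is therefore correct and matches the intended (cited) proof, and you rightly flag the only genuinely delicate points --- integrability of $L_\tau$ and the optional-stopping step at the almost surely finite $\tau$ --- which the cited reference secures with auxiliary lemmas (noting also that if some $\mathbb{E}_y[N_{i,\tau}]$ is infinite the inequality is trivial).
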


For any $D_y \in {\mathcal{M}}_{n, \rho}$, we assume $y_i \geq y_{i+1}$ and use $S_o = \{1, \dots, n_1\}$ to denote the subset of outlier arms. We use $\theta$ as derived from Eq. \eqref{eq_robust_threshold}. Before get into our proof, we first define some mapping functions for each arm as followings:
\begin{equation*}
{\psi}^{\theta}_{ \rho} (y_i) = \begin{cases}
\theta - \rho & \textrm{if} \ \ y_i > \theta\\
\theta+ \rho & \textrm{if} \ \ y_i < \theta\\
\end{cases},
\end{equation*}
\begin{equation*}
{\psi}^{{\median}}_{\rho} (y_i)= \begin{cases}
y_{(m)} - \rho & \textrm{if} \ \ y_i \geq y_{(m)}\\
y_{(m)} + \rho & \textrm{if} \ \ y_i < y_{(m)}\\
\end{cases},
\end{equation*}
and 
\begin{equation*}
{\psi}^{\MAD}_{\rho} (y_i) = \begin{cases}
y_{(m)} - \rho & \textrm{if} \ \ y_i = y_{(m)} \\
y_i - \AD_i + \AD_{(m)} - \rho & \textrm{if} \ \ y_i   > y_{(m)} \ \text{and} \ \AD_i \geq \AD_{(m)}\\
y_i  -  \AD_i + \AD_{(m)}  + \rho & \textrm{if} \ \ y_i > y_{(m)} \ \text{and} \ \AD_i < \AD_{(m)}\\
y_i + \AD_i - \AD_{(m)} + \rho& \textrm{if} \ \ y_i < y_{(m)} \ \text{and} \ \AD_i \geq \AD_{(m)}\\
y_i +\AD_i - \AD_{(m)} - \rho  & \textrm{if} \ \ y_i < y_{(m)} \ \text{and} \  \AD_i < \AD_{(m)}\\
\end{cases}.
\end{equation*}
We could see that these mappings essentially map value $y_i$ to another value with deviations closely related to $\Delta^{\theta}_{i, \rho}$, $\Delta^{{\median}}_{i, \rho}$ or $\Delta^{\MAD}_{i, \rho}$, where we define
	\begin{align*}
{\Delta}^{\theta}_{i, \rho} &= \Delta^{\theta}_{i} + \rho, \\
{\Delta}^{\median}_{i, \rho} &= \Delta^{\median}_{i} + \rho, \\
{\Delta}^{\MAD}_{i, \rho} &= \Delta^{\MAD}_i + \rho. 
\end{align*}
 Next lemma shows how changing the mean of one single arm changes the output decision of the outlier arm identification problem. 

\begin{lemma}
	\label{lm_change_outlier_set}
	Suppose $D_y = (D_{y_1}, D_{y_2}, \dots, D_{y_n}) \in {\mathcal{M}}_{n, \rho} $, then for any $i \in [n]$ and any $a \in \{\theta, {\median}, \MAD\}$, the subset of outlier arms in the following instance
	\begin{equation*}
	D_{y^\prime} = (D_{y_1}, \dots, D_{y_{i-1}}, D_{\psi^a_{\rho}(y_i)}, D_{y_{i+1}}, \dots, D_{y_n})
	\end{equation*}
	is not $S_o = [n_1]$ anymore.
\end{lemma}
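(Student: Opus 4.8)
The plan is to show that recomputing the median, MAD and threshold for $D_{y^\prime}$ yields a partition of the arms that is no longer $[n_1]$. I would first record a reduction that isolates what must be checked. Since $D_y \in \mathcal{M}_{n,\rho}$, no arm has mean in $[\theta-\rho/2,\theta+\rho/2]$, while $l_1,l_2$ lie in $(\theta-\rho,\theta-\rho/2)$ and $u_1,u_2$ lie in $(\theta+\rho/2,\theta+\rho)$. Because a single move leaves at least one $l$-arm and one $u$-arm untouched, keeping the outlier set equal to $[n_1]$ forces both that arm $i$ ends on its original side of the new threshold $\theta^\prime$ and that $\theta^\prime$ separates the unmoved arms as before, which (given the empty zone and the positions of $l_1,l_2,u_1,u_2$) forces $|\theta^\prime-\theta|<\rho$. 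So it suffices, in every case, to establish \emph{either} that arm $i$ ends on the side of $\theta^\prime$ opposite to where it started, \emph{or} that $\theta^\prime\le\theta-\rho$ or $\theta^\prime\ge\theta+\rho$, since either inequality carries $\theta^\prime$ across an unmoved boundary arm. The presence of \emph{two} arms per side is exactly what lets me conclude a flip even when the moved arm $i$ coincides with one boundary arm.

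Two preliminary facts drive the calculation, both from $\rho<\eta$: the gaps $y_{(m-1)}-y_{(m)}$, $y_{(m)}-y_{(m+1)}$, $\AD_{(m-1)}-\AD_{(m)}$, $\AD_{(m)}-\AD_{(m+1)}$ all exceed $2\rho$, and $\AD_{(m)}>2\rho$ (since $\eta\le\tfrac12(\AD_{(m)}-\AD_{(m+1)})\le\tfrac12\AD_{(m)}$). With these I would dispatch the rigid branches first. For $\psi^{\MAD}_\rho$ applied to an arm $i$ strictly on one side of $y_{(m)}$, each branch is built so the new mean stays on that same side (so the median is unchanged, using $\AD_{(m)}>\rho$) while the new absolute deviation equals $\AD_{(m)}\pm\rho$, which the $2\rho$-gaps place strictly between $\AD_{(m)}$ and its neighbouring order statistic. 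Thus exactly one arm crosses the $\AD_{(m)}$-level, the recomputed MAD becomes $\AD_{(m)}\pm\rho$, and $\theta^\prime=\theta\pm k\rho$ with $|\theta^\prime-\theta|=k\rho\ge2\rho>\rho$, so a boundary pair flips. The branches of $\psi^{\theta}_\rho$ that keep arm $i$ above the median are handled the same way: the median is unchanged and either arm $i$ crosses the (unchanged) threshold $\theta$, or its deviation jumps above the $\AD_{(m)}$-level and raises $\theta^\prime$ past the $u$-arms.

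The crux, and the step I expect to be the main obstacle, is $\psi^{\median}_\rho$ together with the branch of $\psi^{\theta}_\rho$ that drags a sub-median arm across $y_{(m)}$. Moving arm $i$ across the median shifts $y_{(m)}$ (by $\rho$ for $\psi^{\median}_\rho$, or by a full gap up to $y_{(m-1)}$ for the $\psi^{\theta}_\rho$ branch), and re-referencing \emph{every} absolute deviation to the new median changes the MAD in a configuration-dependent way, so $\theta^\prime$ can move by much more than $\rho$ and in either direction. I would split on the original status of arm $i$: if it was an outlier, $\psi^{\median}_\rho$ places it exactly at the new median, hence strictly below $\theta^\prime=y^\prime_{(m)}+k\AD^\prime_{(m)}$, so it flips to normal and the set loses it; if it was normal, I must show the coupled median-and-MAD displacement always carries $\theta^\prime$ past $\theta-\rho$ or $\theta+\rho$. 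The difficulty is that the MAD shift can partly cancel or reinforce the median shift, depending on whether the arm realizing the new MAD sits above or below $y_{(m)}$. I expect to settle this by a counting argument, tracking how many arms have deviation above the new median-deviation level and invoking the $2\rho$-separation of the order statistics to pin $\theta^\prime$ to a neighbouring order statistic, thereby certifying $|\theta^\prime-\theta|\ge\rho$ in the required direction.
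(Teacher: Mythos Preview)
Your reduction is correct and coincides with the paper's implicit strategy: show either that arm $i$ lands on the wrong side of $\theta'$ or that $|\theta'-\theta|\ge\rho$, so that an unmoved boundary arm flips. Your treatment of $\psi^{\MAD}_\rho$ for $y_i\neq y_{(m)}$ also matches the paper. But two places diverge from the paper, and one of them is a genuine gap.

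\textbf{The $\psi^\theta_\rho$ case is overcomplicated.} You try to track whether arm $i$ stays above the median and whether its deviation crosses $\AD_{(m)}$, and you single out the sub-median branch as a hard case. None of this is needed. The paper disposes of $a=\theta$ in one line: if $|\theta'-\theta|<\rho$ then arm $i$'s new mean $\theta\mp\rho$ lies on the wrong side of $\theta'$ (since $\theta-\rho<\theta'<\theta+\rho$), so arm $i$ itself flips; otherwise $|\theta'-\theta|\ge\rho$ and a boundary arm flips. No median or MAD tracking is required, and the ``sub-median drag'' branch you flagged as a crux evaporates.

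\textbf{The $\psi^{\median}_\rho$ case is left incomplete, and this is the real gap.} You correctly note that an outlier arm $i$ becomes the new median and flips, but for normal $i$ you only gesture at a counting argument. The paper's missing ingredient is short and concrete: after arm $i$ lands at $y_{(m)}\pm\rho$ and becomes the new (unique) median, \emph{every other arm's absolute deviation changes by exactly $\rho$} (some up, some down). The $2\rho$ gaps around $\AD_{(m)}$ then pin down the new MAD: if $\AD_i<\AD_{(m)}$ the same arm still realises the MAD, so $|\AD'_{(m)}-\AD_{(m)}|=\rho$; if $\AD_i\ge\AD_{(m)}$ the MAD drops to some arm with $\AD_p\le\AD_{(m+1)}$, giving $\AD'_{(m)}\le\AD_{(m+1)}+\rho<\AD_{(m)}-\rho$. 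Either way $|\AD'_{(m)}-\AD_{(m)}|\ge\rho$, and since the median moved by exactly $\rho$ and $k\ge2$, one gets $|\theta'-\theta|\ge(k-1)\rho\ge\rho$. This also closes the $\psi^{\MAD}_\rho$ case at $y_i=y_{(m)}$, which reduces to the median case and which your proposal did not cover.
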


\begin{proof}
	
	We prove this lemma by showing that it holds for all three cases; and we use $\theta^\prime$, $y^\prime_{(m)}$ and $\AD_{(m)}^\prime$ to represent, respectively, the outlier threshold, the \text{median} and the \text{median} absolute deviation with respect to (expected rewards of) bandit instance $D_{y^\prime}$. 
	
	\textbf{Case 1: } $a = \theta$.
	If $|\theta^\prime - \theta| < \rho$, then arm $i$ is removed or added to $S_o$ in instance $D_{y^\prime}$; otherwise, at least one of $\{u_1, u_2, l_1, l_2\}$ is removed or added to $S_o$.
	
	\textbf{Case 2: } $a = {\median}$.
	According to definition of $\eta$ and the fact $\rho < \eta$, arm $i$ becomes the unique \text{median} arm after mapping $y_i$ to $\psi^{\median}_{\rho}(y_i)$; we thus have $|y^\prime_{(m)} - y_{(m)}| = \rho$ and $\AD_i^\prime = 0$. Furthermore, we have $|\AD^\prime_j - \AD_j| = \rho$ for all $j \in [n]\backslash{i}$ as the median value is changed by $\rho$ and $\min \{ y_{(m)} - y_{(m+1)}, y_{(m-1)} - y_{(m)} \}\geq 2 \eta > 2\rho$ by Definition \ref{def_minimax_lower_bound}.

	If $\AD_i < \AD_{(m)}$, since $\min \{ \AD_{(m)} - \AD_{(m+1)}, \AD_{(m-1)} - \AD_{(m)} \}\geq 2 \eta > 2\rho$ by Definition \ref{def_minimax_lower_bound}, we know that the arm associated with the MAD value in $D_y$ is still associated with the MAD value in $D_{y^\prime}$. We thus have $|\AD_{(m)} - \AD^\prime_{(m)}| = \rho$. Since $k \geq 2$ by definition, we have $|\theta^\prime - \theta| \geq \rho$, resulting in at least one of $\{u_1, u_2, l_1, l_2\}$ being removed or added to $S_o$.
	
	If $\AD_i \geq \AD_{(m)}$, we know there exists an arm $p$ such that $\AD_{p} \leq \AD_{(m+1)}$ is now associated with $\AD_{(m)}^\prime$ in $D_{y^\prime}$ (since $\AD_i^\prime$ becomes $0$ in $D_{y^\prime}$). Since $\min \{ \AD_{(m)} - \AD_{(m+1)} \}\geq 2 \eta> 2\rho$ by Definition \ref{def_minimax_lower_bound}, we thus have $|\AD_{(m)} - \AD^\prime_{(m)}| \geq \rho$ as $\AD^\prime_{(m)}=\AD^\prime_p \leq \AD_p + \rho \leq \AD_{(m+1)} + \rho$. We then have $|\theta^\prime - \theta| \geq \rho$ as $k \geq 2$ by definition; this further results in at least one of $\{u_1, u_2, l_1, l_2\}$ being removed or added to $S_o$.

	\textbf{Case 3: } $a = \MAD$.
	If $y_i = y_{(m)}$, same analysis appears in Case 2 applies here and leads to at least one of $\{u_1, u_2, l_1, l_2\}$ will be removed or added to $S_o$.
	
	If $y_i \neq y_{(m)}$, we have $y^\prime_{(m)} = y_{(m)}$ (notice that $\AD_{(m)} \geq 2 \eta > 2\rho$ by assumption) and $|\AD^\prime_{(m)} - \AD_{(m)}| = \rho$ by the construction of ${\psi}^{\MAD}_{\rho} (y_i)$, which results in $|\theta^\prime - \theta| \geq \rho$ and thus at least one of $\{u_1, u_2, l_1, l_2\}$ being removed or added to $S_o$.
\end{proof}

Now we restated Theorem \ref{thm_lower_bound_worst_case} and provide the proof.

\lowerBound*

\begin{proof}
	For any $i \in [n]$ and any $a \in \{\theta, {\median}, \MAD\}$, we construct 
	\begin{equation*}
		D_{y^\prime} = (D_{y_1}, \dots, D_{y_{i-1}}, D_{\psi^a_{\rho}(y_i)}, D_{y_{i+1}}, \dots, D_{y_n}).
	\end{equation*}
	We also define the event $A = \{ \hat{S}_o = [n_1] \}$, which is measurable with respect to $\mathcal{F}_{\tau}$. For any $\delta$-PAC algorithm, according to its definition and Lemma \ref{lm_change_outlier_set}, we have $\mathbb{P}_y(A)\geq 1 - \delta$ and $\mathbb{P}_{y^{\prime}}(A) \leq \delta$. Thus, according to Lemma \ref{lm_change_measure}, we have
	
	\begin{equation}
	\label{eq:lower_bound_1}
	\mathbb{E}_y[N_{i, \tau}] \cdot {\KL}(D_{y_i}, D_{y_i^{\prime}}) \geq d(1- \delta, \delta) \geq \log \left( \frac{1}{2.4 \delta}  \right), 
	\end{equation}
	where we use the property that for $x \in [0,1]$, $d(x, 1-x) \geq \log \left(\frac{1}{2.4 \delta}\right)$ for the last inequality. \cref{eq:lower_bound_1} further gives us 
	\begin{equation}
		\label{eq:lower_bound_2}
	\mathbb{E}_y[N_{i, \tau}] \geq \frac{1}{ {\KL}\left(D_{y_i}, D_{\psi^a_\rho(y_i)} \right)  } \log \left(\frac{1}{2.4 \delta}\right).
	\end{equation}
	Combining \cref{eq:lower_bound_2} with ${\KL}(D_{y_i}, D_{\psi^a_\rho(y_i)} )  = {2}/{\left( \Delta^a_{i, \rho} \right)^2}$ for $a \in \{\theta, \median, \MAD\}$ and $\Delta^{*}_{i, \rho} = \min \{ {\Delta}^{\theta}_{i, \rho}, {\Delta}^{\median}_{i, \rho} , {\Delta}^{\MAD}_{i, \rho} \} \geq \rho$, we have
	\begin{equation}
	\label{eq:lower_bound_arm_i}
	\mathbb{E}_y[N_{i, \tau}] \geq \frac{2}{\left({\Delta}_{i, \rho}^{*}\right)^2} \log\left(\frac{1}{2.4 \delta}\right)
.	\end{equation}
	
	For any arm $i$ such that $\min \{ \Delta_i^{\theta}, \Delta_i^{\median}, \Delta_i^{\MAD} \} \neq 0 $, we have $\min \{ \Delta_i^{\theta}, \Delta_i^{\median}, \Delta_i^{\MAD} \} \geq \rho/2$ according to the construction of ${\cal M}_{n, \rho}$. Thus, ${2}/{\left({\Delta}_{i, \rho}^{*}\right)^2} = {2}/{\left( \min \{ \Delta_i^{\theta}, \Delta_i^{\median}, \Delta_i^{\MAD} \}+ \rho \right)^2} \geq {2}/{\left( 3 \min \{ \Delta_i^{\theta}, \Delta_i^{\median}, \Delta_i^{\MAD} \} \right)^2}  \geq {2}/{\left( 3 \Delta^*_i \right)^2} $.
	
	For any arm $i$ such that $\min \{ \Delta_i^{\theta}, \Delta_i^{\median}, \Delta_i^{\MAD} \} = 0$, we have $\Delta^{\theta}_* \geq \rho/2$ according to the construction of ${\cal M}_{n, \rho}$. Thus, ${2}/{\left({\Delta}_{i, \rho}^{*}\right)^2} = 2/ \rho^2 \geq 2 / {\left( 2 \Delta^{\theta}_* \right)^2} = 2 / {\left( 2 \Delta^*_i \right)^2}$.

	Combining \cref{eq:lower_bound_arm_i} with the above analysis, we have 
		\begin{equation*}
	\mathbb{E}_y[N_{i, \tau}] \geq \frac{1}{5 \left({\Delta}_{i}^{*}\right)^2} \log\left(\frac{1}{2.4 \delta}\right).
	\end{equation*}
	Summing over all $i \in [n]$ yields the desired bound in \cref{thm_lower_bound_worst_case}.
\end{proof}

\section{Heuristic to Reduce Sample Complexity: Proof of \cref{thm_subsampling}}
\label{appendix_heuristic}

\subSampling*

\begin{proof}
The subsampling algorithm is implemented as in Algorithm \ref{algorithm_ROAILucb}, with some notational changes to adapt to the subset $\Omega$, as described here. We still assume the total number of arms is $n$, but set $|\Omega| = 2m-1$.\footnote{Recall we simply choose the median as $m$ if $|\Omega| = 2m$.} $y_{(m)}$, $\AD_i$, $\AD_{(m)}$ and $\theta$ are all calculated with respect to arms in $\Omega$. We use the notation $J_{\kappa_i, t}$ to denote $\kappa_i$ arms \emph{in $\Omega$} with the largest empirical means $\{\hat{y}_{i, t}\}$, and $J^{\AD}_{\kappa_i, t}$ to denote the $\kappa_i$ arms \emph{in $\Omega$} with the largest empirical absolute deviations $\{\widehat{\AD}_{i, t}\}$. Since we are mainly interested in shrinking confidence intervals around the median, we set $\kappa_1 = m-1$ and $\kappa_2 = m$. 

The $\NEEDY^{\theta}_{i, t}$ event remains the same for all arms in $[n]$; however, we will have $\NEEDY^{\median}_{i, t}$ and $\NEEDY^{\MAD}_{i, t}$ events only for arms in $\Omega$ as arms outside $\Omega$ are not involved in the construction of the outlier threshold. Eq. \eqref{eq_ROAILucb_constant} and lemmas in \cref{appendix_ROAILucb_median} and \cref{appendix_ROAILucb_MAD} can be adapted to the subset $\Omega$. Since Algorithm \ref{algorithm_ROAILucb} pulls a constant number of arms each round, we only need to upper bound the total number of rounds up to time $T$. Similar to the analysis in the proof of \cref{thm_complexity_Lcub}, we have 

\begingroup
\allowdisplaybreaks
\begin{align*}
\# \text{rounds} (T)  = & \sum_{t=1}^{T} \mathds{1} \left[ \text{NT}_t \right] \\
= & \sum_{t=1}^{T} \mathds{1} \left[ \left( \text{NT}_t \cap \neg \NEEDY^{\theta}_{\theta, t}\right) \cup \left( \text{NT}_t \cap \NEEDY^{\theta}_{\theta, t} \right) \right] \\
= & \sum_{t=1}^{T} \mathds{1} \biggl[ \left( \NEEDY^{\theta}_{l_{\theta, t}} \cup \NEEDY^{\theta}_{u_{\theta, t}}  \right) \bigcup \Bigl( C_t \cap A_t \Bigr) \bigcup \Bigl( C_t \cap \neg A_t  \cap B_t\Bigr) \bigcup  \Bigl( C_t \cap \neg A_t  \cap \neg B_t\Bigr)\biggr] \\
\leq & \sum_{t=1}^{T} \mathds{1} \biggl[ \left( \NEEDY^{\theta}_{l_{\theta, t}} \cup \NEEDY^{\theta}_{u_{\theta, t}}  \right) \bigcup \Bigl(A_t \Bigr) \bigcup \Bigl(  \neg A_t  \cap B_t\Bigr) \biggr] \\
\leq & \sum_{t=1}^{T} \mathds{1} \biggl[ \bigcup_{a \in \Omega} \biggl( \Bigl( \left( a = l_{\theta, t} \vee u_{\theta, t} \right) \cap \NEEDY^{\theta}_{a, t} \Bigr) \bigcup \Bigl(\left( a = l_{i, t} \vee u_{i, t} \right) \cap \NEEDY^{{\median}}_{a, t} \Bigr)  \\
& \bigcup \Bigl(  \neg A_t  \cap \left( a = l^{\AD}_{i, t} \vee u^{\AD}_{i, t} \right) \cap \NEEDY^{\MAD}_{a, t}\Bigr) \biggr)   \bigcup  \biggl( \bigcup_{i \notin \Omega} \Bigl( \left( a = l_{\theta, t} \vee u_{\theta, t} \right) \cap \NEEDY^{\theta}_{a, t} \Bigr)\biggr)\biggr] \\
\leq & \sum_{t=1}^{T} \sum_{a \in \Omega} \mathds{1} \biggl[ \Bigl( \left( a = l_{\theta, t} \vee u_{\theta, t} \right) \cap \NEEDY^{\theta}_{a, t} \Bigr) \bigcup \Bigl(\left( a = l_{i, t} \vee u_{i, t}  \right) \cap \NEEDY^{{\median}}_{a, t} \Bigr)  \\
& \bigcup \Bigl(  \neg A_t  \cap \left( a = l^{\AD}_{i, t} \vee u^{\AD}_{i, t} \right) \cap \NEEDY^{\MAD}_{a, t}\Bigr) \biggr] + \sum_{t=1}^{T} \sum_{a \notin \Omega} \mathds{1} \biggl[ \Bigl( \left( a = l_{\theta, t} \vee u_{\theta, t} \right) \cap \NEEDY^{\theta}_{a, t} \Bigr) \biggr]  \\
\leq & \sum_{t=1}^{T} \sum_{a \in \Omega} \mathds{1} \biggl[ \Bigl( \left( a = l_{\theta, t} \vee u_{\theta, t} \right) \cap N_{a,t} \leq \lambda_a^{\theta} \Bigr) \bigcup \Bigl(\left( a = l_{i, t} \vee u_{i, t}  \right) \cap N_{a,t} \leq \lambda_a^{{\median}} \Bigr)   \\
& \bigcup \Bigl(  \neg A_t  \cap \left( a = l^{\AD}_{i, t} \vee u^{\AD}_{i, t} \right) \cap N_{a,t} \leq \lambda_a^{\MAD}\Bigr) \biggr] + \sum_{t=1}^{T} \sum_{a \notin \Omega} \mathds{1} \biggl[ \Bigl( \left( a = l_{\theta, t} \vee u_{\theta, t} \right) \cap N_{a,t} \leq \lambda_a^{\theta} \Bigr) \biggr]  \\
\leq &  \sum_{a \in \Omega}  \sum_{t=1}^{T} \mathds{1} \biggr[ \Bigr( \left( a = l_{\theta, t} \vee u_{\theta, t} \right) \cap N_{a,t} \leq \lambda_a^{\theta} \Bigl) \bigcup \Bigl(\left( a = l_{i, t} \vee u_{i, t}  \right) \cap N_{a,t} \leq \lambda_a^{{\median}} \Bigr)  \\
& \bigcup \Bigl(   \left( a = l^{\AD}_{i, t} \vee u^{\AD}_{i, t} \right) \cap N_{a,t} \leq \lambda_a^{\MAD}\Bigr) \biggl] + \sum_{a \notin \Omega}  \sum_{t=1}^{T} \mathds{1} \biggl[ \Bigl( \left( a = l_{\theta, t} \vee u_{\theta, t} \right) \cap N_{a,t} \leq \lambda_a^{\theta} \Bigr) \biggr] \\
\leq & \sum_{a \in \Omega} \max_{j \in \{\theta, {\median}, \MAD\}} \{ \lambda_a^j \} + \sum_{a \notin \Omega}   \lambda_a^\theta .
\end{align*}
\endgroup
The rest of the proof is the similar to that in \cref{thm_complexity_Lcub}.
\end{proof}

\end{document}